\documentclass[10pt,letterpaper]{article}

\usepackage{fullpage} 

\usepackage{authblk}

\usepackage[utf8]{inputenc}
\usepackage[T1]{fontenc}
\usepackage{lmodern}
\usepackage{microtype}

\usepackage{amsmath,amssymb,amsfonts,amsthm,mathtools}
\usepackage{bm}
\PassOptionsToPackage{numbers,sort&compress}{natbib}
\usepackage{natbib}

\usepackage{booktabs}
\usepackage{graphicx}
\usepackage{subcaption}
\usepackage{algorithm}
\usepackage[noend]{algpseudocode}

\usepackage{enumitem}
\usepackage{xcolor}

\usepackage{url}
\usepackage{hyperref}

\newcommand{\R}{\mathbb{R}}
\newcommand{\E}{\mathbb{E}}
\newcommand{\Var}{\mathrm{Var}}
\newcommand{\Cov}{\mathrm{Cov}}

\newcommand{\Pcal}{\mathcal{P}}

\newcommand{\Hv}{\mathcal{H}_Y}
\newcommand{\Xcal}{\mathcal{X}}
\newcommand{\Ycal}{\mathcal{Y}}
\newcommand{\Zcal}{\mathcal{Z}}

\newcommand{\1}{\mathbf{1}}

\newcommand{\dto}{\overset{d}{\to}}
\newcommand{\iid}{\stackrel{\mathrm{i.i.d.}}{\sim}}

\newcommand{\tr}{\mathrm{tr}}
\newcommand{\te}{\mathrm{te}}
\newcommand{\dr}{\mathrm{dr}}

\newcommand{\op}{\mathrm{op}}

\DeclareMathOperator*{\argmax}{arg\,max}

\newtheorem{theorem}{Theorem}[section]
\newtheorem{proposition}[theorem]{Proposition}
\newtheorem{corollary}[theorem]{Corollary}
\newtheorem{lemma}[theorem]{Lemma}

\theoremstyle{definition}

\newtheorem{remark}[theorem]{Remark}
\newtheorem{assumption}[theorem]{Assumption}

\begin{document}

\title{Semiparametric Efficient Test for Interpretable
Distributional Treatment Effects}
\author[1]{Houssam Zenati}
\author[1]{Arthur Gretton}
\affil[1]{Gatsby Computational Neuroscience Unit, University College London}
\date{}
\newcommand{\codeurl}{\url{https://github.com/houssamzenati/interpretable-dr-me-test}}

\maketitle

\begin{abstract}
Distributional treatment effects can be invisible to means: a treatment may
preserve average outcomes while changing tails, modes, dispersion, or rare-event
probabilities. Kernel tests can detect discrepancies between interventional
outcome laws, but global tests do not reveal where the laws differ. We propose
DR-ME, to our knowledge the first semiparametrically efficient finite-location
test for interpretable distributional treatment effects. DR-ME evaluates an
interventional kernel witness at learned outcome locations, returning
causal-discrepancy coordinates rather than only a global rejection. From
observational data, we derive orthogonal doubly robust kernel features whose
centered oracle form is the canonical gradient of this finite witness. For fixed
locations, we characterize the local testing limit: DR-ME is chi-square
calibrated under the null, has noncentral chi-square local power, and uses the
covariance whitening that optimizes local signal-to-noise for discrepancies
visible through the selected coordinates. This efficient local-power geometry yields a
principled location-learning criterion, with sample splitting preserving
post-selection validity. Experiments show near-nominal type-I error, competitive power against global
doubly robust kernel tests, and interpretable learned locations that localize distributional
effects in a semi-synthetic medical-imaging study.
\end{abstract}

\section{Introduction}

Average treatment effects are often too coarse for causal questions involving risk, heterogeneity, or structured outcomes. A treatment may leave the mean outcome nearly unchanged while altering dispersion, tails, multimodality, or the probability of rare but consequential events. This is the motivation behind distributional treatment-effect analysis, including nonparametric policy effects \citep{rothe2010nonparametric} and inference on interventional distributions \citep{chernozhukov2013inference}. The issue is even sharper when outcomes are images, sequences, graphs, embeddings, or high-dimensional measurements \citep{gartner2003survey}: reducing such outcomes to a small number of hand-chosen scalar summaries can obscure the effect of interest. We therefore study tests of interventional outcome distributions, rather than tests that only address average effects.

Kernel mean embeddings provide a natural language for this task. They represent probability laws as elements of a reproducing kernel Hilbert space \citep{smola2007hilbert}, so that distributional discrepancies can be studied without first specifying a scalar outcome summary. With characteristic kernels, equality of embeddings is equivalent to equality of probability laws \citep{sriperumbudur2011universality}. The associated RKHS distance is the maximum mean discrepancy, a central tool in kernel two-sample testing \citep{gretton2012kernel}. In causal problems, counterfactual mean embeddings extend this representation to potential-outcome laws \citep{muandet2021counterfactual}, and counterfactual policy mean embeddings extend it to policy-induced outcome distributions with doubly robust estimation \citep{zenati2025cpme}.

Global kernel discrepancies are powerful omnibus tools, but they are often hard to interpret. A global rejection says that two interventional outcome laws differ, but not where the difference is expressed. Recent doubly robust kernel tests make global RKHS discrepancies available for observational causal studies \citep{martinez2023,fawkes2024doubly, zenati2025kteadaptive}, but their evidence remains fundamentally global. In ordinary two-sample testing, finite-location mean-embedding tests address this limitation by evaluating a witness function at a small number of spatial or frequency locations \citep{Chwialkowski2015,jitkrittum2016interpretable}. These locations act as interpretable distributional features: they indicate where two laws are most distinguishable after accounting for sampling variability. In observational causal problems, however, the corresponding oracle contrast involves both potential outcomes, so naive treated-versus-control finite-location features are confounded and invalid.

Our work is also related to semiparametric tests beyond scalar parameters. Restricted score tests target function-valued risk minimizers by testing risk
derivatives over restricted direction classes \citep{hudson2021restricted}. Global MMD tests for unknown functions compare distributions of estimated functions and rely on higher-order pathwise differentiability \citep{luedtke2019omnibus}. Hilbert-valued one-step theory gives efficient inference for RKHS-valued parameters, including counterfactual mean embeddings \citep{luedtke2024one}. These works address different inferential objects: restricted score tests, global MMD tests, or Hilbert-valued confidence sets. Here we target a finite-location projection of an interventional kernel witness. This finite projection gives interpretable outcome-space coordinates, admits a first-order canonical gradient, and leads to an efficient finite-dimensional local testing geometry that can be used to learn where to test.

We develop DR-ME, a semiparametrically efficient finite-location test for causal
distributional testing. The locations are interpretable causal-discrepancy coordinates: each asks
whether the interventional laws differ through their kernel similarity to a selected outcome point. The challenge is that these coordinates are features of
interventional laws, not of observed treated and control samples. Under
selection on observables and positivity
\citep{rubin1974estimating,imbens2015causal,rosenbaum1983central,hernanrobins2020causal},
they are identified through propensity scores and outcome regressions, but
identification alone is not enough for nuisance-robust inference. We therefore
derive an orthogonal augmented inverse-propensity feature. For fixed locations,
it has the usual doubly robust structure
\citep{robins1994estimation,hahn1998role,bang2005doubly}; with cross-fitting,
its empirical mean has a first-order expansion whose leading term is the
canonical gradient of the finite-location signal, while nuisance errors enter
through second-order remainders \citep{chernozhukov2018double}.

One of our main contributions is a semiparametric local-efficiency theory for
this finite-location causal test. For fixed locations, we study
quadratic-mean-differentiable local submodels through the finite-location null,
following the local asymptotic testing perspective of
\citet{neyman1933mostefficient}, \citet{lecam2000asymptotics}, and
\citet{vandervaart1998asymptotic}. The canonical-gradient statistic attains the
efficient finite-signal Gaussian experiment. Consequently, the Hotelling
statistic \citep{hotelling1931generalization} has a chi-square null limit and a
noncentral chi-square local limit, with power governed by the interventional
drift whitened by the canonical-gradient covariance. This whitening is not an
accidental consequence of plugging a doubly robust score into a conventional
Hotelling statistic; it is the covariance geometry of the efficient finite-signal
experiment.

This geometry also determines how locations should be learned. The relevant
objective is not the raw witness magnitude, but its magnitude after whitening by
the canonical-gradient covariance. We therefore learn locations by maximizing a
ridge-stabilized empirical local-power criterion. Locations are selected on an
auxiliary split and tested on an independent final split; conditional on the
learning split, the selected locations are fixed, so the fixed-location null
theory applies after selection
\citep{fithian2014optimal,kuchibhotla2022postselection}. Experiments on synthetic
and semi-synthetic structured outcomes show calibrated type-I error, competitive
power relative to global doubly robust kernel tests
\citep{martinez2023,fawkes2024doubly,zenati2025cpme}, and learned locations that
localize distributional shifts.

Our contributions are fourfold. First, we formulate interpretable finite-location
testing for distributional treatment effects by evaluating the interventional
RKHS witness at selected outcome locations. Second, we derive the canonical
gradient of this finite signal; its augmented inverse-propensity form yields an
orthogonal doubly robust feature, whose cross-fitted version defines the DR-ME
Hotelling statistic. We show that DR-ME attains the efficient finite-signal
Gaussian experiment, giving chi-square calibration, noncentral local power, and
canonical-gradient covariance whitening. Third, we use this geometry to learn
locations and prove uniform consistency of the empirical criterion. Fourth, we
validate calibration, power, and covariance-whitened location learning in
simulations, and use a semi-synthetic medical-imaging study to show that learned
locations can localize distributional causal effects nearly invisible to mean
contrasts.

Section~\ref{sec:finite_scores} defines the finite-location causal witness and derives the doubly robust observed-data feature whose centered oracle form is the canonical gradient. Section~\ref{sec:theory} develops the fixed-location local testing theory, including chi-square calibration, noncentral chi-square local power, and the efficient covariance geometry underlying the Hotelling statistic. Section~\ref{sec:learning} uses this geometry to construct the location-learning criterion and proves uniform consistency of its empirical version under sample splitting. Section~\ref{sec:experiments} evaluates calibration, power, covariance-whitened location learning, and image-space interpretability. 

\section{Finite-location causal witnesses and observed-data scores}
\label{sec:finite_scores}

We use the potential-outcomes notation \citep{rubin1974estimating,imbens2015causal}. Let \(A\in\{0,1\}\) be a binary treatment, let \(Y(0),Y(1)\in\Ycal\) be potential outcomes, and let \(k_Y\) be a bounded positive definite kernel on \(\Ycal\), with RKHS \(\Hv\) and feature map \(\varphi_Y(y)=k_Y(\cdot,y)\). For each arm \(a\), define the interventional mean embedding
\(\chi(a):=\E[\varphi_Y(Y(a))]\), following counterfactual and policy mean embedding constructions \citep{muandet2021counterfactual,zenati2025cpme}. The interventional embedding difference is \(\Delta:=\chi(1)-\chi(0)\), with witness function
\begin{equation} \label{eq:witness}
 w(y)
:=
\langle \Delta,\varphi_Y(y)\rangle_{\Hv}
=
\E[k_Y(y,Y(1))]-\E[k_Y(y,Y(0))].   
\end{equation}
Thus \(w(y)\) is the kernel discrepancy between the two interventional outcome laws near location \(y\). Moreover, \(\|\Delta\|_{\Hv}^2\) is the squared maximum mean discrepancy between \(P_{Y(1)}\) and \(P_{Y(0)}\) \citep{gretton2012kernel}; if \(k_Y\) is characteristic, then \(\Delta=0\) is equivalent to equality of the two interventional outcome laws \citep{sriperumbudur2011universality}.

To obtain an interpretable finite signal, fix \(J\ge1\) outcome locations \(V=(v_1,\ldots,v_J)\in\Ycal^J\), and write \(k_V(y):=(k_Y(v_1,y),\ldots,k_Y(v_J,y))^\top\in\R^J\). The finite-location causal witness is
\begin{equation} \label{eq:causal_witness}
 \mu_V
:=
\bigl(w(v_1),\ldots,w(v_J)\bigr)^\top
=
\E[k_V(Y(1))]-\E[k_V(Y(0))]
\in\R^J .   
\end{equation}
This is a finite-dimensional projection of the RKHS discrepancy \(\Delta\), with coordinates indexed by interpretable outcome locations. The global null \(H_0:\Delta=0\) implies the finite-location null \(H_{0,V}:\mu_V=0\) for every \(V\). Conversely, a fixed finite \(V\) need not characterize every global alternative. The procedure is therefore calibrated under the global causal null, with power against alternatives visible through the selected witness coordinates. This follows the finite-location mean-embedding testing principle of \citet{Chwialkowski2015} and \citet{jitkrittum2016interpretable}, but for interventional rather than ordinary two-sample distributions.

If samples from \(P_{Y(1)}\) and \(P_{Y(0)}\) were directly available, the problem would reduce to a \(J\)-dimensional mean test based on the oracle contrasts \(k_V(Y^{(1)})-k_V(Y^{(0)})\). In observational data, these contrasts are unavailable because each unit reveals only one potential outcome. We therefore construct an observed-data pseudo-feature whose mean is \(\mu_V\) and whose centered form is the canonical gradient of this finite-location signal.

\paragraph{Identification.}
Let \(Z=(X,A,Y)\sim P_0\). We assume consistency, \(Y=Y(a)\) when \(A=a\); conditional exchangeability, \(Y(a)\perp A\mid X\); and positivity, \(\pi_0(a\mid X):=P_0(A=a\mid X)\ge \varepsilon>0\) almost surely, for \(a\in\{0,1\}\) \citep{rosenbaum1983central,hernanrobins2020causal}. For fixed \(V\), define
\begin{equation} \label{eq:cme_nuisance}
 m_a(x;V):=\E[k_V(Y)\mid A=a,X=x],
\qquad
\mu_{a,V}:=\E[k_V(Y(a))].   
\end{equation}
Then \(\mu_{a,V}=\E[m_a(X;V)]\), and hence $\mu_V=\E\bigl[m_1(X;V)-m_0(X;V)\bigr]$. Thus \(\mu_V\) is identified by a regression contrast, but the feasible plug-in
contrast is not orthogonal to nuisance error: regression estimation error induces
first-order bias. We therefore use an augmented score whose moment
is locally insensitive to nuisance perturbations, so that first-stage errors enter
only through higher-order remainders \citep{chernozhukov2018double}.

\paragraph{Orthogonal observed-data feature.}
Let \(\eta=(\pi,r_0,r_1)\) be a nuisance tuple, where \(\pi(a\mid x)>0\) is a propensity model and \(r_a(x;V)\in\R^J\) is a candidate regression for \(m_a(x;V)\). Define the arm-specific augmented feature
\[
\phi_V^a(Z;\eta)
:=
\frac{\1\{A=a\}}{\pi(a\mid X)}
\bigl(k_V(Y)-r_a(X;V)\bigr)
+
r_a(X;V),
\qquad a\in\{0,1\},
\]
and the doubly robust contrast \(z_V^{\dr}(Z;\eta):=\phi_V^1(Z;\eta)-\phi_V^0(Z;\eta)\). This is the vector-valued analogue of augmented inverse-propensity scores  \citep{robins1994estimation,hahn1998role,bang2005doubly}.

\begin{proposition}[Doubly robust feature and canonical gradient]
\label{prop:dr_canonical_gradient}
Assume the identification conditions above. For any nuisance tuple \(\eta=(\pi,r_0,r_1)\) with \(\pi(a\mid X)>0\) almost surely,
\[
\E\{\phi_V^a(Z;\eta)\}-\mu_{a,V}
=
\E\left[
\frac{\pi_0(a\mid X)-\pi(a\mid X)}{\pi(a\mid X)}
\{m_a(X;V)-r_a(X;V)\}
\right].
\]
Consequently, \(\E\{\phi_V^a(Z;\eta)\}=\mu_{a,V}\) if either \(\pi(a\mid\cdot)=\pi_0(a\mid\cdot)\) or \(r_a(\cdot;V)=m_a(\cdot;V)\). Let \(\eta_0=(\pi_0,m_0,m_1)\). Then \(\E\{z_V^{\dr}(Z;\eta_0)\}=\mu_V\), and
\[
\psi_V(Z):=z_V^{\dr}(Z;\eta_0)-\mu_V
\]
is the canonical gradient of \(\mu_V\) in the nonparametric model.
\end{proposition}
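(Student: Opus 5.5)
The plan has two parts: a direct conditional-expectation computation for the bias identity, and a pathwise-derivative computation along regular one-dimensional submodels for the canonical gradient.

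\emph{Bias identity.} Fix \(a\) and condition on \(X\). Since \(\1\{A=a\}/\pi(a\mid X)\) is a function of \((A,X)\), the tower property gives
\[
\E\bigl[\1\{A=a\}\,k_V(Y)\mid X\bigr]=\pi_0(a\mid X)\,m_a(X;V).
\]
Hence \(\E\{\phi_V^a(Z;\eta)\mid X\}=\frac{\pi_0(a\mid X)}{\pi(a\mid X)}(m_a-r_a)+r_a\). Write \(r_a=m_a-(m_a-r_a)\) and take outer expectations. This yields
\[
\E\{\phi_V^a(Z;\eta)\}=\E[m_a(X;V)]+\E\Bigl[\frac{\pi_0-\pi}{\pi}(m_a-r_a)\Bigr].
\]
Identification (consistency, exchangeability, positivity) gives \(\E[m_a(X;V)]=\mu_{a,V}\). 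Integrability is automatic: \(k_Y\) is bounded, so \(m_a\) is bounded; I will also need \(r_a\) to be integrable and the ratio to be finite, which I will state as implicit regularity. The double-robustness consequence is immediate, since the product vanishes if either factor is zero. Setting \(\eta=\eta_0\) and differencing the two arms gives \(\E z_V^{\dr}(Z;\eta_0)=\mu_V\).

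\emph{Canonical gradient.} In the nonparametric model, the tangent space is all of \(L_2^0(P_0)\). It therefore suffices to show two things:
\begin{itemize}
\item \(\psi_V\) is mean zero with finite variance. Finite variance holds by boundedness of \(k_V\) and positivity.
\item For every regular submodel \(P_t\) with score \(s\in L_2^0(P_0)\), \(\tfrac{d}{dt}\mu_V(P_t)|_{t=0}=\E[\psi_V(Z)s(Z)]\).
\end{itemize}
A gradient lying in the full tangent space is then the canonical gradient, and the argument works coordinatewise in \(\R^J\).

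For a single arm, write \(\mu_{a,V}(P)=\int m_a^P(x)\,dP_X(x)\). Decompose the score as \(s=s_X(x)+s_{A}(a\mid x)+s_Y(y\mid a,x)\). I will differentiate the two factors:
\begin{itemize}
\item Differentiating the \(P_X\) factor gives \(\E[(m_a(X)-\mu_{a,V})s_X(X)]\).
\item Differentiating \(m_a^{P_t}\) gives \(\E\bigl[\E\{(k_V(Y)-m_a(X))s_Y\mid A=a,X\}\bigr]\). Reweighting, this equals \(\E\bigl[\frac{\1\{A=a\}}{\pi_0(a\mid X)}(k_V(Y)-m_a(X))s_Y\bigr]\).
\end{itemize}
Orthogonality of the score components then lets me replace \(s_X\) and \(s_Y\) by the full score \(s\). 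The reason is that each candidate gradient term is orthogonal to the other components of \(s\). The IPW residual term has conditional mean zero given \((A,X)\), so it kills \(s_X\) and \(s_A\). The term \(m_a(X)-\mu_{a,V}\) is a mean-zero function of \(X\), so it kills \(s_A\) and \(s_Y\). Summing, the gradient for arm \(a\) is \(\phi_V^a(Z;\eta_0)-\mu_{a,V}\). Differencing the arms gives \(\psi_V\).

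The main obstacle is justifying the interchange of derivative and integral along the submodels. I would handle it with the standard bounded-score construction \(dP_t=(1+ts)\,dP_0\), with \(s\) bounded. Such submodels suffice because bounded scores are dense in \(L_2^0(P_0)\), and boundedness of \(k_V\) makes the differentiation elementary.
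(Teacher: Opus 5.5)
Your proposal is correct and follows essentially the same route as the paper. For the bias identity you condition on $X$; the paper does the same algebra through the difference $\phi_V^a(Z;\eta)-\phi_V^a(Z;\eta_0)$ in Lemma~\ref{lem:dr_basic}. For the canonical gradient you split the pathwise derivative into the $P_X$ part and the conditional-outcome part, reweight the latter by $\1\{A=a\}/\pi_0(a\mid X)$, and extend to the full score by orthogonality, which is also what the paper does. Your bounded-score submodels $(1+ts)\,dP_0$ and the explicit orthogonal decomposition of $s$ supply the justification for exchanging derivative and integral, which the paper leaves as ``standard conditional-score calculus.''
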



For the full RKHS-valued counterfactual mean embedding, the corresponding Hilbert-valued pathwise differentiability and efficient influence function are established by \citet{luedtke2024one}; here we use the finite-dimensional projection induced by fixed locations. The proof is given in Appendix~\ref{app:finite_scores_proofs}. Proposition~\ref{prop:dr_canonical_gradient} has two roles. First, its bias identity gives the orthogonality mechanism: the target-moment error is a product of propensity and outcome-regression errors. Second, the centered augmented contrast is the canonical gradient of the finite witness, which is the link to local testing efficiency. The statistic below is built from this canonical-gradient representation.
\paragraph{DR-ME statistic.}
Let \(Z_1,\ldots,Z_n\iid P_0\). For each \(i\), let \(\hat\eta_{-i}\) be a nuisance estimate trained on data independent of \(Z_i\); this notation covers both \(K\)-fold cross-fitting and the independent nuisance split used after location learning. Define
\[
\hat z_{i,V}^{\dr}:=z_V^{\dr}(Z_i;\hat\eta_{-i}),
\qquad
\bar z_{n,V}^{\dr}:=\frac1n\sum_{i=1}^n \hat z_{i,V}^{\dr},
\qquad
S_{n,V}^{\dr}
:=
\frac{1}{n-1}\sum_{i=1}^n
(\hat z_{i,V}^{\dr}-\bar z_{n,V}^{\dr})
(\hat z_{i,V}^{\dr}-\bar z_{n,V}^{\dr})^\top .
\]
The fixed-location doubly robust mean-embedding statistic, abbreviated DR-ME, is
\begin{equation}
  \hat\lambda_{n,V}^{\dr}
:=
n\,\bar z_{n,V}^{\dr\top}
(S_{n,V}^{\dr}+\gamma_n I_J)^{-1}
\bar z_{n,V}^{\dr},
\qquad
\gamma_n\downarrow0.  
\end{equation}
This is the observed-data analogue of the finite-location Hotelling statistic.

For the following result, assume cross-fitted or independent-split nuisances, estimated propensities bounded away from zero, and fitted regressions uniformly bounded with probability tending to one. In \(K\)-fold notation, set \(\alpha_{a,k}:=\|\hat\pi_k(a\mid\cdot)-\pi_0(a\mid\cdot)\|_{L_2(P_X)}\) and \(\beta_{a,k}(V):=\|\hat m_{a,k}(\cdot;V)-m_a(\cdot;V)\|_{L_2(P_X;\R^J)}\). We assume \(\sum_{a=0}^1\{\alpha_{a,k}+\beta_{a,k}(V)\}=o_p(1)\) and \(\sum_{a=0}^1\alpha_{a,k}\beta_{a,k}(V)=o_p(n^{-1/2})\). These are the standard consistency and product-rate conditions for orthogonal cross-fitted inference \citep{chernozhukov2018double,kennedy2022semiparametric}.

\begin{theorem}[Fixed-location first-order representation]
\label{thm:first_order}
Fix \(V\in\Ycal^J\), and let \(\Sigma_V:=\Var_{P_0}\{\psi_V(Z)\}\). Under the bounded-kernel, identification, and nuisance conditions above,
\[
\sqrt n\bigl(\bar z_{n,V}^{\dr}-\mu_V\bigr)
=
\frac1{\sqrt n}\sum_{i=1}^n \psi_V(Z_i)+o_p(1),
\qquad
S_{n,V}^{\dr}\to_p\Sigma_V.
\]
If, in addition, \(\Sigma_V\) is positive definite and \(\gamma_n\downarrow0\), then
\((S_{n,V}^{\dr}+\gamma_n I_J)^{-1}\to_p\Sigma_V^{-1}\).
\end{theorem}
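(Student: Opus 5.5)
We follow the standard cross-fitting argument of \citet{chernozhukov2018double}, applied coordinatewise to the \(J\)-vector score. Fix a fold \(k\) with index set \(I_k\) of size \(n_k\asymp n/K\), and write \(\hat\eta_k\) for the nuisance fitted on the complement. Let \(P_{n,k}\) be the empirical measure on fold \(k\).

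\textbf{Decomposition.} We decompose
\[
P_{n,k} z_V^{\dr}(\cdot;\hat\eta_k)-\mu_V
=(P_{n,k}-P_0)\psi_V
+(P_{n,k}-P_0)\{z_V^{\dr}(\cdot;\hat\eta_k)-z_V^{\dr}(\cdot;\eta_0)\}
+\bigl[P_0 z_V^{\dr}(\cdot;\hat\eta_k)-\mu_V\bigr].
\]
The first term gives the leading sum after averaging over folds.

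\textbf{Empirical-process term.} Conditional on the training fold, \(\hat\eta_k\) is fixed. The second term is therefore a centered i.i.d. mean with conditional variance at most \(n_k^{-1}\,P_0\|z_V^{\dr}(\cdot;\hat\eta_k)-z_V^{\dr}(\cdot;\eta_0)\|^2\). Expanding the difference of AIPW scores, using \(\hat\pi\ge\varepsilon'\), \(\pi_0\ge\varepsilon\), \(|k_Y|\le\kappa\), and bounded \(\hat m\), gives
\[
\|z^{\dr}(\hat\eta)-z^{\dr}(\eta_0)\|_{L_2}\lesssim\sum_a\{\alpha_{a,k}+\beta_{a,k}(V)\}=o_p(1).
\]
Conditional Chebyshev followed by the usual conditional-to-unconditional lemma then shows this term is \(o_p(n^{-1/2})\).

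\textbf{Bias term.} The third term is handled exactly by Proposition~\ref{prop:dr_canonical_gradient}: each arm contributes \(\E[(\pi_0-\hat\pi)\hat\pi^{-1}(m_a-\hat m_a)]\). Cauchy–Schwarz and the lower bound on \(\hat\pi\) bound its norm by \(\varepsilon'^{-1}\sum_a\alpha_{a,k}\beta_{a,k}(V)=o_p(n^{-1/2})\). Summing over the \(K\) folds with weights \(n_k/n\) yields the first-order representation. In the single independent-split case, the same argument applies with \(K=1\).

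\textbf{Covariance.} Write \(\hat z_i=\psi_V(Z_i)+\mu_V+\delta_i\), where \(\delta_i:=z_V^{\dr}(Z_i;\hat\eta_{-i})-z_V^{\dr}(Z_i;\eta_0)\). By the bound above, \(n^{-1}\sum_i\|\delta_i\|^2=o_p(1)\), since its conditional expectation is \(o_p(1)\). Because \(\psi_V\) is bounded (bounded kernel and positivity), the law of large numbers gives \(n^{-1}\sum\psi_V\psi_V^\top\to_p\Sigma_V\) and \(\bar\psi\to_p0\). The cross terms \(n^{-1}\sum\psi_V\delta_i^\top\) are \(o_p(1)\) by Cauchy–Schwarz. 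Together with \(\bar z_{n,V}^{\dr}\to_p\mu_V\) from the first part, this gives \(S_{n,V}^{\dr}\to_p\Sigma_V\).

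\textbf{Ridge-regularized inverse.} Since \(\gamma_n\to0\), we have \(S_{n,V}^{\dr}+\gamma_nI_J\to_p\Sigma_V\). Matrix inversion is continuous at the positive definite \(\Sigma_V\), so the continuous mapping theorem gives \((S_{n,V}^{\dr}+\gamma_n I_J)^{-1}\to_p\Sigma_V^{-1}\).

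The main obstacle is the empirical-process term. Its control rests on the mean-square continuity bound for the score difference, which requires the lower bound on \(\hat\pi\) and boundedness of \(\hat m\) with probability tending to one. To use these, we first work on the high-probability event where both hold, and then pass to the unconditional statement via the conditional Markov argument.
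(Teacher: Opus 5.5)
Your proposal is correct and follows essentially the same route as the paper's proof: the same fold-wise three-term decomposition, conditional second-moment (Chebyshev/Markov) control of the empirical-process term on the high-probability nuisance event, and the product-rate bound on the bias via the doubly robust identity. The covariance and inverse steps are also the same $L_2$-perturbation-plus-continuity argument. The only cosmetic difference is that you expand around the centered $\psi_V$, whereas the paper works with the uncentered second moment $M_n$ and then subtracts $\bar z\bar z^\top$; the two are equivalent.
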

The proof is given in Appendix~\ref{app:finite_scores_proofs}. The theorem reduces the feasible fixed-location statistic to the empirical mean of the canonical gradient. Thus \(\Sigma_V\) is not merely a sample covariance used for numerical normalization; it is the observed-data covariance that enters the efficient local testing experiment developed next.

\section{Efficient local testing geometry}
\label{sec:theory}

We now study the fixed-location test
\(H_{0,V}:\mu_V=0\) against \(H_{1,V}:\mu_V\neq0\), in the local regime where
alternatives approach the null at \(n^{-1/2}\) rate. This is the regime in
which asymptotic power is nondegenerate: fixed alternatives are detected with
probability tending to one, while faster local alternatives are invisible to
regular tests. Appendix~\ref{app:background} recalls the local-asymptotic
testing tools used below, including local asymptotic normality (LAN),
contiguity, regular procedures, and the efficiency interpretation.

Let \(\Pcal\) be the nonparametric model for the observed-data law on
\(\Zcal=\Xcal\times\{0,1\}\times\Ycal\), and fix \(P_0\in\Pcal\) with
\(\mu_V(P_0)=0\). Let \(T_{P_0}\) denote the tangent space at \(P_0\)
\citep{bickel1993efficient,vandervaart1998asymptotic}. For a score direction
\(g\in T_{P_0}\), let \(t\mapsto P_{t,g}\) be a quadratic-mean differentiable
regular path through \(P_0\) with score \(g\), and define the contiguous local
alternatives
\[
P_{n,h,g}:=P_{h/\sqrt n,g}^{\otimes n},
\qquad h\in\R .
\]
Quadratic-mean differentiability is the smoothness condition on this local
submodel: the square-root likelihood is differentiable in \(L_2(P_0)\) with
derivative \(g\). For i.i.d. experiments, this implies the LAN expansion and
contiguity of \(P_{n,h,g}\) with respect to \(P_0^{\otimes n}\)
\citep[Theorem~7.2]{vandervaart1998asymptotic}; no other local likelihood
regularity is used below. We use the same path to measure how the finite witness
moves under a local alternative. Since Proposition~\ref{prop:dr_canonical_gradient}
identifies \(\psi_V\) as the canonical gradient of \(\mu_V\), define the local
drift
\begin{equation}
    \eta_V(g)
:=
\left.\frac{d}{dt}\right|_{t=0}\mu_V(P_{t,g})
=
\E_{P_0}\{\psi_V(Z)g(Z)\}.
\end{equation}
Its covariance-whitened magnitude is
\begin{equation}
 \lambda_V(g)
:=
\eta_V(g)^\top\Sigma_V^{-1}\eta_V(g),
\qquad
\Sigma_V:=\Var_{P_0}\{\psi_V(Z)\}.   
\end{equation}
Thus \(\eta_V(g)\) records how the selected witness coordinates move under the
local alternative, while \(\lambda_V(g)\) measures the size of that movement
after normalization by the canonical-gradient covariance. If \(\eta_V(g)=0\),
the selected locations are locally blind to \(g\); if \(\eta_V(g)\neq0\), the
local alternative produces a first-order shift in the finite witness coordinates.

\begin{theorem}[Efficient local testing geometry]
\label{thm:efficient_local_testing}
Fix \(V\in\Ycal^J\) and let \(P_0\in\Pcal\) satisfy \(\mu_V(P_0)=0\). Let
\(t\mapsto P_{t,g}\) be a quadratic-mean differentiable regular path through
\(P_0\) with score \(g\in T_{P_0}\). Assume \(\Sigma_V\succ0\),
\(\gamma_n\downarrow0\), and that the first-order representation and covariance
consistency in Theorem~\ref{thm:first_order} hold under \(P_0^{\otimes n}\).
Then, under \(P_{n,h,g}\),
\[
\sqrt n\,\bar z_{n,V}^{\dr}
\dto
N\bigl(h\eta_V(g),\Sigma_V\bigr),
\qquad
\hat\lambda_{n,V}^{\dr}
\dto
\chi^2_J\!\left(h^2\lambda_V(g)\right).
\]
\end{theorem}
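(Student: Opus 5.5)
The proof is a standard Le Cam third-lemma argument. Under \(P_0^{\otimes n}\) we already have the first-order representation of Theorem~\ref{thm:first_order}. Since \(\mu_V(P_0)=0\), it reads
\[
\sqrt n\,\bar z_{n,V}^{\dr}=\frac1{\sqrt n}\sum_{i=1}^n\psi_V(Z_i)+o_{P_0}(1).
\]
Quadratic-mean differentiability of \(t\mapsto P_{t,g}\) gives the LAN expansion of the log-likelihood ratio:
\[
\log\frac{dP_{n,h,g}}{dP_0^{\otimes n}}=\frac{h}{\sqrt n}\sum_{i=1}^n g(Z_i)-\frac{h^2}{2}\E_{P_0}g^2+o_{P_0}(1)
\]
\citep[Theorem~7.2]{vandervaart1998asymptotic}. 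We then stack the leading linear term of the statistic with the log-likelihood ratio. Because \(\psi_V\) and \(g\) are mean zero and square integrable (\(\psi_V\) is bounded under the bounded kernel and positivity), the multivariate CLT gives joint normality under \(P_0^{\otimes n}\). The limit of \(\sqrt n\,\bar z_{n,V}^{\dr}\) is \(N(0,\Sigma_V)\). The limit of the log-likelihood ratio is \(N(-\tfrac{h^2}{2}\E g^2,\,h^2\E g^2)\). Their cross-covariance is \(h\,\E_{P_0}\{\psi_V g\}\). Slutsky absorbs the \(o_{P_0}(1)\) terms.

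Le Cam's third lemma then yields
\[
\sqrt n\,\bar z_{n,V}^{\dr}\dto N\bigl(h\E_{P_0}\{\psi_V g\},\Sigma_V\bigr)
\]
under \(P_{n,h,g}\). The mean equals \(h\eta_V(g)\) by the definition of the local drift. That identity is exactly the pathwise-derivative statement furnished by Proposition~\ref{prop:dr_canonical_gradient}: \(\psi_V\) is the canonical gradient, so \(\frac{d}{dt}\mu_V(P_{t,g})|_{t=0}=\E_{P_0}\{\psi_V g\}\).

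For the quadratic form, contiguity of \(P_{n,h,g}\) with respect to \(P_0^{\otimes n}\) (Le Cam's first lemma, again from LAN) transfers the convergence in probability
\[
(S_{n,V}^{\dr}+\gamma_n I_J)^{-1}\to_p\Sigma_V^{-1},
\]
established under \(P_0^{\otimes n}\) in Theorem~\ref{thm:first_order}, to convergence under \(P_{n,h,g}\). The continuous mapping theorem then gives
\[
\hat\lambda_{n,V}^{\dr}\dto W^\top\Sigma_V^{-1}W,\qquad W\sim N(h\eta_V(g),\Sigma_V).
\]
Write \(U:=\Sigma_V^{-1/2}W\sim N(h\Sigma_V^{-1/2}\eta_V(g),I_J)\), which uses \(\Sigma_V\succ0\). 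Then \(\|U\|^2\sim\chi^2_J(h^2\eta_V^\top\Sigma_V^{-1}\eta_V)=\chi^2_J(h^2\lambda_V(g))\).

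The main point requiring care is that the nuisance estimates are data dependent. The \(o_p(1)\) remainder in the first-order representation must remain negligible under the alternatives, which is exactly what contiguity gives, because remainders that vanish in \(P_0^{\otimes n}\)-probability also vanish in \(P_{n,h,g}\)-probability. No rate conditions need rechecking along the path, provided the hypotheses are stated under \(P_0^{\otimes n}\) as in the theorem.
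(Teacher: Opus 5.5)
Your proposal is correct and follows essentially the same argument as the paper: a joint CLT for \((n^{-1/2}\sum_i\psi_V(Z_i),\,\mathbb G_n g)\) under \(P_0^{\otimes n}\), combined with the LAN expansion and Le Cam's third lemma, then contiguity to transfer the convergence of \((S_{n,V}^{\dr}+\gamma_nI_J)^{-1}\), and finally Slutsky plus whitening to obtain the \(\chi^2_J(h^2\lambda_V(g))\) limit. The only cosmetic difference is how the first-order remainder is handled: you absorb it by Slutsky under \(P_0^{\otimes n}\) before applying the third lemma to \(\sqrt n\,\bar z_{n,V}^{\dr}\) itself, whereas the paper applies the third lemma to the linear term and transfers the remainder afterwards through its contiguity lemma; both routes are valid.
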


The proof is given in Appendix~\ref{app:local_proofs}. The theorem identifies
the finite-signal Gaussian experiment induced by the finite-location causal
null. Since \(\psi_V\) is the canonical gradient, the H\'ajek--Le Cam
convolution theorem \citep[Theorem~25.20]{vandervaart1998asymptotic} implies
that any regular estimator \(T_n\) of \(\mu_V\) with limit law
\(\sqrt n\{T_n-\mu_V(P_0)\}\dto L\) has \(L=N(0,\Sigma_V)*M\) for some noise
law \(M\), while \(\bar z_{n,V}^{\dr}\) attains the no-extra-noise case
\(M=\delta_0\). Thus \(\Sigma_V\) is the efficient covariance of the
finite-signal limit experiment. For any contrast \(a^\top\mu_V\), the squared
local signal-to-noise ratio along \(g\) is
\((a^\top\eta_V(g))^2/(a^\top\Sigma_Va)\), whose supremum over \(a\neq0\) is
\(\lambda_V(g)\). The Hotelling statistic is the corresponding omnibus
quadratic test in the whitened Gaussian shift experiment, with local power
governed by \(h^2\lambda_V(g)\). Appendix~\ref{app:local_results} gives additional
details on this efficiency interpretation.

\begin{remark}[Position relative to global MMD tests]
\label{rem:testing_efficiency_positioning}
The efficiency statement in Theorem~\ref{thm:efficient_local_testing} is a
local testing statement for the regular finite signal \(\mu_V\). It is not a
claim of optimality over all distributional alternatives. This differs from
global MMD-type tests, which target a squared RKHS discrepancy such as
\(\|\Delta(P)\|_{\Hv}^2\). At the global null \(\Delta(P_0)=0\), the
first-order derivative of this squared norm vanishes, so the ordinary Gaussian
Wald geometry is replaced by a second-order, degenerate \(U\)-statistic or
Gaussian-chaos null theory, as in the higher-order pathwise differentiability
analysis of \citet{luedtke2019omnibus}. Recent doubly robust kernel tests for
causal distributional effects similarly target global RKHS discrepancies and
provide omnibus validity and power guarantees
\citep{martinez2023,fawkes2024doubly,zenati2025cpme}. Our construction instead
targets the finite projection \(\mu_V=L_V\Delta\), which remains first-order
regular under the null. The tradeoff is explicit: a fixed finite \(V\) need not
characterize every global alternative, but it yields interpretable witness
coordinates, standard \(\chi^2_J\) calibration after sample splitting, and a
covariance-whitened local-power criterion for learning where to test.
\end{remark}

\begin{remark}[Directional and omnibus benchmarks within the finite signal]
\label{rem:directional_omnibus}
Within the finite-signal Gaussian experiment, if the local direction \(g\) were
known, the Neyman--Pearson linear test projects along
\(\Sigma_V^{-1}\eta_V(g)\). Equivalently, for scalar contrasts \(a^\top\mu_V\),
the squared local signal-to-noise ratio is
\[
\frac{(a^\top\eta_V(g))^2}{a^\top\Sigma_Va},
\]
whose supremum over \(a\neq0\) is
\(\eta_V(g)^\top\Sigma_V^{-1}\eta_V(g)\). When \(g\) is unknown, no uniformly
most powerful test exists over all drift directions in the multivariate
Gaussian shift experiment. The Hotelling statistic is the standard omnibus
quadratic statistic in the whitened finite-signal experiment.
\end{remark}

The noncentrality in Theorem~\ref{thm:efficient_local_testing} also explains how
locations should be chosen. The relevant signal is not the raw Euclidean size of
\(\mu_V\), but its size after whitening by the canonical-gradient covariance.
\begin{proposition}[Local-power criterion for locations]
\label{cor:local_power_criterion}
For fixed \(V\), define
\[
\Lambda_V(P)
:=
\mu_V(P)^\top\Sigma_V(P)^{-1}\mu_V(P),
\]
where \(\Sigma_V(P)\) is the covariance of the canonical gradient of \(\mu_V\)
at \(P\). Suppose \(\Sigma_V(P)\) is nonsingular in a neighborhood of \(P_0\)
and continuous along \(t\mapsto P_{t,g}\). Then, along \(P_{h/\sqrt n,g}\),
\[
\mu_V(P_{h/\sqrt n,g})
=
\frac{h}{\sqrt n}\eta_V(g)+o(n^{-1/2}),
\qquad
n\,\Lambda_V(P_{h/\sqrt n,g})
\to
h^2\lambda_V(g).
\]
\end{proposition}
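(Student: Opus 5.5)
The plan is to combine the pathwise differentiability from Proposition~\ref{prop:dr_canonical_gradient} with continuity of the inverse covariance along the path. First, since \(\psi_V\) is the canonical gradient of \(\mu_V\) in the nonparametric model, \(\mu_V\) is pathwise differentiable along every QMD path through \(P_0\) with score \(g\in T_{P_0}\). The derivative at \(t=0\) is \(\eta_V(g)=\E_{P_0}\{\psi_V(Z)g(Z)\}\), as in the display defining the local drift.

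Differentiability of the real-valued map \(t\mapsto\mu_V(P_{t,g})\) at \(t=0\) gives the first-order expansion \(\mu_V(P_{t,g})=\mu_V(P_0)+t\eta_V(g)+o(t)\) as \(t\to0\). Substituting \(t=h/\sqrt n\) and using \(\mu_V(P_0)=0\) yields the first claim:
\[
\mu_V(P_{h/\sqrt n,g})=\tfrac{h}{\sqrt n}\eta_V(g)+o(n^{-1/2}).
\]
If one prefers not to rely only on the abstract definition, the same conclusion follows directly. Since \(k_V\) is bounded, write \(\mu_V(P_t)-\mu_V(P_0)=\int \psi_V\,(dP_t-dP_0)\) plus a second-order remainder, using the product-form bias identity of Proposition~\ref{prop:dr_canonical_gradient} applied with \(P_t\) in place of the truth. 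QMD then gives \(\int\psi_V(dP_t-dP_0)=t\E\{\psi_Vg\}+o(t)\).

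For the second claim, set \(u_n:=\sqrt n\,\mu_V(P_{h/\sqrt n,g})\), so that \(u_n\to h\eta_V(g)\) by the first part. The assumed continuity of \(\Sigma_V(P_t)\) at \(t=0\), together with nonsingularity of \(\Sigma_V(P)\) near \(P_0\), implies \(\Sigma_V(P_{h/\sqrt n,g})^{-1}\to\Sigma_V(P_0)^{-1}=\Sigma_V^{-1}\), because matrix inversion is continuous on invertible matrices. Then
\[
n\Lambda_V(P_{h/\sqrt n,g})=u_n^\top\Sigma_V(P_{h/\sqrt n,g})^{-1}u_n\to h^2\eta_V(g)^\top\Sigma_V^{-1}\eta_V(g)=h^2\lambda_V(g)
\]
by continuity of the quadratic form. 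The case \(h=0\) is trivial.

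The main obstacle is rigorously justifying the pathwise derivative along an arbitrary QMD path, since \(\mu_V(P_t)\) involves conditional expectations and propensities under \(P_t\). I would first lean on the definition of canonical gradient already established in Proposition~\ref{prop:dr_canonical_gradient}, which asserts exactly this differentiability. If more detail is needed, I would use the product-remainder identity: the remainder is bounded by \(\|\pi_t-\pi_0\|_{L_2}\|m_{a,t}-m_a\|_{L_2}/\varepsilon\), which is \(O(t^2)\) under QMD with bounded kernel and positivity.
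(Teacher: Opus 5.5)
Your proposal is correct and follows the paper's proof. The paper likewise invokes pathwise differentiability of $\mu_V$ along $t\mapsto P_{t,g}$ (with derivative $\eta_V(g)$, from the canonical-gradient property) to get $\mu_V(P_{t,g})=t\eta_V(g)+o(t)$ and substitutes $t=h/\sqrt n$; it then concludes $n\Lambda_V(P_{h/\sqrt n,g})=u_n^\top\Sigma_V(P_{h/\sqrt n,g})^{-1}u_n\to h^2\lambda_V(g)$ from $u_n\to h\eta_V(g)$ and continuity of matrix inversion.

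Your optional von Mises justification is not in the paper and is not needed. If you keep it, note that the claimed $O(t^2)$ bound on the product remainder is immediate for bounded-score paths. For general QMD paths it needs an extra Hellinger-type argument, and you should specify which $L_2$ measure is used.
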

The proof is given in Appendix~\ref{app:local_proofs}. Proposition~\ref{cor:local_power_criterion}
links the population criterion for choosing \(V\) to the noncentrality parameter
in Theorem~\ref{thm:efficient_local_testing}. It also explains why an
unwhitened witness norm is not the right learning objective: a large discrepancy
in a high-variance direction may be weak for testing, while a smaller
discrepancy in a low-noise direction may be more informative.

The unregularized quantity \(\Lambda_V(P)\) is the asymptotic local-power
object. In practice, our location learning uses a ridge-stabilized proxy,
\begin{equation}
   \mathcal P_\tau(V)
:=
\mu_V^\top(\Sigma_V+\tau I_J)^{-1}\mu_V,
\qquad \tau>0, 
\end{equation}
which is numerically stable and approaches \(\Lambda_V(P)\) as
\(\tau\downarrow0\) whenever the covariance is uniformly nonsingular.

Finally, the fixed-location theory gives a direct route to valid testing when
the locations are learned on data independent of the final test split.

\begin{corollary}[Post-selection null law for split-sample locations]
\label{cor:split_calibration}
Assume the global null \(H_0:\Delta=0\). Let \(\hat V\in\mathcal V\) be
selected using data independent of the final testing split, and compute
\(\hat\lambda_{n_{\te},\hat V}^{\dr}\) on that final split with
\(\gamma_{n_{\te}}\downarrow0\), using nuisance estimates independent of each
test observation. Suppose that, conditionally on the learning split, the
fixed-location first-order representation and covariance consistency of
Theorem~\ref{thm:first_order} hold at \(V=\hat V\), with nondegenerate
covariance; alternatively, suppose these conditions hold uniformly over
\(V\in\mathcal V\). Then, conditionally on the learning split,
\[
\hat\lambda_{n_{\te},\hat V}^{\dr}
\dto
\chi^2_J .
\]
\end{corollary}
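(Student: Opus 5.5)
The plan is to condition on the learning split and reduce the statement to the fixed-location theory of Theorem~\ref{thm:first_order}. Let $\mathcal D_{\mathrm{learn}}$ denote the learning split. Because $\hat V$ is a measurable function of $\mathcal D_{\mathrm{learn}}$, and the test observations are i.i.d.\ $P_0$ and independent of $\mathcal D_{\mathrm{learn}}$, the conditional law of the test sample given $\mathcal D_{\mathrm{learn}}$ is still $P_0^{\otimes n_{\te}}$. In that conditional law, $\hat V$ behaves as a fixed point of $\mathcal V$. The cross-fitted nuisances $\hat\eta_{-i}$ are trained on data independent of $Z_i$, so the cross-fitting structure is also preserved conditionally.

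Next we use the global null. Under $H_0:\Delta=0$, the finite witness is $\mu_V=L_V\Delta=0$ for every $V$, and in particular $\mu_{\hat V}=0$ for the realized $\hat V$. This matters because selection on the learning split can steer $\hat V$ toward any location, and only the global null guarantees that every candidate location is a null location. The hypothesis of the corollary then gives, conditionally on $\mathcal D_{\mathrm{learn}}$,
\[
\sqrt{n_{\te}}\,\bar z^{\dr}_{n_{\te},\hat V}=n_{\te}^{-1/2}\sum_i\psi_{\hat V}(Z_i)+o_p(1),
\qquad
S^{\dr}_{n_{\te},\hat V}\to_p\Sigma_{\hat V}\succ0.
\]
The summands $\psi_{\hat V}(Z_i)$ are conditionally i.i.d.\ with mean zero and covariance $\Sigma_{\hat V}$. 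They are bounded, since the kernel is bounded and positivity holds, so the multivariate CLT applies, and Slutsky then gives conditional convergence to $N(0,\Sigma_{\hat V})$. The final part of Theorem~\ref{thm:first_order}, with $\gamma_{n_{\te}}\downarrow0$, gives $(S^{\dr}+\gamma_{n_{\te}}I_J)^{-1}\to_p\Sigma_{\hat V}^{-1}$. By the continuous mapping theorem, $\hat\lambda^{\dr}_{n_{\te},\hat V}$ converges to $W^\top\Sigma_{\hat V}^{-1}W$ with $W\sim N(0,\Sigma_{\hat V})$, and this has the $\chi^2_J$ distribution. The limit does not depend on $\hat V$, which is what makes the conditional statement meaningful and also implies the unconditional $\chi^2_J$ limit by dominated convergence of the conditional CDFs.

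The uniform alternative, where the conditions hold uniformly over $\mathcal V$, is handled in the same way. The one extra step is to check that uniformity yields the conditional statement at the random point $\hat V$. For the $o_p(1)$ remainder and the covariance consistency this is immediate: a remainder that is $o_p(1)$ uniformly over $\mathcal V$ is in particular $o_p(1)$ at any point, including one selected independently of the test split. The CLT step needs more care, and it is the main obstacle I expect, because $\hat V$ may vary with the learning sample size. I would handle it with a Lindeberg or Berry--Esseen argument that is uniform in $V$. The bounded summands combined with $\inf_{V\in\mathcal V}\lambda_{\min}(\Sigma_V)>0$, which is the uniform nondegeneracy assumption, give a Berry--Esseen bound of order $n_{\te}^{-1/2}$ that is uniform over $V$. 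Applying this bound at $V=\hat V$ closes the argument.
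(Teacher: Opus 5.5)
Your proposal is correct and follows essentially the same route as the paper. You condition on the learning split so that \(\hat V\) is fixed, invoke the global null to get \(\mu_{\hat V}=0\), apply the assumed first-order representation, covariance consistency, the multivariate CLT and Slutsky to obtain the \(\chi^2_J\) limit, and pass to the unconditional statement using boundedness and the fact that the limit does not depend on \(\hat V\); your uniform Berry--Esseen treatment of the ``uniform over \(\mathcal V\)'' branch, which handles a \(\hat V\) that drifts with the learning-sample size, is a careful addition the paper's proof does not spell out.
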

Corollary~\ref{cor:split_calibration} defines the split-sample DR-ME test.
After learning \(\hat V\) on data independent of \(I_{\te}\), compute
\[
p_{\te}^{\dr}
:=
1-F_{\chi^2_J}\!\left(\hat\lambda_{n_{\te},\hat V}^{\dr}\right),
\]
and reject the global null \(H_0:\Delta=0\) at level \(\alpha\) when
\[
\hat\lambda_{n_{\te},\hat V}^{\dr}>\chi^2_{J,1-\alpha},
\qquad\text{equivalently}\qquad
p_{\te}^{\dr}\le\alpha .
\]
The reason this remains valid after learning is simple: \(H_0:\Delta=0\)
implies \(\mu_V=0\) for every \(V\). Conditional on the learning split,
\(\hat V\) is fixed relative to the final test data, so the fixed-location
\(\chi^2_J\) calibration applies at \(V=\hat V\). For a prespecified \(V\), the
same rule with \(\hat V\) replaced by \(V\) gives the fixed-location DR-ME test
of \(H_{0,V}:\mu_V=0\), whose local power is governed by
Theorem~\ref{thm:efficient_local_testing}.

\section{Learning locations by local power}
\label{sec:learning}

Theorem~\ref{thm:efficient_local_testing} shows that fixed-location local power is governed by the witness after whitening by the efficient observed-data covariance. We therefore learn locations by maximizing
\[
\mathcal P_\tau(V)
:=
\mu_V^\top(\Sigma_V+\tau I_J)^{-1}\mu_V,
\qquad \tau>0,
\]
a ridge-stabilized version of the local-power criterion in Corollary~\ref{cor:local_power_criterion}. This follows the ME-test feature-learning principle \citep{jitkrittum2016interpretable}, but uses the causal observed-data covariance geometry.

We split the sample as \(I_\eta\cup I_{\tr}\cup I_{\te}\). Nuisances \(\hat\eta\) are fitted on \(I_\eta\), locations are learned on \(I_{\tr}\), and the final DR-ME test is computed on \(I_{\te}\). For \(i\in I_{\tr}\), let \(\hat z_{i,V}^{\dr}:=z_V^{\dr}(Z_i;\hat\eta)\), with empirical mean and covariance \(\bar z_{\tr,V}^{\dr}\) and \(S_{\tr,V}^{\dr}\). The training objective is
\begin{equation}
   \hat{\mathcal P}_{\tau,\tr}(V)
:=
\bar z_{\tr,V}^{\dr\top}
(S_{\tr,V}^{\dr}+\tau I_J)^{-1}
\bar z_{\tr,V}^{\dr}. 
\end{equation}

\begin{figure}[t]
\centering
\includegraphics[width=\linewidth]{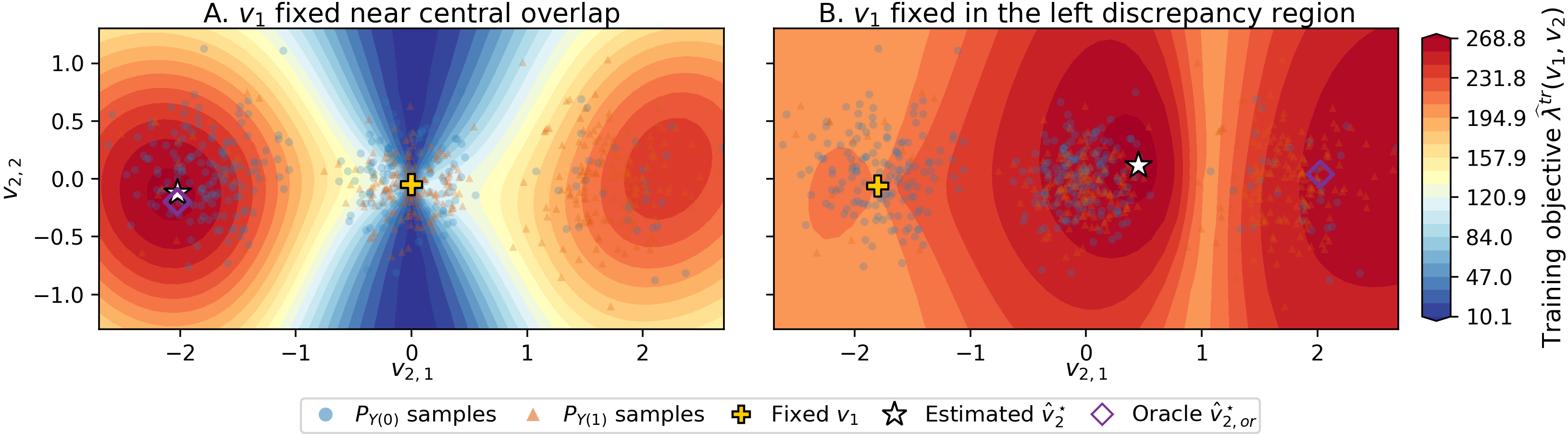}
\caption{Training objective for learned causal locations. Each panel fixes \(v_1\) and plots \(v_2\mapsto \hat{\mathcal P}_{\tau,\tr}(v_1,v_2)\). Blue/orange points show samples from \(P_{Y(0)}\)/\(P_{Y(1)}\); the gold marker is \(v_1\), the white star is the feasible maximizer, and the purple diamond is the oracle-nuisance maximizer on the same grid.}
\label{fig:contour_estimated_with_oracle_marker}
\end{figure}

Figure~\ref{fig:contour_estimated_with_oracle_marker} illustrates this objective in a representative two-location example. For the learned output \(\hat V\), define
\[
\varepsilon_{\tr}(\hat V)
:=
\sup_{V\in\mathcal V}\hat{\mathcal P}_{\tau,\tr}(V)
-
\hat{\mathcal P}_{\tau,\tr}(\hat V).
\]
This separates statistical error from numerical optimization error. It is zero for exhaustive finite-dictionary search, which is a natural set for non-Euclidean structured outcomes such as DNA or protein sequences \citep{scholkopf2004kernel,vert2005kernels,vert2006classification}. In Euclidean classes, the assumptions below make \(V\mapsto\hat{\mathcal P}_{\tau,\tr}(V)\) Lipschitz, so a \(\delta\)-net gives \(\varepsilon_{\tr}(\hat V)\le \operatorname{Lip}(\hat{\mathcal P}_{\tau,\tr})\delta\); certified Lipschitz or branch-and-bound methods can also control the gap \citep{piyavskii1972,shubert1972,horst1996global}. We keep \(\varepsilon_{\tr}(\hat V)\) explicit because plain gradient ascent generally certifies stationarity, not global optimality \citep{nocedal2006numerical}.

Conditional on \((I_\eta,I_{\tr})\), the selected \(\hat V\) is fixed relative to \(I_{\te}\). Hence the final statistic computed on \(I_{\te}\) is a fixed-location statistic at \(V=\hat V\), and Corollary~\ref{cor:split_calibration} gives post-selection calibration under the global null. The learning step affects power, not null calibration.

\begin{theorem}[Uniform consistency of the learned criterion]
\label{thm:location_learning}
Assume \(\Ycal\subset\R^d\) and let \(\mathcal V\subset[-R,R]^{Jd}\) be compact. Let
\[
\Pi(\mathcal V)
:=
\{v\in\R^d:\ v=v_j \text{ for some } V=(v_1,\ldots,v_J)\in\mathcal V\}.
\]
Suppose \(k_Y(v,y)\) is uniformly bounded and Lipschitz in \(v\) over \(\Pi(\mathcal V)\), the estimated propensity is bounded away from zero with probability tending to one, and the fitted regressions \(\hat m_a(x;v)\) are uniformly bounded and Lipschitz in \(v\). Define
$\rho_{n_\eta}
:=
\sum_{a=0}^1
\left[
\|\hat\pi(a\mid\cdot)-\pi_0(a\mid\cdot)\|_{L_2(P_X)}
+
\sup_{v\in\Pi(\mathcal V)}
\|\hat m_a(\cdot;v)-m_a(\cdot;v)\|_{L_2(P_X)}
\right].$
If \(n_\eta,n_{\tr}\to\infty\) and \(\rho_{n_\eta}=o_p(1)\), then, for fixed \(J\),
\[
\Delta_{\tr}
:=
\sup_{V\in\mathcal V}
\left|
\hat{\mathcal P}_{\tau,\tr}(V)-\mathcal P_\tau(V)
\right|
=
O_p\!\left(
\sqrt{\frac{Jd\log n_{\tr}}{n_{\tr}}}
+
\rho_{n_\eta}
\right).
\]
Consequently, for any output \(\hat V\) for which \(\varepsilon_{\tr}(\hat V)=O_p(e_{\tr})\) and any
\(V_\tau^\star\in\argmax_{V\in\mathcal V}\mathcal P_\tau(V)\),
\[
\mathcal P_\tau(V_\tau^\star)-\mathcal P_\tau(\hat V)
=
O_p\!\left(
\sqrt{\frac{Jd\log n_{\tr}}{n_{\tr}}}
+
\rho_{n_\eta}
+
e_{\tr}
\right).
\]
\end{theorem}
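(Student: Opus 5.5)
We reduce uniform closeness of the ridge quadratic forms to uniform closeness of the mean and covariance. Write $\mu_V$, $\Sigma_V$ for the population quantities. Write $\bar z_V:=\bar z^{\dr}_{\tr,V}$ and $S_V:=S^{\dr}_{\tr,V}$ for the training-split quantities built with the fixed nuisance $\hat\eta$ from $I_\eta$. Since $\tau>0$, both $(\Sigma_V+\tau I_J)^{-1}$ and $(S_V+\tau I_J)^{-1}$ have operator norm at most $\tau^{-1}$. By boundedness of $k_Y$, of $\hat m_a$, and of $1/\hat\pi$ (with probability tending to one), every $\hat z_{i,V}$ is bounded by a constant $C\sqrt J$ uniformly in $V$. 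Adding and subtracting gives the perturbation inequality
\[
|\bar z_V^\top(S_V+\tau I)^{-1}\bar z_V-\mu_V^\top(\Sigma_V+\tau I)^{-1}\mu_V|
\le \tfrac{2C\sqrt J}{\tau}\|\bar z_V-\mu_V\|+\tfrac{C^2 J}{\tau^2}\|S_V-\Sigma_V\|_{\op},
\]
using $A^{-1}-B^{-1}=A^{-1}(B-A)B^{-1}$. Since $J$ and $\tau$ are fixed, it suffices to bound $\sup_V\|\bar z_V-\mu_V\|$ and $\sup_V\|S_V-\Sigma_V\|_{\op}$.

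Each of these splits into a stochastic term and a nuisance term. Conditionally on $I_\eta$, define $\tilde\mu_V:=\E\{z^{\dr}_V(Z;\hat\eta)\mid I_\eta\}$ and $\tilde\Sigma_V:=\Var\{z^{\dr}_V(Z;\hat\eta)\mid I_\eta\}$.

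\emph{Stochastic terms.} Conditionally on $I_\eta$, the $\hat z_{i,V}$, $i\in I_{\tr}$, are i.i.d. and bounded. Their coordinates and pairwise products are Lipschitz in $V$, because $k_Y$ and $\hat m_a$ are Lipschitz in $v$ and $1/\hat\pi$ is bounded. Take a $\delta$-net of $\mathcal V\subset[-R,R]^{Jd}$, which has cardinality at most $(3R/\delta)^{Jd}$. Apply Hoeffding's inequality and a union bound over the net and the $O(J^2)$ entries, and choose $\delta=n_{\tr}^{-1/2}$. This gives $\sup_V\|\bar z_V-\tilde\mu_V\|+\sup_V\|S_V-\tilde\Sigma_V\|_{\op}=O_p(\sqrt{Jd\log n_{\tr}/n_{\tr}})$. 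The discretization error is $O(\delta)$ by the Lipschitz property, and the $1/(n-1)$ versus $1/n$ and $\bar z\bar z^\top$ corrections are absorbed in the same bound. Because the bound holds conditionally with constants free of $I_\eta$, it holds unconditionally.

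\emph{Nuisance terms.} By Proposition~\ref{prop:dr_canonical_gradient}, $\tilde\mu_V-\mu_V$ equals the product-bias term $\sum_a\E[(\pi_0-\hat\pi)\hat\pi^{-1}(m_a-\hat m_a)]$. Cauchy--Schwarz bounds it by $\varepsilon^{-1}\sum_a\alpha_a\sup_v\|\hat m_a-m_a\|$ coordinatewise. This is $O_p(\rho_{n_\eta}^2)$, and hence $O_p(\rho_{n_\eta})$, since $\rho_{n_\eta}=o_p(1)$. For the covariance, the second moments satisfy
\[
\|\E[z(\hat\eta)z(\hat\eta)^\top]-\E[z(\eta_0)z(\eta_0)^\top]\|\le \E\bigl[\|z(\hat\eta)-z(\eta_0)\|\,(\|z(\hat\eta)\|+\|z(\eta_0)\|)\bigr].
\]
The difference $z_V(\hat\eta)-z_V(\eta_0)$ is a sum of terms of the form $(\hat\pi^{-1}-\pi_0^{-1})(k_V(Y)-\cdot)$ and $(1-\1\{A=a\}/\pi)(\hat m_a-m_a)$. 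Its $L_2$ norm is therefore $O(\sqrt J\rho_{n_\eta})$ uniformly in $V$, using bounded kernels, positivity, and the supremum over $v\in\Pi(\mathcal V)$ in $\rho_{n_\eta}$. Combining this with the mean bound gives $\sup_V\|\tilde\Sigma_V-\Sigma_V\|_{\op}=O_p(\rho_{n_\eta})$. Here $\Sigma_V=\Var\{\psi_V\}=\Var\{z_V(\eta_0)\}$.

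Plugging both bounds into the perturbation inequality yields the rate for $\Delta_{\tr}$. The excess-criterion bound then follows from the standard argmax decomposition
\[
\mathcal P_\tau(V^\star_\tau)-\mathcal P_\tau(\hat V)\le[\mathcal P_\tau(V^\star_\tau)-\hat{\mathcal P}(V^\star_\tau)]+[\sup_V\hat{\mathcal P}(V)-\hat{\mathcal P}(\hat V)]+[\hat{\mathcal P}(\hat V)-\mathcal P_\tau(\hat V)]\le 2\Delta_{\tr}+\varepsilon_{\tr}(\hat V).
\]

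The main obstacle is the uniform empirical-process step. Its conditioning must be handled carefully: Lipschitz constants and bounds must hold on an event of probability tending to one determined by $I_\eta$, and the covering argument must be applied to the quadratic entries of $S_V$. This entropy term is where the $Jd\log n_{\tr}$ factor arises.
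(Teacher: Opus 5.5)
Your proposal is correct and follows essentially the paper's route: a ridge perturbation bound that reduces everything to uniform mean and covariance errors, a split through the conditional-on-$I_\eta$ moments, a $\delta$-net plus Hoeffding union bound with $\delta=n_{\tr}^{-1/2}$ for the stochastic part, an $L_2$ nuisance-transfer bound for the covariance, and the standard $2\Delta_{\tr}+\varepsilon_{\tr}(\hat V)$ argmax decomposition. The only difference is cosmetic. You control the mean nuisance term through the product-bias identity of Proposition~\ref{prop:dr_canonical_gradient}, which gives $O_p(\rho_{n_\eta}^2)$; the constant there should be $\underline\varepsilon^{-1}$ rather than $\varepsilon^{-1}$, since the denominator is $\hat\pi$. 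The paper instead uses $\|P_0 d_V\|\le (P_0\|d_V\|^2)^{1/2}=O_p(\rho_{n_\eta})$, and the choice is immaterial because the covariance term is already first order in $\rho_{n_\eta}$.
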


The proof is given in Appendix~\ref{app:learning_proofs}. The first term is the statistical price of optimizing over a \(Jd\)-dimensional Euclidean class, the second is the nuisance-induced discrepancy, and the third is the empirical optimization gap. The theorem gives near-optimality within the chosen search class and for the ridge-stabilized criterion; it does not claim recovery of globally optimal locations over all of \(\Ycal^J\).

For structured outcomes where Euclidean optimization is inappropriate, Appendix~\ref{app:finite_dictionary} gives a finite-dictionary analogue. If \(\mathcal C=\{c_1,\ldots,c_M\}\subset\Ycal\) and \(\mathcal V_J(\mathcal C)\subset\mathcal C^J\), the Euclidean complexity term is replaced by $\sqrt{\frac{\log|\mathcal V_J(\mathcal C)|}{n_{\tr}}}
\le
\sqrt{\frac{J\log M}{n_{\tr}}}$. With exhaustive dictionary search, \(\varepsilon_{\tr}(\hat V)=0\). Matrix formulas and implementation details are given in Appendix~\ref{app:implementation}.

\section{Experiments}
\label{sec:experiments}

We evaluate three empirical claims in the main text. First, the proposed learned finite-location test is calibrated under observational confounding and is competitive with global doubly robust kernel tests, with higher power on localized distributional alternatives. Second, the efficient covariance geometry is important for learning informative locations, especially in structured or high-dimensional outcome spaces. Third, we illustrate the interpretability of the learned locations on a semi-synthetic medical imaging task based on MedMNIST \citep{medmnistv2}. Additional baselines, local noncentral-\(\chi^2\) diagnostics, runtime results are deferred to Appendix~\ref{app:additional_experiments} and further validate the practical benefits of our approach.

\begin{figure}[t]
\centering
\includegraphics[width=\linewidth]{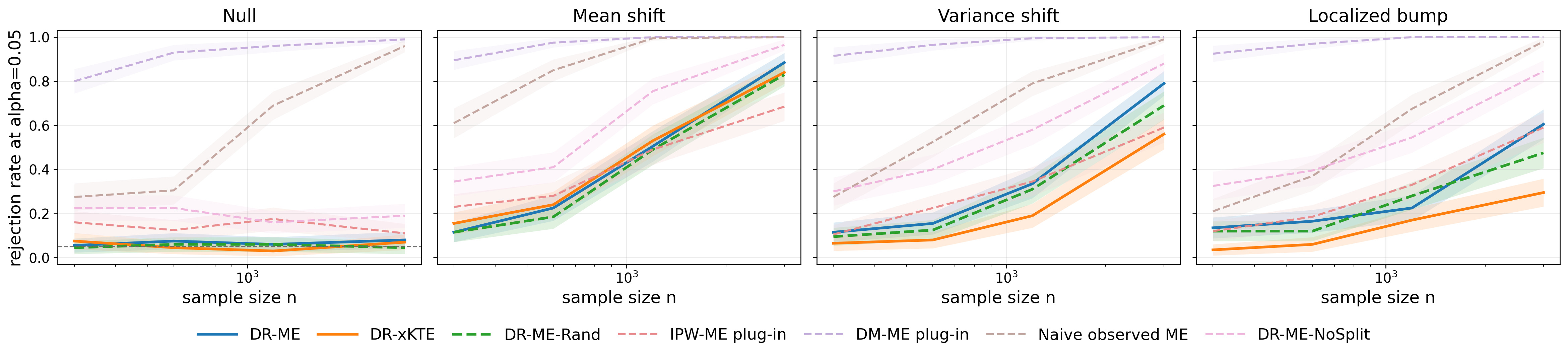}
\caption{
Calibration and power under observational confounding. Left: confounded null \(P_{Y(0)}=P_{Y(1)}\). Right: mean-shift, variance-shift, and localized-bump alternatives. DR-ME is the most powerful test among calibrated ones (DR-ME, DR-xKTE, DR-ME-Rand) and even outperforms the global test DR-xKTE. DR-ME-Rand is calibrated but weaker, showing the benefits of learned locations. IPW-ME, DM-ME, Naive observed ME, and DR-ME-NoSplit over-reject under the null, so their high alternative rejection rates are not valid power.
}
\label{fig:power_all_baselines_main}
\end{figure}

All synthetic experiments use a confounded observational design with nonlinear outcome structure, three independent splits for nuisance fitting, location learning, and final testing, and \(200\) Monte Carlo replications at nominal level \(\alpha=0.05\). Nuisances are estimated from \(I_\eta\), locations are learned from \(I_{\tr}\), and all reported tests are evaluated only on \(I_{\te}\). Unless stated otherwise, DR-ME uses a Gaussian outcome kernel, \(M=80\) candidate locations, and \(J=2\) selected locations. Full data-generating processes, nuisance models, bandwidth choices, and implementation details are given in Appendix~\ref{app:additional_experiments}.

\paragraph{Calibration and power.}
Figure~\ref{fig:power_all_baselines_main} compares DR-ME with finite-location diagnostics and the global kernel baseline. DR-ME-Rand uses the same statistic with random locations, IPW-ME uses only inverse-propensity weighting, DM-ME uses only the fitted regression contrast, Naive observed ME ignores the confounding \(X\), and DR-ME-NoSplit learns and tests on the same data. DR-xKTE \citep{martinez2023} is the calibrated global doubly robust kernel competitor. The left panel shows that calibration requires both orthogonalization and sample splitting: naive, plug-in, and no-split variants over-reject under the null in our confounded setting. The right panels show that, among calibrated methods (DR-ME, DR-xKTE, DR-ME-Rand), learning locations improves over random locations and even outperforms the global DR-xKTE baseline when the discrepancy is localized.

\paragraph{Covariance geometry.}
We next isolate the role of covariance whitening in location learning. The final
test is fixed across variants: all use the same split-sample DR-ME Hotelling
statistic on \(I_{\te}\); only the training-split location criterion changes. We
compare full whitening,
\(\hat\mu_V^\top(\hat\Sigma_V+\tau I)^{-1}\hat\mu_V\), with raw witness
maximization, \(\|\hat\mu_V\|^2\), and random locations. Outcomes are
non-scalar, \(Y\in\R^{d_Y}\), and the alternative places rare localized mass in
two sparse regions, so most dictionary locations are uninformative. This tests
whether whitening selects high signal-to-noise, nonredundant witness coordinates
rather than large or random witness values. Table~\ref{tab:covariance_geometry_main} shows that the difference is power, not
calibration: all three rules are close to nominal level under the null, but full
whitening is strongest at every dimension, with large gains when informative
locations are rare. This supports the criterion
\(\mu_V^\top\Sigma_V^{-1}\mu_V\): covariance whitening selects finite-location
coordinates with favorable observed-data signal-to-noise. Additional selection
diagnostics are deferred to Appendix~\ref{app:additional_experiments}.

\begin{table}[t]
\centering
\caption{
Non-scalar outcome two-bump ablation. Entries are rejection rates at level \(\alpha=0.05\). 
All methods use the same final DR-ME statistic; only the location-learning rule differs. 
Full covariance whitening remains calibrated under the null and gives the strongest power.
}
\label{tab:covariance_geometry_main}
\begin{tabular}{lcccccccc}
\toprule
& \multicolumn{4}{c}{Null} & \multicolumn{4}{c}{Two-bump alternative} \\
\cmidrule(lr){2-5}\cmidrule(lr){6-9}
Method 
& \(d_Y=5\) & \(10\) & \(25\) & \(50\)
& \(d_Y=5\) & \(10\) & \(25\) & \(50\) \\
\midrule
Full whitening & 0.070 & 0.025 & 0.050 & 0.070 & 0.940 & 0.945 & 0.760 & 0.660 \\
Raw witness    & 0.055 & 0.035 & 0.070 & 0.055 & 0.135 & 0.210 & 0.380 & 0.500 \\
Random         & 0.070 & 0.035 & 0.060 & 0.025 & 0.100 & 0.190 & 0.295 & 0.280 \\
\bottomrule
\end{tabular}
\end{table}

\paragraph{Interpretable image outcomes on OCTMNIST.}
We include a qualitative semi-synthetic OCTMNIST experiment to illustrate localization on structured outcomes \citep{medmnistv2}. Synthetic covariates \(X\) drive both observational treatment assignment and heterogeneity, while OCTMNIST images are used only as image outcomes. Potential outcomes are full \(28\times28\) OCT images \(Y(a)=B+R(a)\), where \(B\) is sampled from normal OCT images and the residuals \(R(0),R(1)\) are built from a synthetic DME-derived fluid-like template. The residual construction is mean-matched, with \(\E\{R(1)-R(0)\mid X\}=0\). Thus the treatment effect is primarily distributional: the treated law contains rare severe localized fluid-like deviations, while first-moment residual evidence is removed. Here, DR-ME learns with gradient based optimization a single image-space location on \(I_{\tr}\). Figure~\ref{fig:octmnist_mean_matched_main} shows that the learned location is not explained by an average residual shift: the oracle residual contrast is null, but \(v^\star\) concentrates on the central fluid-like region. This illustrates the intended interpretability of finite-location testing: beyond rejecting a global distributional null, the method returns an outcome-space coordinate where the causal discrepancy is visible.

\begin{figure}[h]
\centering
\includegraphics[height=3cm]{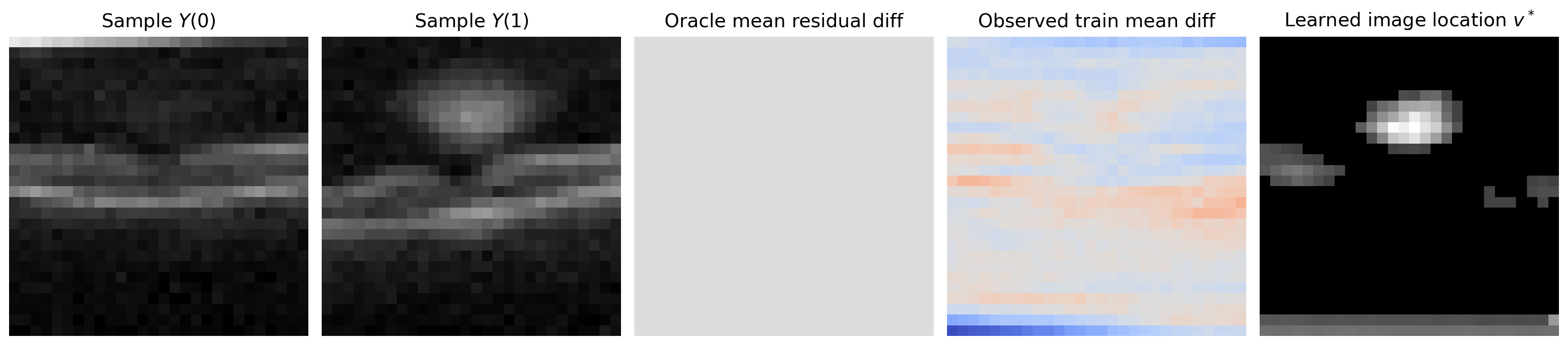}
\caption{
OCTMNIST image-location experiment. From left to right: sampled potential images \(Y(0)\) and \(Y(1)\), oracle residual mean difference, observed treated-versus-control training mean difference, and the learned image location \(v^\star\). The oracle residual mean contrast is blank by construction, while the observed training contrast can be nonzero due to confounding. The learned DR-ME location, selected on \(I_{\tr}\) and evaluated on \(I_{\te}\), localizes the retinal region where the interventional image laws differ in distribution.
}
\label{fig:octmnist_mean_matched_main}
\end{figure}

\section{Discussion}

This paper constructs a causal DR-ME Hotelling test for finite-location projections of interventional distributional discrepancies. For fixed locations, the target is a regular observed-data parameter whose canonical-gradient covariance gives the efficient local testing geometry. The resulting noncentrality \(\eta_V(g)^\top\Sigma_V^{-1}\eta_V(g)\) controls local power and motivates the location-learning criterion. DR-ME is therefore complementary to global kernel tests: global tests target omnibus sensitivity, while DR-ME targets interpretable, locally efficient witness coordinates, with power depending on whether the learned locations capture the discrepancy. Natural extensions are threefold. First, the current theory treats point treatments under observed confounding; longitudinal, adaptive, missing-data, instrumental-variable, or proximal settings would require new canonical gradients and local experiments. Second, fixed kernels could be replaced by deep kernels or learned feature maps, when representation learning is separated from final testing to preserve calibration as we do. Third, location learning should exploit structure: finite dictionaries for non-Euclidean outcomes such as DNA sequences, and constrained or regularized search classes for images and other structured outputs.

\subsubsection*{Acknowledgements}
Houssam Zenati and Arthur Gretton are supported by the Gatsby Charitable Foundation.

\bibliographystyle{plainnat}
\bibliography{references}

@article{piyavskii1972,
  author  = {Piyavskii, S. A.},
  title   = {An Algorithm for Finding the Absolute Extremum of a Function},
  journal = {USSR Computational Mathematics and Mathematical Physics},
  volume  = {12},
  number  = {4},
  pages   = {57--67},
  year    = {1972}
}

@article{shubert1972,
  author  = {Shubert, Bruno O.},
  title   = {A Sequential Method Seeking the Global Maximum of a Function},
  journal = {SIAM Journal on Numerical Analysis},
  volume  = {9},
  number  = {3},
  pages   = {379--388},
  year    = {1972}
}

@book{horst1996global,
  author    = {Horst, Reiner and Tuy, Hoang},
  title     = {Global Optimization: Deterministic Approaches},
  edition   = {3},
  publisher = {Springer},
  address   = {Berlin},
  year      = {1996}
}

@book{nocedal2006numerical,
  author    = {Nocedal, Jorge and Wright, Stephen J.},
  title     = {Numerical Optimization},
  edition   = {2},
  publisher = {Springer},
  address   = {New York},
  year      = {2006}
}

@article{medmnistv2,
    title={MedMNIST v2-A large-scale lightweight benchmark for 2D and 3D biomedical image classification},
    author={Yang, Jiancheng and Shi, Rui and Wei, Donglai and Liu, Zequan and Zhao, Lin and Ke, Bilian and Pfister, Hanspeter and Ni, Bingbing},
    journal={Scientific Data},
    volume={10},
    number={1},
    pages={41},
    year={2023},
    publisher={Nature Publishing Group UK London}
}

@article{kermany2018identifying,
  title   = {Identifying Medical Diagnoses and Treatable Diseases by Image-Based Deep Learning},
  author  = {Kermany, Daniel S. and Goldbaum, Michael and Cai, Wenjia and Valentim, Carolina C. S. and Liang, Huiying and Baxter, Sally L. and McKeown, Alex and Yang, Ge and Wu, Xiaokang and Yan, Fangbing and Dong, Justin and Prasadha, Made K. and Pei, Jacqueline and Ting, Magdalene Y. L. and Zhu, Jie and Li, Christina and Hewett, Sierra and Dong, Jason and Ziyar, Ian and Shi, Alexander and Zhang, Runze and Zheng, Lianghong and Hou, Rui and Shi, William and Fu, Xiaogang and Duan, Yaou and Huu, Viet Anh Nguyen and Wen, Cindy and Zhang, Edward D. and Zhang, Chenyang L. and Li, Oulan and Wang, Xiaobo and Singer, Michael A. and Sun, Xiaodong and Xu, Jie and Tafreshi, Ali and Lewis, M. Anthony and Xia, Huimin and Zhang, Kang},
  journal = {Cell},
  volume  = {172},
  number  = {5},
  pages   = {1122--1131.e9},
  year    = {2018},
}

@book{scholkopf2004kernel,
  editor    = {Sch{\"o}lkopf, Bernhard and Tsuda, Koji and Vert, Jean-Philippe},
  title     = {Kernel Methods in Computational Biology},
  publisher = {MIT Press},
  address   = {Cambridge, MA},
  year      = {2004},
  isbn      = {9780262195096}
}

@inproceedings{vert2005kernels,
  author    = {Vert, Jean-Philippe and Thurman, Robert and Noble, William S.},
  title     = {Kernels for Gene Regulatory Regions},
  booktitle = {Advances in Neural Information Processing Systems 18},
  editor    = {Weiss, Yair and Sch{\"o}lkopf, Bernhard and Platt, John},
  pages     = {1401--1408},
  publisher = {MIT Press},
  address   = {Cambridge, MA},
  year      = {2005}
}

@inproceedings{vert2006classification,
  author    = {Vert, Jean-Philippe},
  title     = {Classification of Biological Sequences with Kernel Methods},
  booktitle = {Grammatical Inference: Algorithms and Applications},
  editor    = {Sakakibara, Yasubumi and Kobayashi, Satoshi and Sato, Kengo and Nishino, Tetsuro and Tomita, Etsuji},
  series    = {Lecture Notes in Computer Science},
  volume    = {4201},
  pages     = {7--18},
  publisher = {Springer},
  address   = {Berlin, Heidelberg},
  year      = {2006},
  doi       = {10.1007/11872436_2}
}

@article{luedtke2019omnibus,
  title={An Omnibus Non-Parametric Test of Equality in Distribution for Unknown Functions},
  author={Luedtke, Alexander R. and Carone, Marco and van der Laan, Mark J.},
  journal={Journal of the Royal Statistical Society: Series B},
  volume={81},
  number={1},
  pages={75--99},
  year={2019}
}

@misc{hudson2021restricted,
  title={Inference on Function-Valued Parameters Using a Restricted Score Test},
  author={Hudson, Aaron and Carone, Marco and Shojaie, Ali},
  year={2021},
  eprint={2105.06646},
  archivePrefix={arXiv},
  primaryClass={stat.ME}
}

@misc{zenati2025kteadaptive,
  title         = {Kernel Treatment Effects with Adaptively Collected Data},
  author        = {Zenati, Houssam and Bozkurt, Bariscan and Gretton, Arthur},
  year          = {2025},
  eprint        = {2510.10245},
  archivePrefix = {arXiv},
  primaryClass  = {stat.ML},
  url           = {https://arxiv.org/abs/2510.10245}
}

@inproceedings{martinez2023,
 author = {Martinez Taboada, Diego and Ramdas, Aaditya and Kennedy, Edward},
 booktitle = {Advances in Neural Information Processing Systems},
 pages = {59924--59952},
 title = {An Efficient Doubly-Robust Test for the Kernel Treatment Effect},
 volume = {36},
 year = {2023}
}

@article{
fawkes2024doubly,
title={Doubly Robust Kernel Statistics for Testing Distributional Treatment Effects},
author={Jake Fawkes and Robert Hu and Robin J. Evans and Dino Sejdinovic},
journal={Transactions on Machine Learning Research},
year={2024},
}

@inproceedings{jitkrittum2016interpretable,
  title     = {Interpretable Distribution Features with Maximum Testing Power},
  author    = {Jitkrittum, Wittawat and Szab{\'o}, Zolt{\'a}n and Chwialkowski, Kacper P. and Gretton, Arthur},
  booktitle = {Advances in Neural Information Processing Systems 29},
  pages     = {181--189},
  year      = {2016}
}

@inproceedings{Chwialkowski2015,
 author = {Chwialkowski, Kacper P and Ramdas, Aaditya and Sejdinovic, Dino and Gretton, Arthur},
 booktitle = {Advances in Neural Information Processing Systems},
 title = {Fast Two-Sample Testing with Analytic Representations of Probability Measures},
 volume = {28},
 year = {2015}
}

@misc{fithian2014optimal,
  title={Optimal Inference After Model Selection},
  author={Fithian, William and Sun, Dennis and Taylor, Jonathan},
  year={2014},
  eprint={1410.2597},
  archivePrefix={arXiv},
  primaryClass={math.ST}
}

@article{kuchibhotla2022postselection,
  title={Post-Selection Inference},
  author={Kuchibhotla, Arun Kumar and Kolassa, John E. and Kuffner, Todd A.},
  journal={Annual Review of Statistics and Its Application},
  volume={9},
  pages={505--527},
  year={2022},
  doi={10.1146/annurev-statistics-100421-044639}
}

@inproceedings{smola2007hilbert,
  title={A Hilbert space embedding for distributions},
  author={Smola, Alex and Gretton, Arthur and Song, Le and Sch{\"o}lkopf, Bernhard},
  booktitle={International conference on algorithmic learning theory},
  pages={13--31},
  year={2007},
  organization={Springer}
}

@article{rothe2010nonparametric,
  title={Nonparametric estimation of distributional policy effects},
  author={Rothe, Christoph},
  journal={Journal of Econometrics},
  volume={155},
  number={1},
  pages={56--70},
  year={2010},
}

@article{chernozhukov2013inference,
  title={Inference on counterfactual distributions},
  author={Chernozhukov, Victor and Fern{\'a}ndez-Val, Iv{\'a}n and Melly, Blaise},
  journal={Econometrica},
  volume={81},
  number={6},
  pages={2205--2268},
  year={2013},
}

@article{muandet2021counterfactual,
  title={Counterfactual mean embeddings},
  author={Muandet, Krikamol and Kanagawa, Motonobu and Saengkyongam, Sorawit and Marukatat, Sanparith},
  journal={Journal of Machine Learning Research},
  volume={22},
  number={162},
  pages={1--71},
  year={2021}
}

@article{gretton2012kernel,
  title={A kernel two-sample test},
  author={Gretton, Arthur and Borgwardt, Karsten M and Rasch, Malte J and Sch{\"o}lkopf, Bernhard and Smola, Alexander},
  journal={The Journal of Machine Learning Research},
  volume={13},
  number={1},
  pages={723--773},
  year={2012},
}

@article{gartner2003survey,
  title={A survey of kernels for structured data},
  author={G{\"a}rtner, Thomas},
  journal={ACM SIGKDD explorations newsletter},
  volume={5},
  number={1},
  pages={49--58},
  year={2003},
}

@article{sriperumbudur2011universality,
  title={Universality, Characteristic Kernels and RKHS Embedding of Measures.},
  author={Sriperumbudur, Bharath K and Fukumizu, Kenji and Lanckriet, Gert RG},
  journal={Journal of Machine Learning Research},
  volume={12},
  number={7},
  year={2011}
}

@book{vandervaart1998asymptotic,
  title     = {Asymptotic Statistics},
  author    = {{van der Vaart}, A. W.},
  year      = {1998},
  publisher = {Cambridge University Press},
  series    = {Cambridge Series in Statistical and Probabilistic Mathematics},
  volume    = {3},
  doi       = {10.1017/CBO9780511802256}
}

@book{lecam2000asymptotics,
  title     = {Asymptotics in Statistics: Some Basic Concepts},
  author    = {Le Cam, Lucien M. and Yang, Grace Lo},
  year      = {2000},
  edition   = {2},
  publisher = {Springer},
  series    = {Springer Series in Statistics},
  doi       = {10.1007/978-1-4612-1166-2}
}

@article{neyman1933mostefficient,
  author  = {Neyman, Jerzy and Pearson, Egon S.},
  title   = {On the Problem of the Most Efficient Tests of Statistical Hypotheses},
  journal = {Philosophical Transactions of the Royal Society of London. Series A, Containing Papers of a Mathematical or Physical Character},
  volume  = {231},
  number  = {694--706},
  pages   = {289--337},
  year    = {1933},
}

@article{rubin1974estimating,
  title={Estimating Causal Effects of Treatments in Randomized and Nonrandomized Studies},
  author={Rubin, Donald B.},
  journal={Journal of Educational Psychology},
  volume={66},
  number={5},
  pages={688--701},
  year={1974},
  doi={10.1037/h0037350}
}

@book{imbens2015causal,
  title={Causal Inference for Statistics, Social, and Biomedical Sciences: An Introduction},
  author={Imbens, Guido W. and Rubin, Donald B.},
  publisher={Cambridge University Press},
  year={2015},
  doi={10.1017/CBO9781139025751}
}

@article{rosenbaum1983central,
  title={The Central Role of the Propensity Score in Observational Studies for Causal Effects},
  author={Rosenbaum, Paul R. and Rubin, Donald B.},
  journal={Biometrika},
  volume={70},
  number={1},
  pages={41--55},
  year={1983},
  doi={10.1093/biomet/70.1.41}
}

@book{hernanrobins2020causal,
  title={Causal Inference: What If},
  author={Hern{\'a}n, Miguel A. and Robins, James M.},
  publisher={Chapman \& Hall/CRC},
  year={2020}
}

@article{robins1994estimation,
  title={Estimation of Regression Coefficients When Some Regressors Are Not Always Observed},
  author={Robins, James M. and Rotnitzky, Andrea and Zhao, Lue Ping},
  journal={Journal of the American Statistical Association},
  volume={89},
  number={427},
  pages={846--866},
  year={1994},
  doi={10.1080/01621459.1994.10476818}
}

@article{bang2005doubly,
  title={Doubly Robust Estimation in Missing Data and Causal Inference Models},
  author={Bang, Heejung and Robins, James M.},
  journal={Biometrics},
  volume={61},
  number={4},
  pages={962--973},
  year={2005},
  doi={10.1111/j.1541-0420.2005.00377.x}
}

@article{hahn1998role,
  title={On the Role of the Propensity Score in Efficient Semiparametric Estimation of Average Treatment Effects},
  author={Hahn, Jinyong},
  journal={Econometrica},
  volume={66},
  number={2},
  pages={315--331},
  year={1998},
  doi={10.2307/2998560}
}

@article{chernozhukov2018double,
  title={Double/Debiased Machine Learning for Treatment and Structural Parameters},
  author={Chernozhukov, Victor and Chetverikov, Denis and Demirer, Mert and Duflo, Esther and Hansen, Christian and Newey, Whitney and Robins, James},
  journal={The Econometrics Journal},
  volume={21},
  number={1},
  pages={C1--C68},
  year={2018},
  doi={10.1111/ectj.12097}
}

@book{bickel1993efficient,
  title={Efficient and Adaptive Estimation for Semiparametric Models},
  author={Bickel, Peter J. and Klaassen, Chris A. J. and Ritov, Ya'acov and Wellner, Jon A.},
  publisher={Johns Hopkins University Press},
  year={1993}
}

@misc{kennedy2022semiparametric,
  title={Semiparametric Doubly Robust Targeted Double Machine Learning: A Review},
  author={Kennedy, Edward H.},
  year={2022},
  eprint={2203.06469},
  archivePrefix={arXiv},
  primaryClass={stat.ME}
}

@article{hotelling1931generalization,
  title={The Generalization of Student's Ratio},
  author={Hotelling, Harold},
  journal={The Annals of Mathematical Statistics},
  volume={2},
  number={3},
  pages={360--378},
  year={1931},
  doi={10.1214/aoms/1177732979}
}

@article{luedtke2024one,
  title={One-step estimation of differentiable hilbert-valued parameters},
  author={Luedtke, Alex and Chung, Incheoul},
  journal={The Annals of Statistics},
  volume={52},
  number={4},
  pages={1534--1563},
  year={2024},
  publisher={Institute of Mathematical Statistics}
}

@inproceedings{
zenati2025cpme,
title={Doubly-Robust Estimation of Counterfactual Policy Mean Embeddings},
author={Houssam Zenati and Bariscan Bozkurt and Arthur Gretton},
booktitle={The Thirty-ninth Annual Conference on Neural Information Processing Systems},
year={2025},
url={https://openreview.net/forum?id=0GDlX9JFf2}
}


\newpage
\appendix

\section*{Appendix}

\paragraph{Appendix organization.}
Appendix~\ref{app:notation_assumptions} collects notation and the full assumptions used in the main text. Appendix~\ref{app:background} gives a compact review of the local asymptotic tools used for the testing-efficiency statements. Appendix~\ref{app:finite_scores_proofs} proves the identification, canonical-gradient, and first-order representation results from Section~\ref{sec:finite_scores}.  Appendix~\ref{app:local_proofs} proves the efficient local testing results from Section~\ref{sec:theory}. Appendix~\ref{app:local_results} records auxiliary local-testing facts. Appendix~\ref{app:learning_proofs} proves the Euclidean location-learning result from Section~\ref{sec:learning}, and Appendix~\ref{app:finite_dictionary} gives the finite-dictionary variant for structured outcomes. Appendix~\ref{app:implementation} gives implementation details. Appendix~\ref{app:additional_experiments} contains additional experiments.

An anonymized implementation is included in the supplementary material.

\section{Notation and full assumptions}
\label{app:notation_assumptions}

\paragraph{Observed data and potential outcomes.}
We observe \(Z=(X,A,Y)\sim P_0\), with binary treatment \(A\in\{0,1\}\). The potential outcomes are \(Y(0),Y(1)\in\Ycal\). Unless a split-sample construction is explicitly used, \(Z_1,\ldots,Z_n\iid P_0\).

\paragraph{Kernel and finite-location notation.}
The outcome kernel \(k_Y:\Ycal\times\Ycal\to\R\) has RKHS \(\Hv\) and feature map \(\varphi_Y(y)=k_Y(\cdot,y)\). Throughout, \(k_Y\) is measurable, positive definite, and bounded: \(\sup_y k_Y(y,y)\le\kappa^2<\infty\). For \(V=(v_1,\ldots,v_J)\in\Ycal^J\), write
\[
k_V(y):=k_Y(V,y):=(k_Y(v_1,y),\ldots,k_Y(v_J,y))^\top\in\R^J .
\]

\paragraph{Interventional embeddings and witness coordinates.}
For \(a\in\{0,1\}\), let \(\chi(a):=\E[\varphi_Y(Y(a))]\in\Hv\), \(\Delta:=\chi(1)-\chi(0)\), and
\[
w(y):=\langle\Delta,\varphi_Y(y)\rangle_{\Hv}
=
\E[k_Y(y,Y(1))]-\E[k_Y(y,Y(0))].
\]
The finite-location arm means and contrast are
\[
\mu_{a,V}:=\E[k_V(Y(a))],
\qquad
\mu_V:=\mu_{1,V}-\mu_{0,V}
=(w(v_1),\ldots,w(v_J))^\top\in\R^J .
\]

\paragraph{Observed-data nuisance functions.}
For \(a\in\{0,1\}\), define
\[
\pi_0(a\mid x):=P_0(A=a\mid X=x),
\qquad
m_a(x;V):=\E[k_V(Y)\mid A=a,X=x].
\]
A generic nuisance tuple is \(\eta=(\pi,r_0,r_1)\), and the true tuple is \(\eta_0=(\pi_0,m_0,m_1)\).

\paragraph{Orthogonal pseudo-features.}
For \(a\in\{0,1\}\), define
\[
\phi_V^a(Z;\eta)
:=
\frac{\1\{A=a\}}{\pi(a\mid X)}
\{k_V(Y)-r_a(X;V)\}
+r_a(X;V),
\qquad
z_V^{\dr}(Z;\eta):=\phi_V^1(Z;\eta)-\phi_V^0(Z;\eta).
\]
At the truth,
\[
\psi_V(Z):=z_V^{\dr}(Z;\eta_0)-\mu_V,
\qquad
\Sigma_V:=\Var_{P_0}\{\psi_V(Z)\}.
\]
Proposition~\ref{prop:dr_canonical_gradient} shows that \(\psi_V\) is the canonical gradient of \(\mu_V\) in the observed-data model.

\paragraph{Fixed-location statistic.}
For fixed \(V\), write the feasible pseudo-feature as
\[
\hat z_{i,V}^{\dr}:=z_V^{\dr}(Z_i;\hat\eta_{-i}),
\]
where \(\hat\eta_{-i}\) is trained on data independent of \(Z_i\). This covers both \(K\)-fold cross-fitting and the independent test split used after location learning. Define
\[
\bar z_{n,V}^{\dr}:=\frac1n\sum_{i=1}^n\hat z_{i,V}^{\dr},
\qquad
S_{n,V}^{\dr}:=
\frac{1}{n-1}\sum_{i=1}^n
(\hat z_{i,V}^{\dr}-\bar z_{n,V}^{\dr})
(\hat z_{i,V}^{\dr}-\bar z_{n,V}^{\dr})^\top .
\]
The fixed-location Hotelling statistic is
\[
\hat\lambda_{n,V}^{\dr}
:=
n\,\bar z_{n,V}^{\dr\top}
(S_{n,V}^{\dr}+\gamma_n I_J)^{-1}
\bar z_{n,V}^{\dr},
\qquad
\gamma_n\downarrow0 .
\]

\paragraph{Local asymptotic notation.}
Let \(\Pcal\) be a semiparametric model for the observed-data law. For a fixed \(V\), let \(P_0\in\Pcal\) satisfy \(\mu_V(P_0)=0\), and let \(T_{P_0}\subset L_2^0(P_0)\) be the tangent space. For a score direction \(g\in T_{P_0}\), let \(t\mapsto P_{t,g}\) be a regular quadratic-mean differentiable path through \(P_0\) with score \(g\), and define \(P_{n,h,g}:=P_{h/\sqrt n,g}^{\otimes n}\). The local drift and whitened local signal are
\[
\eta_V(g):=
\left.\frac{d}{dt}\right|_{t=0}\mu_V(P_{t,g})
=
\E_{P_0}\{\psi_V(Z)g(Z)\},
\qquad
\lambda_V(g):=\eta_V(g)^\top\Sigma_V^{-1}\eta_V(g).
\]

\paragraph{Location learning notation.}
The full sample is split as \(\{1,\ldots,n\}=I_\eta\cup I_{\tr}\cup I_{\te}\), with \(n_\eta=|I_\eta|\), \(n_{\tr}=|I_{\tr}|\), and \(n_{\te}=|I_{\te}|\). Nuisances are fitted on \(I_\eta\), locations are learned on \(I_{\tr}\), and the final test is computed on \(I_{\te}\). The population and empirical ridge-stabilized criteria are
\[
\mathcal P_\tau(V):=\mu_V^\top(\Sigma_V+\tau I_J)^{-1}\mu_V,
\qquad
\hat{\mathcal P}_{\tau,\tr}(V):=
\bar z_{\tr,V}^{\dr\top}
(S_{\tr,V}^{\dr}+\tau I_J)^{-1}
\bar z_{\tr,V}^{\dr}.
\]
For a learned location set \(\hat V\), define its empirical optimization gap
\[
\varepsilon_{\tr}(\hat V)
:=
\sup_{V\in\mathcal V}\hat{\mathcal P}_{\tau,\tr}(V)
-
\hat{\mathcal P}_{\tau,\tr}(\hat V),
\]
and the uniform learning error
\[
\Delta_{\tr}:=
\sup_{V\in\mathcal V}
\left|
\hat{\mathcal P}_{\tau,\tr}(V)-\mathcal P_\tau(V)
\right|.
\]

\paragraph{Norms.}
Unless stated otherwise, \(\|\cdot\|\) is the Euclidean norm, \(\|\cdot\|_F\) is the Frobenius norm, and \(\|\cdot\|_{L_2(P_X;\R^J)}\) is the \(L_2(P_X)\) norm for \(\R^J\)-valued functions.

\subsection{Observed-data identification}
\label{app:identification_assumptions}

\begin{assumption}[Observed-data identification]
\label{ass:identification_full}
For each \(a\in\{0,1\}\), the following hold.
\begin{enumerate}[label=(\roman*),leftmargin=1.1cm]
    \item \emph{Consistency}: \(Y=Y(a)\) whenever \(A=a\).
    \item \emph{Conditional exchangeability}: \(Y(a)\perp A\mid X\).
    \item \emph{Positivity}: there exists \(\varepsilon>0\) such that \(\pi_0(a\mid X)\ge\varepsilon\) almost surely.
\end{enumerate}
\end{assumption}

Under Assumption~\ref{ass:identification_full}, \(\mu_{a,V}=\E[m_a(X;V)]\) and hence \(\mu_V=\E[m_1(X;V)-m_0(X;V)]\).

\subsection{Fixed-location nuisance and covariance conditions}
\label{app:first_stage_assumptions}

Theorem~\ref{thm:first_order} uses the following fixed-location conditions. They are stated for \(K\)-fold cross-fitting; for an independent nuisance split, remove the maxima over folds and use the single nuisance estimate \(\hat\eta\).

\begin{assumption}[Fixed-location first-stage conditions]
\label{ass:fixedV_nuisance_full}
Fix \(V\in\Ycal^J\). The validation folds \(\mathcal I_1,\ldots,\mathcal I_K\) satisfy \(\min_k|\mathcal I_k|/n\ge c_K>0\), with fixed \(K\). For \(i\in\mathcal I_k\), the nuisance estimate \(\hat\eta_k=(\hat\pi_k,\hat m_{0,k},\hat m_{1,k})\) is trained outside fold \(k\).

There exist constants \(\underline\varepsilon>0\) and \(M<\infty\) such that, with probability tending to one,
\[
\inf_{k,a,x}\hat\pi_k(a\mid x)\ge\underline\varepsilon,
\qquad
\sup_{k,a,x}\|\hat m_{a,k}(x;V)\|\le M .
\]
Moreover,
\[
\max_k\sum_{a=0}^1
\left\{
\|\hat\pi_k(a\mid\cdot)-\pi_0(a\mid\cdot)\|_{L_2(P_X)}
+
\|\hat m_{a,k}(\cdot;V)-m_a(\cdot;V)\|_{L_2(P_X;\R^J)}
\right\}
=o_p(1),
\]
and
\[
\max_k\sum_{a=0}^1
\|\hat\pi_k(a\mid\cdot)-\pi_0(a\mid\cdot)\|_{L_2(P_X)}
\,
\|\hat m_{a,k}(\cdot;V)-m_a(\cdot;V)\|_{L_2(P_X;\R^J)}
=o_p(n^{-1/2}).
\]
\end{assumption}

\begin{assumption}[Nondegenerate fixed-location covariance]
\label{ass:nondegenerate_full}
For the fixed location set \(V\), the efficient covariance \(\Sigma_V=\Var_{P_0}\{\psi_V(Z)\}\) is positive definite.
\end{assumption}

Assumption~\ref{ass:nondegenerate_full} is only needed for inverse-covariance whitening and chi-square limits. If \(\Sigma_V\) is singular, one may work in the nonzero eigenspace, but we do not pursue that extension.

\subsection{Regular local paths}
\label{app:local_assumptions}

The local testing results are formulated along regular quadratic-mean differentiable paths. This is the primitive local smoothness condition; the LAN expansion used in the proofs follows from it.

\begin{assumption}[Regular QMD local path]
\label{ass:qmd_path_full}
For each fixed \(g\in T_{P_0}\), there exists a path \(t\mapsto P_{t,g}\subset\Pcal\) through \(P_0\) with densities \(p_{t,g}\) relative to a dominating measure \(\nu\), such that
\[
\int
\left(
\sqrt{p_{t,g}}
-
\sqrt{p_0}
-
\frac{t}{2}g\sqrt{p_0}
\right)^2
d\nu
=
o(t^2).
\]
\end{assumption}

Under Assumption~\ref{ass:qmd_path_full}, the standard LAN expansion holds for \(P_{n,h,g}=P_{h/\sqrt n,g}^{\otimes n}\):
\[
\log\frac{dP_{n,h,g}}{dP_0^{\otimes n}}
=
h\,\mathbb G_n g
-
\frac12 h^2\|g\|_{L_2(P_0)}^2
+
o_{P_0}(1),
\qquad
\mathbb G_n g:=\frac1{\sqrt n}\sum_{i=1}^n\{g(Z_i)-\E_{P_0}g(Z)\}.
\]
It also implies contiguity of \(P_{n,h,g}\) with respect to \(P_0^{\otimes n}\). Thus \(o_{P_0}(1)\) remainders in the fixed-location first-order expansion transfer to \(o_{P_{n,h,g}}(1)\) remainders along the local path.

For Corollary~\ref{cor:local_power_criterion}, we additionally assume that \(P\mapsto\Sigma_V(P)\) is continuous along \(t\mapsto P_{t,g}\) at \(t=0\), and that \(\Sigma_V(P_{t,g})\) remains nonsingular for \(t\) in a neighborhood of zero.

\subsection{Euclidean location-learning conditions}
\label{app:learning_assumptions}

Theorem~\ref{thm:location_learning} assumes \(\Ycal\subset\R^d\) and a compact search class \(\mathcal V\subset[-R,R]^{Jd}\). Let
\[
\Pi(\mathcal V)
:=
\{v\in\R^d:\ v=v_j \text{ for some } V=(v_1,\ldots,v_J)\in\mathcal V\}.
\]

\begin{assumption}[Euclidean location regularity]
\label{ass:loclearn_euclid_full}
There exist constants \(B_k,L_k,B_m,L_m<\infty\) such that the following hold.
\begin{enumerate}[label=(\roman*),leftmargin=1.1cm]
    \item For all \(v,v'\in\Pi(\mathcal V)\) and \(y\in\Ycal\),
    \[
    |k_Y(v,y)|\le B_k,
    \qquad
    |k_Y(v,y)-k_Y(v',y)|\le L_k\|v-v'\|.
    \]
    \item With probability tending to one, for all \(a\in\{0,1\}\), \(x\in\Xcal\), and \(v,v'\in\Pi(\mathcal V)\),
    \[
    |\hat m_a(x;v)|\le B_m,
    \qquad
    |\hat m_a(x;v)-\hat m_a(x;v')|\le L_m\|v-v'\|.
    \]
    \item The estimated propensity is uniformly bounded away from zero with probability tending to one:
    \[
    \inf_{a,x}\hat\pi(a\mid x)\ge\underline\varepsilon
    \]
    for some \(\underline\varepsilon>0\).
\end{enumerate}
\end{assumption}

The true regressions \(m_a(\cdot;v)\) inherit the boundedness and Lipschitz properties from the kernel by Jensen's inequality, so no separate smoothness assumption on \(m_a\) is needed for the learning theorem. The nuisance error entering Theorem~\ref{thm:location_learning} is
\[
\rho_{n_\eta}
:=
\sum_{a=0}^1
\left[
\|\hat\pi(a\mid\cdot)-\pi_0(a\mid\cdot)\|_{L_2(P_X)}
+
\sup_{v\in\Pi(\mathcal V)}
\|\hat m_a(\cdot;v)-m_a(\cdot;v)\|_{L_2(P_X)}
\right].
\]
The learning theorem only requires \(\rho_{n_\eta}=o_p(1)\), because it controls uniform consistency of the learning criterion rather than a root-\(n\) expansion of the final test statistic.

\section{Background on local asymptotic testing for fixed locations}
\label{app:background}

This appendix reviews the local asymptotic tools used in Sections~\ref{sec:finite_scores} and~\ref{sec:theory}. Once the location set \(V\) is fixed, the target is the finite-dimensional parameter \(\mu_V(P)\in\R^J\). Proposition~\ref{prop:dr_canonical_gradient} identifies its observed-data canonical gradient \(\psi_V\), and Theorem~\ref{thm:efficient_local_testing} studies the local testing problem generated by this gradient.

The relevant objects are the local drift \(\eta_V(g)\), the efficient covariance \(\Sigma_V\), and the whitened local signal
\[
\lambda_V(g):=\eta_V(g)^\top\Sigma_V^{-1}\eta_V(g).
\]
The role of this appendix is to explain why these three quantities govern both regular estimation of \(\mu_V\) and first-order local power of tests at the selected locations.

\subsection{QMD paths and the local Gaussian approximation}

The local theory is pathwise. We do not assume a finite-dimensional parametric model for the whole data-generating law. Instead, for each score direction \(g\in T_{P_0}\), Assumption~\ref{ass:qmd_path_full} postulates a regular quadratic-mean differentiable path \(t\mapsto P_{t,g}\) through \(P_0\). Quadratic mean differentiability means that, for densities \(p_{t,g}\) relative to a dominating measure \(\nu\),
\[
\int
\left(
\sqrt{p_{t,g}}-\sqrt{p_0}-\frac{t}{2}g\sqrt{p_0}
\right)^2d\nu=o(t^2).
\]
This condition is the standard local smoothness assumption behind Le Cam's asymptotic theory \citep{lecam2000asymptotics,vandervaart1998asymptotic,bickel1993efficient}.

For the contiguous alternatives \(P_{n,h,g}:=P_{h/\sqrt n,g}^{\otimes n}\), QMD implies the LAN expansion \citep[Theorem~7.2]{vandervaart1998asymptotic}
\[
\log\frac{dP_{n,h,g}}{dP_0^{\otimes n}}
=
h\,\mathbb G_n g
-
\frac12 h^2\|g\|_{L_2(P_0)}^2
+
o_{P_0}(1),
\qquad
\mathbb G_n g:=\frac1{\sqrt n}\sum_{i=1}^n g(Z_i),
\]
where \(g\in L_2^0(P_0)\). Thus, along each regular path, the original experiment is locally approximated by a Gaussian shift experiment. This is the only LAN input used in the paper. No least favorable submodel needs to be constructed for the main results; least favorable paths are useful for interpretation, but the proofs use arbitrary regular QMD paths and tangent-space projection.

\subsection{Le Cam's third lemma and the local drift}

Le Cam's third lemma describes how an asymptotically linear statistic shifts under a contiguous alternative. If
\[
T_n=\frac1{\sqrt n}\sum_{i=1}^n \phi(Z_i)+o_{P_0}(1),
\qquad
\E_{P_0}\phi(Z)=0,
\]
then, under \(P_{n,h,g}\),
\[
T_n\dto
N\!\left(
h\,\E_{P_0}\{\phi(Z)g(Z)\},
\Var_{P_0}\{\phi(Z)\}
\right).
\]
The covariance is unchanged to first order; the mean shifts in the direction correlated with the score \(g\).

For the fixed-location statistic, \(\phi=\psi_V\). Since \(\psi_V\) is the canonical gradient of \(\mu_V\), the pathwise derivative of \(\mu_V\) along \(g\) is
\[
\eta_V(g)
:=
\left.\frac{d}{dt}\right|_{t=0}\mu_V(P_{t,g})
=
\E_{P_0}\{\psi_V(Z)g(Z)\}.
\]
Thus \(\eta_V(g)\) is not an auxiliary definition. It is exactly the mean shift of the canonical-gradient statistic. If \(\eta_V(g)=0\), the selected locations are locally blind to direction \(g\); if \(\eta_V(g)\neq0\), the local alternative produces a first-order shift in the witness coordinates.

\subsection{Regularity, convolution, and the efficient finite-signal experiment}

In the nonparametric observed-data model, the canonical gradient of the finite
signal \(\mu_V\) is unique. Thus the efficiency statement is not a nontrivial
comparison among different regular influence functions for the same target.
Rather, it is a statement about the efficient limit experiment for regular
estimation of \(\mu_V\).

By the H\'ajek--Le Cam convolution theorem
\citep[Theorem~25.20]{vandervaart1998asymptotic}, if \(T_n\) is any regular
estimator of \(\mu_V\) and
\[
\sqrt n\{T_n-\mu_V(P_0)\}\dto L
\quad\text{under }P_0^{\otimes n},
\]
then
\[
L = N(0,\Sigma_V)*M
\]
for some probability law \(M\). The canonical-gradient estimator
\(\bar z_{n,V}^{\dr}\) attains the no-extra-noise case \(M=\delta_0\), since
Theorem~\ref{thm:first_order} gives
\[
\sqrt n(\bar z_{n,V}^{\dr}-\mu_V)
=
\frac1{\sqrt n}\sum_{i=1}^n\psi_V(Z_i)+o_p(1).
\]
Hence \(N(0,\Sigma_V)\) is the efficient Gaussian limit for the finite witness.
The covariance \(\Sigma_V\) is therefore not one possible normalization among
many; it is the covariance of the efficient finite-signal experiment.

\subsection{From efficient estimation to local testing geometry}

Under Theorem~\ref{thm:first_order} and the local path assumptions,
\[
\sqrt n\,\bar z_{n,V}^{\dr}
\dto
N(h\eta_V(g),\Sigma_V)
\qquad
\text{under } P_{n,h,g}.
\]
After whitening, the efficient finite-signal experiment is
\[
\Sigma_V^{-1/2}\sqrt n\,\bar z_{n,V}^{\dr}
\dto
N(h\Sigma_V^{-1/2}\eta_V(g),I_J).
\]

For any scalar contrast \(a^\top\mu_V\), the efficient squared local
signal-to-noise ratio along \(g\) is
\[
\frac{(a^\top\eta_V(g))^2}{a^\top\Sigma_Va}.
\]
If the local direction \(g\) were known, optimizing this quantity over
\(a\neq0\) gives the Rayleigh quotient
\[
\max_{a\neq0}
\frac{(a^\top\eta_V(g))^2}{a^\top\Sigma_Va}
=
\eta_V(g)^\top\Sigma_V^{-1}\eta_V(g)
=
\lambda_V(g),
\]
with optimizer proportional to \(\Sigma_V^{-1}\eta_V(g)\). This is the
directional Neyman--Pearson benchmark in the finite-signal Gaussian experiment.

When the direction is unknown, no uniformly most powerful test exists over all
drift directions in the multivariate Gaussian shift experiment. The natural
omnibus reduction is the whitened problem \(G\sim N(m,I_J)\), with
\(H_0:m=0\) and \(H_1:m\neq0\). In this limit experiment, the likelihood-ratio,
Wald, and score statistics all reduce to \(\|G\|^2\). Transported back to the
finite witness, this is the Hotelling statistic based on \(\Sigma_V^{-1}\),
with noncentrality \(h^2\lambda_V(g)\). This is why the population
location-learning criterion is
\[
\Lambda_V(P)
:=
\mu_V(P)^\top\Sigma_V(P)^{-1}\mu_V(P),
\]
and why the practical criterion uses
\(\mathcal P_\tau(V)=\mu_V^\top(\Sigma_V+\tau I_J)^{-1}\mu_V\).

\subsection{Relation to existing semiparametric kernel inference}

This paper uses kernel embeddings, but the local problem is different from global MMD testing with unknown nuisance functions. In the unknown-function MMD setting of \citet{luedtke2019omnibus}, the squared MMD has a degenerate first-order derivative under the null, and valid testing requires a second-order U-statistic analysis. Here, for fixed \(V\), the target \(\mu_V\in\R^J\) is first-order pathwise differentiable. The null limit is therefore a finite-dimensional chi-square law rather than a degenerate infinite weighted chi-square limit.

The paper is also related to Hilbert-valued semiparametric one-step estimation. General Hilbert-valued theory shows that, when an efficient influence function exists, one-step estimators can achieve root-\(n\) Hilbert-norm inference; counterfactual kernel mean embeddings are a key example \citep{luedtke2024one}. Our construction can be viewed as applying bounded linear evaluation maps to the interventional RKHS discrepancy, yielding a finite-dimensional parameter with canonical gradient \(\psi_V\). The contribution here is not Hilbert-valued estimation per se, but the testing geometry induced by this finite projection and its use for interpretable location learning.

\subsection{Why the orthogonal DR geometry is the relevant geometry}

The regression contrast \(m_1(X;V)-m_0(X;V)\) has mean \(\mu_V\). If \(m_0\) and \(m_1\) were known, its covariance could be smaller than \(\Sigma_V\). That comparison, however, corresponds to an oracle problem in which nuisance regressions are given. It is not the observed-data semiparametric problem faced by a regular procedure using flexible nuisance estimates.

A feasible plug-in regression contrast is first-order sensitive to regression error. Without stronger nuisance conditions, it does not provide the stable local expansion needed for regular local testing. By contrast, the orthogonal pseudo-feature \(z_V^{\dr}(Z;\eta)\) is built from the canonical gradient. Theorem~\ref{thm:first_order} gives
\[
\sqrt n(\bar z_{n,V}^{\dr}-\mu_V)
=
\frac1{\sqrt n}\sum_{i=1}^n\psi_V(Z_i)+o_p(1),
\qquad
S_{n,V}^{\dr}\to_p\Sigma_V.
\]
Thus the fixed-location test enters the local Gaussian experiment with the efficient observed-data score \(\psi_V\). Thus the fixed-location test enters the efficient finite-signal Gaussian experiment with drift \(\eta_V(g)\) and covariance \(\Sigma_V\). This is the covariance geometry used both for local testing and for location learning.

\section{Proofs for Section~\ref{sec:finite_scores}}
\label{app:finite_scores_proofs}

Throughout this appendix, fix \(V=(v_1,\ldots,v_J)\in\Ycal^J\) and write
\[
k_V(y):=k_Y(V,y)=\bigl(k_Y(v_1,y),\ldots,k_Y(v_J,y)\bigr)^\top\in\R^J .
\]
We use \(\|\cdot\|\) for the Euclidean norm on \(\R^J\). Since \(k_Y\) is bounded and \(J\) is fixed,
\[
B_V:=\sup_{y\in\Ycal}\|k_V(y)\|<\infty .
\]
Consequently, \(\|m_a(x;V)\|\le B_V\) for \(a\in\{0,1\}\) and all \(x\). We prove the cross-fitted case. The independent split case is the same argument with a single validation split independent of the nuisance-fitting sample.

\subsection{Identification}

\begin{proof}[Proof of the identification claim]
Fix \(a\in\{0,1\}\). By iterated expectation,
\[
\mu_{a,V}
=
\E\bigl[k_V(Y(a))\bigr]
=
\E\!\left[\E\{k_V(Y(a))\mid X\}\right].
\]
Conditional exchangeability gives
\[
\E\{k_V(Y(a))\mid X\}
=
\E\{k_V(Y(a))\mid A=a,X\},
\]
and consistency gives \(Y=Y(a)\) on \(\{A=a\}\). Hence
\[
\E\{k_V(Y(a))\mid A=a,X\}
=
\E\{k_V(Y)\mid A=a,X\}
=
m_a(X;V).
\]
Therefore \(\mu_{a,V}=\E[m_a(X;V)]\), and subtracting the two arm-specific identities yields
\[
\mu_V
=
\mu_{1,V}-\mu_{0,V}
=
\E\{m_1(X;V)-m_0(X;V)\}.
\]
\end{proof}

\subsection{A basic doubly robust identity}

\begin{lemma}[Doubly robust algebra]
\label{lem:dr_basic}
Fix \(a\in\{0,1\}\), let \(\eta=(\pi,r_0,r_1)\), and define
\[
D_a(Z;\eta):=\phi_V^a(Z;\eta)-\phi_V^a(Z;\eta_0),
\qquad
\eta_0=(\pi_0,m_0,m_1).
\]
Then
\begin{align}
D_a(Z;\eta)
&=
\left(1-\frac{\1\{A=a\}}{\pi_0(a\mid X)}\right)
\{r_a(X;V)-m_a(X;V)\}
\notag\\
&\quad
+
\1\{A=a\}
\left\{
\frac{1}{\pi(a\mid X)}
-
\frac{1}{\pi_0(a\mid X)}
\right\}
\{k_V(Y)-r_a(X;V)\},
\label{eq:dr_basic_decomp}
\end{align}
and
\begin{equation}
\label{eq:dr_basic_mean}
P_0D_a(\cdot;\eta)
=
P_0\left[
\frac{\pi_0(a\mid X)-\pi(a\mid X)}{\pi(a\mid X)}
\{m_a(X;V)-r_a(X;V)\}
\right].
\end{equation}
Moreover, on any event on which \(\pi(a\mid X)\ge \underline\pi>0\) almost surely and
\(\sup_x\|r_a(x;V)\|\le M\),
\begin{equation}
\label{eq:dr_basic_l2}
P_0\|D_a(\cdot;\eta)\|^2
\le
C\left\{
\|r_a(\cdot;V)-m_a(\cdot;V)\|_{L_2(P_X;\R^J)}^2
+
\|\pi(a\mid\cdot)-\pi_0(a\mid\cdot)\|_{L_2(P_X)}^2
\right\},
\end{equation}
where \(C<\infty\) depends only on the true positivity constant, \(\underline\pi\), \(B_V\), and \(M\).
\end{lemma}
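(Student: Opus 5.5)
The plan is to verify the three displays in order: pure algebra for the decomposition, then conditioning on \(X\) for the mean, then elementary \(L_2\) bounds for the last display.

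\emph{Decomposition.} Write \(I:=\1\{A=a\}\), \(r:=r_a(X;V)\), \(m:=m_a(X;V)\), \(\pi:=\pi(a\mid X)\), \(\pi_0:=\pi_0(a\mid X)\), \(k:=k_V(Y)\). Then
\[
D_a=\frac{I}{\pi}(k-r)+r-\frac{I}{\pi_0}(k-m)-m .
\]
Add and subtract \(\frac{I}{\pi_0}(k-r)\). This gives
\[
\frac{I}{\pi_0}(m-r)+(r-m)+I\Bigl(\frac1\pi-\frac1{\pi_0}\Bigr)(k-r),
\]
and the first two terms combine to \(\bigl(1-\frac{I}{\pi_0}\bigr)(r-m)\). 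This is exactly \eqref{eq:dr_basic_decomp}.

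\emph{Mean identity.} Condition on \(X\). We have \(\E[I\mid X]=\pi_0\), and \(\E[Ik\mid X]=\pi_0\,m\) by the definition of \(m_a\). Hence the first term has conditional mean zero. The second term has conditional mean
\[
\pi_0\Bigl(\frac1\pi-\frac1{\pi_0}\Bigr)(m-r)=\frac{\pi_0-\pi}{\pi}(m-r).
\]
Taking the outer expectation gives \eqref{eq:dr_basic_mean}. Integrability is not an issue because \(k\) is bounded and \(\pi>0\) almost surely.

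Combined with \(P_0\phi_V^a(\cdot;\eta_0)=\E[m_a(X;V)]=\mu_{a,V}\), which follows from the identification claim and \(\E[I(k-m)\mid X]=0\), this mean identity is also the bias identity of Proposition~\ref{prop:dr_canonical_gradient}.

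\emph{\(L_2\) bound.} Use \(\|u+w\|^2\le 2\|u\|^2+2\|w\|^2\) on the two terms of \eqref{eq:dr_basic_decomp}.

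For the first term, \(|1-I/\pi_0|\le 1+1/\varepsilon\) by positivity. Its contribution is therefore at most \(2(1+\varepsilon^{-1})^2\|r-m\|_{L_2(P_X;\R^J)}^2\).

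For the second term, write \(|1/\pi-1/\pi_0|=|\pi_0-\pi|/(\pi\pi_0)\le|\pi-\pi_0|/(\underline\pi\,\varepsilon)\). Also \(\|k-r\|\le B_V+M\) on the stated event. Its contribution is therefore at most \(2(B_V+M)^2(\underline\pi\varepsilon)^{-2}\|\pi-\pi_0\|_{L_2(P_X)}^2\). Since the factor multiplying \(|\pi-\pi_0|\) is uniformly bounded, this reduces to an expectation over \(X\) alone, giving an \(L_2(P_X)\) norm.

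Taking the larger of the two constants gives \eqref{eq:dr_basic_l2}, with \(C\) depending only on \(\varepsilon,\underline\pi,B_V,M\).

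\emph{Main obstacle.} None of the steps is deep. The only point needing care is the mean identity, specifically using \(\E[Ik_V(Y)\mid X]=\pi_0m_a\), which needs the definition of \(m_a\) but no causal assumptions. In the \(L_2\) step, the bounds used must be deterministic on the stated event, so that the estimate holds conditionally on the nuisance estimates.
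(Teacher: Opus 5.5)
Your proposal is correct and follows the paper's proof essentially step for step: the same regrouping for the decomposition, the same conditioning on \(X\) using \(\E[\1\{A=a\}k_V(Y)\mid X]=\pi_0(a\mid X)m_a(X;V)\) for the mean identity, and the same \((u+v)^2\le 2u^2+2v^2\) argument with true positivity and the event bounds for the \(L_2\) estimate, where you merely make the constant explicit. One small caveat is that your integrability remark is too quick: \(\pi>0\) a.s.\ does not make \(1/\pi(a\mid X)\) integrable, and \(r_a\) need not be bounded off the stated event, so the mean identity tacitly requires \(P_0\)-integrability of \(D_a\). That integrability is automatic on the event where \(\pi\ge\underline\pi\) and \(\sup_x\|r_a\|\le M\), which is where the paper actually uses the identity, and the paper is equally silent on this point.
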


\begin{proof}
Write \(I_a:=\1\{A=a\}\), \(\pi_a(X):=\pi(a\mid X)\), and \(\pi_{0,a}(X):=\pi_0(a\mid X)\). Then
\begin{align*}
D_a(Z;\eta)
&=
\frac{I_a}{\pi_a(X)}\{k_V(Y)-r_a(X;V)\}+r_a(X;V)
\\
&\quad
-
\frac{I_a}{\pi_{0,a}(X)}\{k_V(Y)-m_a(X;V)\}-m_a(X;V)
\\
&=
I_a\left\{\frac{1}{\pi_a(X)}-\frac{1}{\pi_{0,a}(X)}\right\}
\{k_V(Y)-r_a(X;V)\}
\\
&\quad
+
\left(1-\frac{I_a}{\pi_{0,a}(X)}\right)
\{r_a(X;V)-m_a(X;V)\},
\end{align*}
which proves \eqref{eq:dr_basic_decomp}.

Taking conditional expectation given \(X\), the second term in \eqref{eq:dr_basic_decomp} has mean zero. Also,
\[
\E\!\left[I_a\{k_V(Y)-r_a(X;V)\}\mid X\right]
=
\pi_{0,a}(X)\{m_a(X;V)-r_a(X;V)\}.
\]
Thus
\[
\E\{D_a(Z;\eta)\mid X\}
=
\frac{\pi_{0,a}(X)-\pi_a(X)}{\pi_a(X)}
\{m_a(X;V)-r_a(X;V)\},
\]
and \eqref{eq:dr_basic_mean} follows.

For the \(L_2\) bound, use \((u+v)^2\le 2u^2+2v^2\) in \eqref{eq:dr_basic_decomp}. True positivity implies
\[
\E\!\left[\left.\left|1-\frac{I_a}{\pi_{0,a}(X)}\right|^2\right|X\right]\le C,
\]
so the first squared term is bounded by
\[
C\,\|r_a(\cdot;V)-m_a(\cdot;V)\|_{L_2(P_X;\R^J)}^2.
\]
On the stated event,
\[
\left|
\frac{1}{\pi_a(X)}-\frac{1}{\pi_{0,a}(X)}
\right|
\le
C|\pi_a(X)-\pi_{0,a}(X)|,
\qquad
\|k_V(Y)-r_a(X;V)\|\le B_V+M.
\]
Hence the second squared term is bounded by
\[
C\,\|\pi(a\mid\cdot)-\pi_0(a\mid\cdot)\|_{L_2(P_X)}^2.
\]
Combining the two bounds proves \eqref{eq:dr_basic_l2}.
\end{proof}

\subsection{Proof of Proposition~\ref{prop:dr_canonical_gradient}}

\begin{proof}[Proof of Proposition~\ref{prop:dr_canonical_gradient}]
The bias identity is exactly \eqref{eq:dr_basic_mean}. Since
\[
\E\{\phi_V^a(Z;\eta)\}-\mu_{a,V}
=
P_0D_a(\cdot;\eta),
\]
we obtain
\[
\E\{\phi_V^a(Z;\eta)\}-\mu_{a,V}
=
\E\left[
\frac{\pi_0(a\mid X)-\pi(a\mid X)}{\pi(a\mid X)}
\{m_a(X;V)-r_a(X;V)\}
\right].
\]
Therefore \(\E\{\phi_V^a(Z;\eta)\}=\mu_{a,V}\) if either \(\pi(a\mid\cdot)=\pi_0(a\mid\cdot)\) or \(r_a(\cdot;V)=m_a(\cdot;V)\). Subtracting the two arm-specific identities gives the corresponding double-robustness statement for \(z_V^{\dr}\).

It remains to identify the canonical gradient. We prove the arm-specific statement; the contrast follows by linearity. Consider a regular path \(t\mapsto P_t\) through \(P_0\) with score \(s\in T_{P_0}\). Write
\[
m_{a,t}(x;V):=\E_t[k_V(Y)\mid A=a,X=x],
\qquad
\mu_{a,V}(P_t):=\E_t[m_{a,t}(X;V)].
\]
Let \(s_X(X):=\E[s(Z)\mid X]\). Standard conditional-score calculus gives
\[
\left.\frac{d}{dt}\right|_{t=0}\mu_{a,V}(P_t)
=
\E\!\left[m_a(X;V)s_X(X)\right]
+
\E\!\left[
\E\{(k_V(Y)-m_a(X;V))s(Z)\mid A=a,X\}
\right].
\]
The first term can be written as
\[
\E\!\left[m_a(X;V)s_X(X)\right]
=
\E\!\left[\{m_a(X;V)-\mu_{a,V}\}s(Z)\right],
\]
because \(\E[s(Z)]=0\). The second term can be represented in observed-data form as
\[
\E\!\left[
\frac{\1\{A=a\}}{\pi_0(a\mid X)}
\{k_V(Y)-m_a(X;V)\}s(Z)
\right].
\]
Hence
\[
\left.\frac{d}{dt}\right|_{t=0}\mu_{a,V}(P_t)
=
\E\!\left[
\left\{
\frac{\1\{A=a\}}{\pi_0(a\mid X)}
\{k_V(Y)-m_a(X;V)\}
+
m_a(X;V)-\mu_{a,V}
\right\}
s(Z)
\right].
\]
Thus the arm-specific canonical gradient is
\[
\phi_V^a(Z;\eta_0)-\mu_{a,V}.
\]
Since \(\mu_V=\mu_{1,V}-\mu_{0,V}\), the canonical gradient of \(\mu_V\) is
\[
\{\phi_V^1(Z;\eta_0)-\mu_{1,V}\}
-
\{\phi_V^0(Z;\eta_0)-\mu_{0,V}\}
=
z_V^{\dr}(Z;\eta_0)-\mu_V
=
\psi_V(Z).
\]
\end{proof}

\subsection{Proof of Theorem~\ref{thm:first_order}}

\begin{proof}[Proof of Theorem~\ref{thm:first_order}]
Let \(\mathcal I_1,\ldots,\mathcal I_K\) be the validation folds, let \(n_k:=|\mathcal I_k|\), and let \(P_{n,k}\) denote the empirical measure on fold \(k\). For \(i\in\mathcal I_k\), write
\[
\hat z_i:=z_V^{\dr}(Z_i;\hat\eta_k),
\qquad
z_0(Z_i):=z_V^{\dr}(Z_i;\eta_0),
\qquad
\psi_V(Z_i)=z_0(Z_i)-\mu_V.
\]
Because \(K\) is fixed, all \(o_p(\cdot)\) statements below may be read uniformly over \(k\).

Let \(\mathcal E_n\) be the event on which the fitted propensities are uniformly bounded away from zero and the fitted regressions are uniformly bounded, as required by Assumption~\ref{ass:fixedV_nuisance_full}. Then \(P_0(\mathcal E_n)\to1\). On this event, define
\[
d_{a,k}(Z):=\phi_V^a(Z;\hat\eta_k)-\phi_V^a(Z;\eta_0),
\qquad
d_k(Z):=d_{1,k}(Z)-d_{0,k}(Z),
\]
and
\[
\alpha_{a,k}:=
\|\hat\pi_k(a\mid\cdot)-\pi_0(a\mid\cdot)\|_{L_2(P_X)},
\qquad
\beta_{a,k}:=
\|\hat m_{a,k}(\cdot;V)-m_a(\cdot;V)\|_{L_2(P_X;\R^J)}.
\]
By Lemma~\ref{lem:dr_basic} and Cauchy--Schwarz,
\begin{align}
\|P_0d_k\|
&\le
C\sum_{a=0}^1\alpha_{a,k}\beta_{a,k},
\label{eq:dkmean_bound}
\\
P_0\|d_k\|^2
&\le
C\sum_{a=0}^1(\alpha_{a,k}^2+\beta_{a,k}^2).
\label{eq:dkL2_bound}
\end{align}
The assumed nuisance consistency and product-rate conditions therefore imply, uniformly over \(k\),
\[
\|P_0d_k\|=o_p(n^{-1/2}),
\qquad
P_0\|d_k\|^2=o_p(1).
\]

\paragraph{First-order expansion.}
Since
\[
\bar z_{n,V}^{\dr}
=
\sum_{k=1}^K\frac{n_k}{n}P_{n,k}(z_0+d_k),
\]
we have
\[
\sqrt n(\bar z_{n,V}^{\dr}-\mu_V)
=
\frac1{\sqrt n}\sum_{i=1}^n\psi_V(Z_i)
+
R_{1n}
+
R_{2n},
\]
where
\[
R_{1n}:=
\sqrt n\sum_{k=1}^K\frac{n_k}{n}(P_{n,k}-P_0)d_k,
\qquad
R_{2n}:=
\sqrt n\sum_{k=1}^K\frac{n_k}{n}P_0d_k.
\]
For \(R_{2n}\), \eqref{eq:dkmean_bound} and the product-rate condition give
\[
\|R_{2n}\|
\le
C\sqrt n\sum_{k=1}^K\frac{n_k}{n}\sum_{a=0}^1\alpha_{a,k}\beta_{a,k}
=
o_p(1).
\]

For \(R_{1n}\), condition on the training data used to construct \(\hat\eta_k\). Then \(d_k\) is fixed and independent of the validation observations in fold \(k\), so
\[
\E_0\!\left[
\left.
\left\|
\sqrt n\,\frac{n_k}{n}(P_{n,k}-P_0)d_k
\right\|^2
\right|\hat\eta_k
\right]
\le
\frac{n_k}{n}P_0\|d_k\|^2
\le
P_0\|d_k\|^2
=
o_p(1).
\]
A conditional Markov inequality gives each fold contribution as \(o_p(1)\). Since \(K\) is fixed, \(R_{1n}=o_p(1)\). Hence
\[
\sqrt n(\bar z_{n,V}^{\dr}-\mu_V)
=
\frac1{\sqrt n}\sum_{i=1}^n\psi_V(Z_i)+o_p(1).
\]

\paragraph{Covariance consistency.}
Let
\[
M_n:=\frac1n\sum_{i=1}^n\hat z_i\hat z_i^\top,
\qquad
M_0:=P_0[z_0z_0^\top].
\]
The ordinary law of large numbers gives
\[
\frac1n\sum_{i=1}^n z_0(Z_i)z_0(Z_i)^\top\to_p M_0,
\]
because \(z_0\) is bounded under the bounded-kernel and positivity assumptions. It remains to show that replacing \(z_0(Z_i)\) by \(\hat z_i\) is negligible. Define
\[
A_n
:=
\frac1n\sum_{i=1}^n\|\hat z_i-z_0(Z_i)\|^2
=
\sum_{k=1}^K\frac{n_k}{n}P_{n,k}\|d_k\|^2.
\]
Conditionally on the nuisance fits,
\[
\E_0[A_n\mid \hat\eta_1,\ldots,\hat\eta_K]
=
\sum_{k=1}^K\frac{n_k}{n}P_0\|d_k\|^2
=
o_p(1),
\]
so \(A_n=o_p(1)\) by conditional Markov. Therefore,
\begin{align*}
\left\|
M_n-\frac1n\sum_{i=1}^n z_0(Z_i)z_0(Z_i)^\top
\right\|_F
&\le
A_n
+
2A_n^{1/2}
\left(\frac1n\sum_{i=1}^n\|z_0(Z_i)\|^2\right)^{1/2}
\\
&=
o_p(1).
\end{align*}
Thus \(M_n\to_p M_0\). Since the first-order expansion implies \(\bar z_{n,V}^{\dr}\to_p\mu_V\),
\[
S_{n,V}^{\dr}
=
\frac{n}{n-1}
\left\{
M_n-\bar z_{n,V}^{\dr}\bar z_{n,V}^{\dr\top}
\right\}
\to_p
M_0-\mu_V\mu_V^\top
=
\Var_{P_0}\{z_V^{\dr}(Z;\eta_0)\}
=
\Sigma_V.
\]
If \(\Sigma_V\) is positive definite and \(\gamma_n\downarrow0\), then
\[
S_{n,V}^{\dr}+\gamma_n I_J\to_p\Sigma_V,
\]
and continuity of matrix inversion at positive definite matrices gives
\[
(S_{n,V}^{\dr}+\gamma_n I_J)^{-1}\to_p\Sigma_V^{-1}.
\]
\end{proof}




\section{Proofs for Section~\ref{sec:theory}}
\label{app:local_proofs}

Throughout this appendix, \(V\in\Ycal^J\) is fixed and \(P_0\in\Pcal\) satisfies
\[
\mu_V(P_0)=0.
\]
Let \(t\mapsto P_{t,g}\) be a quadratic-mean differentiable regular path
through \(P_0\) with score \(g\in T_{P_0}\), and let
\[
P_{n,h,g}:=P_{h/\sqrt n,g}^{\otimes n}.
\]
By the standard QMD-to-LAN implication for i.i.d. experiments
\citep[Theorem~7.2]{vandervaart1998asymptotic},
\begin{equation}
\label{eq:LAN_local_proofs}
\ell_{n,h,g}
:=
\log\frac{dP_{n,h,g}}{dP_0^{\otimes n}}
=
h\,\mathbb G_n g
-
\frac12h^2\|g\|_{L_2(P_0)}^2
+
o_{P_0}(1),
\end{equation}
where
\[
\mathbb G_n g
:=
\frac1{\sqrt n}\sum_{i=1}^n g(Z_i),
\]
and \(P_{n,h,g}\) is contiguous with respect to \(P_0^{\otimes n}\). Since
\(g\in T_{P_0}\subset L_2^0(P_0)\), \(\E_{P_0}\{g(Z)\}=0\).

\subsection{A contiguity transfer lemma}

\begin{lemma}[Contiguity transfer]
\label{lem:contiguity_transfer}
Let \(Q_n\) be contiguous with respect to \(P_0^{\otimes n}\). If
\(R_n=o_{P_0}(1)\), then \(R_n=o_{Q_n}(1)\). If \(A_n\to_{P_0}A\), then
\(A_n\to_{Q_n}A\). In particular, if \(A\) is positive definite and
\(\gamma_n\downarrow0\), then
\[
(A_n+\gamma_n I)^{-1}\to_{Q_n} A^{-1}.
\]
\end{lemma}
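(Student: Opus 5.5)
The plan is to reduce everything to the definition of contiguity, which says: if \(Q_n\) is contiguous with respect to \(P_0^{\otimes n}\), then any sequence of events \(E_n\) with \(P_0^{\otimes n}(E_n)\to0\) also satisfies \(Q_n(E_n)\to0\). I will apply this first to the remainder, then to convergence in probability, and finally to the matrix inverse.

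\emph{Remainders.} Fix \(\epsilon>0\) and set \(E_n:=\{\|R_n\|>\epsilon\}\). The hypothesis \(R_n=o_{P_0}(1)\) means exactly that \(P_0^{\otimes n}(E_n)\to0\). Contiguity then gives \(Q_n(E_n)\to0\). Since \(\epsilon\) was arbitrary, \(R_n=o_{Q_n}(1)\).

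\emph{Convergence in probability.} Here the limit \(A\) is deterministic; this is the case used for \(\Sigma_V\) and \(\Sigma_V^{-1}\). Apply the previous step to \(R_n:=A_n-A\), measured in any matrix norm (the Frobenius norm, say). This gives \(A_n\to_{Q_n}A\).

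\emph{Inverse.} Since \(\gamma_n\) is deterministic and tends to zero, \(A_n+\gamma_nI\to_{Q_n}A\). Matrix inversion \(B\mapsto B^{-1}\) is continuous on an open neighbourhood \(U\) of \(A\) consisting of positive definite matrices. On the event \(\{A_n+\gamma_nI\in U\}\), whose \(Q_n\)-probability tends to one, the inverse is well defined. Outside that event, the value assigned to the inverse is irrelevant for convergence in probability. The continuous mapping theorem for convergence in probability to a constant then yields \((A_n+\gamma_nI)^{-1}\to_{Q_n}A^{-1}\). To make this step fully explicit, I would use that for every \(\delta>0\) there is an \(\epsilon>0\) such that \(\|B-A\|<\epsilon\) implies \(\|B^{-1}-A^{-1}\|<\delta\), and then bound the failure probability by \(Q_n(\|A_n+\gamma_nI-A\|\ge\epsilon)\to0\).

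There is no serious obstacle in this argument. The only points needing care are that the limit must be a constant, since contiguity transfers statements about events of vanishing probability and not arbitrary weak limits, and that the inverse must be well defined with probability tending to one.
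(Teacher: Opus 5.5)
Your proposal is correct and follows the paper's proof essentially step for step: use contiguity to transfer the vanishing-probability events \(\{\|R_n\|>\epsilon\}\) and \(\{\|A_n-A\|>\epsilon\}\), absorb the deterministic \(\gamma_n I\), and conclude by continuity of matrix inversion at the positive definite limit. Your explicit check that the inverse is well defined with \(Q_n\)-probability tending to one is a detail the paper leaves implicit.
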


\begin{proof}
For any \(\epsilon>0\),
\[
P_0^{\otimes n}(\|R_n\|>\epsilon)\to0.
\]
Contiguity implies
\[
Q_n(\|R_n\|>\epsilon)\to0,
\]
so \(R_n=o_{Q_n}(1)\). The convergence \(A_n\to A\) transfers by applying the
same argument to the events
\[
\{\|A_n-A\|>\epsilon\}.
\]
Since \(\gamma_n\downarrow0\), \(A_n+\gamma_n I\to_{Q_n}A\). The inverse
statement follows from continuity of matrix inversion at positive definite
matrices.
\end{proof}

\subsection{Proof of Theorem~\ref{thm:efficient_local_testing}}

\begin{proof}[Proof of Theorem~\ref{thm:efficient_local_testing}]
Define
\[
W_n:=\frac1{\sqrt n}\sum_{i=1}^n\psi_V(Z_i).
\]
Since \(\mu_V(P_0)=0\), the first-order representation in
Theorem~\ref{thm:first_order} gives, under \(P_0^{\otimes n}\),
\begin{equation}
\label{eq:AL_transfer_proof}
\sqrt n\,\bar z_{n,V}^{\dr}
=
W_n+r_n,
\qquad
r_n=o_{P_0}(1),
\end{equation}
and
\[
S_{n,V}^{\dr}\to_{P_0}\Sigma_V.
\]
By QMD, \(P_{n,h,g}\) is contiguous with respect to \(P_0^{\otimes n}\).
Lemma~\ref{lem:contiguity_transfer} therefore implies
\begin{equation}
\label{eq:AL_under_local}
r_n=o_{P_{n,h,g}}(1),
\qquad
S_{n,V}^{\dr}\to_{P_{n,h,g}}\Sigma_V,
\end{equation}
and, since \(\Sigma_V\succ0\) and \(\gamma_n\downarrow0\),
\[
(S_{n,V}^{\dr}+\gamma_n I_J)^{-1}\to_{P_{n,h,g}}\Sigma_V^{-1}.
\]

We next determine the local law of \(W_n\). Under \(P_0^{\otimes n}\), the
joint vector
\[
(W_n,\mathbb G_n g)
=
\left(
\frac1{\sqrt n}\sum_{i=1}^n\psi_V(Z_i),
\frac1{\sqrt n}\sum_{i=1}^n g(Z_i)
\right)
\]
converges by the multivariate central limit theorem to a centered Gaussian
vector \((W,G)\) with covariance
\[
\begin{pmatrix}
\Sigma_V & \eta_V(g) \\
\eta_V(g)^\top & \|g\|_{L_2(P_0)}^2
\end{pmatrix}.
\]
Indeed,
\[
\Cov_{P_0}\{\psi_V(Z),g(Z)\}
=
\E_{P_0}\{\psi_V(Z)g(Z)\}
=
\eta_V(g).
\]
Combining this joint central limit theorem with the LAN expansion
\eqref{eq:LAN_local_proofs}, Le Cam's third lemma gives, under \(P_{n,h,g}\),
\[
W_n\dto N(h\eta_V(g),\Sigma_V).
\]
Together with \eqref{eq:AL_under_local}, this yields
\[
\sqrt n\,\bar z_{n,V}^{\dr}
\dto
N(h\eta_V(g),\Sigma_V)
\qquad
\text{under }P_{n,h,g}.
\]

For the quadratic statistic, write
\[
\hat\lambda_{n,V}^{\dr}
=
(\sqrt n\,\bar z_{n,V}^{\dr})^\top
(S_{n,V}^{\dr}+\gamma_n I_J)^{-1}
(\sqrt n\,\bar z_{n,V}^{\dr}).
\]
By Slutsky's theorem,
\[
\hat\lambda_{n,V}^{\dr}
\dto
Y^\top\Sigma_V^{-1}Y,
\qquad
Y\sim N(h\eta_V(g),\Sigma_V).
\]
Equivalently, if \(Z\sim N(0,I_J)\), then
\[
Y=\Sigma_V^{1/2}Z+h\eta_V(g),
\]
and therefore
\[
Y^\top\Sigma_V^{-1}Y
=
\left\|
Z+h\Sigma_V^{-1/2}\eta_V(g)
\right\|^2.
\]
Thus
\[
\hat\lambda_{n,V}^{\dr}
\dto
\chi^2_J\!\left(
h^2\eta_V(g)^\top\Sigma_V^{-1}\eta_V(g)
\right)
=
\chi^2_J(h^2\lambda_V(g)).
\]
The whitened representation follows from the same argument:
\[
\Sigma_V^{-1/2}\sqrt n\,\bar z_{n,V}^{\dr}
\dto
N(h\Sigma_V^{-1/2}\eta_V(g),I_J),
\]
and
\[
\hat\lambda_{n,V}^{\dr}
=
\left\|
\Sigma_V^{-1/2}\sqrt n\,\bar z_{n,V}^{\dr}
\right\|^2
+
o_{P_{n,h,g}}(1).
\]
\end{proof}

\subsection{Proof of Proposition~\ref{cor:local_power_criterion}}

\begin{proof}[Proof of Proposition~\ref{cor:local_power_criterion}]
Let \(t_n:=h/\sqrt n\). Since \(\mu_V\) is pathwise differentiable at \(P_0\)
along \(t\mapsto P_{t,g}\), with derivative \(\eta_V(g)\), and since
\(\mu_V(P_0)=0\),
\[
\mu_V(P_{t,g})
=
t\,\eta_V(g)+o(t)
\qquad
\text{as }t\to0.
\]
Substituting \(t=t_n\) gives
\[
\mu_V(P_{h/\sqrt n,g})
=
\frac{h}{\sqrt n}\eta_V(g)+o(n^{-1/2}).
\]

Let
\[
\Sigma_n:=\Sigma_V(P_{t_n,g}).
\]
By continuity along the path and nonsingularity in a neighborhood of \(P_0\),
\[
\Sigma_n\to\Sigma_V(P_0)=\Sigma_V,
\qquad
\Sigma_n^{-1}\to\Sigma_V^{-1}.
\]
Set
\[
u_n:=\sqrt n\,\mu_V(P_{t_n,g}).
\]
Then
\[
u_n=h\eta_V(g)+o(1).
\]
Therefore
\[
n\,\Lambda_V(P_{t_n,g})
=
u_n^\top\Sigma_n^{-1}u_n
\to
h^2\,\eta_V(g)^\top\Sigma_V^{-1}\eta_V(g)
=
h^2\lambda_V(g).
\]
\end{proof}

\subsection{Proof of Corollary~\ref{cor:split_calibration}}

\begin{proof}[Proof of Corollary~\ref{cor:split_calibration}]
Under the global null \(H_0:\Delta=0\), every finite-location signal is zero:
\[
\mu_V(P_0)=0
\qquad
\text{for all }V\in\Ycal^J.
\]
In particular, conditional on the learning split,
\[
\mu_{\hat V}(P_0)=0.
\]
Conditional on the learning split, the selected \(\hat V\) is fixed relative to
the final testing data. The final statistic is therefore a fixed-location
statistic at \(V=\hat V\), computed with \(\gamma_{n_{\te}}\downarrow0\). By
the assumed conditional fixed-location first-order representation and covariance
consistency, together with the multivariate central limit theorem and Slutsky's
theorem,
\[
\hat\lambda_{n_{\te},\hat V}^{\dr}
\dto
\chi^2_J
\qquad
\text{conditionally on the learning split}.
\]
Thus the conditional rejection probability converges to \(\alpha\) at the
chi-square critical value. Since the rejection indicator is bounded and the
limiting rejection probability is nonrandom, the unconditional rejection
probability also converges to \(\alpha\).
\end{proof}

\section{Additional local-theory refinements}
\label{app:local_results}

Throughout this appendix, fix \(V\in\Ycal^J\) and let \(P_0\in\Pcal\) satisfy
\[
\mu_V(P_0)=0.
\]
Write \(T_{P_0}\) for the tangent space at \(P_0\). For \(g\in T_{P_0}\), let
\[
P_{n,h,g}:=P_{h/\sqrt n,g}^{\otimes n}
\]
denote the corresponding contiguous local alternatives. Recall that
\[
\psi_V(Z)=z_V^{\dr}(Z;\eta_0)-\mu_V
\]
is the canonical gradient of the fixed-location signal \(\mu_V\), and define
\[
\Sigma_V:=\Var_{P_0}\{\psi_V(Z)\},
\qquad
\eta_V(g):=\E_{P_0}\{\psi_V(Z)g(Z)\},
\qquad
\lambda_V(g):=\eta_V(g)^\top\Sigma_V^{-1}\eta_V(g).
\]
The main text identifies the efficient finite-signal Gaussian experiment and the
omnibus Hotelling statistic. This appendix records the corresponding
known-direction benchmark, which explains the role of
\(\lambda_V(g)\) as a directional local signal-to-noise ratio.

\subsection{Known-direction score benchmark}
\label{app:directional_benchmark}

Theorem~\ref{thm:efficient_local_testing} treats the omnibus alternative
\(\mu_V\neq0\). If the local direction \(g\) were known, the finite-signal
Gaussian experiment also gives a one-dimensional Neyman--Pearson benchmark.

\begin{corollary}[Directional efficient score test]
\label{cor:directional_score_app}
Assume the conditions of Theorem~\ref{thm:efficient_local_testing}, and suppose
that \(\eta_V(g)\neq0\). Define
\[
L_{n,V,g}
:=
\frac{
\eta_V(g)^\top
\bigl(S_{n,V}^{\dr}+\gamma_n I_J\bigr)^{-1}
\sqrt n\,\bar z_{n,V}^{\dr}
}{
\left[
\eta_V(g)^\top
\bigl(S_{n,V}^{\dr}+\gamma_n I_J\bigr)^{-1}
\eta_V(g)
\right]^{1/2}
}.
\]
Then, under \(P_0^{\otimes n}\),
\[
L_{n,V,g}\dto N(0,1),
\]
and, under \(P_{n,h,g}\),
\[
L_{n,V,g}\dto N\bigl(h\sqrt{\lambda_V(g)},1\bigr).
\]
Consequently, the one-sided level-\(\alpha\) rule that rejects for
\[
L_{n,V,g}>z_{1-\alpha}
\]
has asymptotic power
\[
1-\Phi\bigl(z_{1-\alpha}-h\sqrt{\lambda_V(g)}\bigr).
\]
Moreover, in the finite-signal Gaussian experiment
\[
Y\sim N(h\eta_V(g),\Sigma_V),
\qquad h\in\R,
\]
this directional rule is Neyman--Pearson optimal for testing \(h=0\) against any
fixed simple alternative \(h=h_1>0\).
\end{corollary}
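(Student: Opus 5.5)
The plan is to reduce everything to the Gaussian limit already established in Theorem~\ref{thm:efficient_local_testing} and then apply the continuous mapping theorem. Write \(\eta:=\eta_V(g)\neq0\) and \(\hat\Omega_n:=(S_{n,V}^{\dr}+\gamma_n I_J)^{-1}\). By Theorem~\ref{thm:first_order} together with Lemma~\ref{lem:contiguity_transfer}, \(\hat\Omega_n\to\Sigma_V^{-1}\) in probability both under \(P_0^{\otimes n}\) and under the contiguous alternatives \(P_{n,h,g}\). Since \(\Sigma_V^{-1}\succ0\) and \(\eta\neq0\), the denominator satisfies \(\eta^\top\hat\Omega_n\eta\to\eta^\top\Sigma_V^{-1}\eta=\lambda_V(g)>0\) in probability. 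Hence the map
\[
(y,\Omega)\mapsto \eta^\top\Omega y\,/\,(\eta^\top\Omega\eta)^{1/2}
\]
is continuous at every \((y,\Sigma_V^{-1})\).

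\emph{Step 1: limiting laws.} Theorem~\ref{thm:efficient_local_testing} gives \(\sqrt n\,\bar z_{n,V}^{\dr}\dto Y\sim N(h\eta,\Sigma_V)\) under \(P_{n,h,g}\); taking \(h=0\) gives the law under \(P_0^{\otimes n}\). Slutsky's theorem and the continuous mapping theorem then give
\[
L_{n,V,g}\dto \frac{\eta^\top\Sigma_V^{-1}Y}{\lambda_V(g)^{1/2}}.
\]
This limit is a linear functional of a Gaussian vector. Its mean is \(h\,\eta^\top\Sigma_V^{-1}\eta/\lambda_V(g)^{1/2}=h\sqrt{\lambda_V(g)}\), and its variance is \(\eta^\top\Sigma_V^{-1}\Sigma_V\Sigma_V^{-1}\eta/\lambda_V(g)=1\). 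This yields both \(N(0,1)\) under the null and \(N(h\sqrt{\lambda_V(g)},1)\) under the local alternatives.

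\emph{Step 2: size and power.} The limit law \(N(\mu,1)\) is continuous, so the Portmanteau theorem gives \(P(L_{n,V,g}>z_{1-\alpha})\to 1-\Phi(z_{1-\alpha}-\mu)\). With \(\mu=0\) this shows the rule has asymptotic level \(\alpha\), and with \(\mu=h\sqrt{\lambda_V(g)}\) it gives the stated asymptotic power.

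\emph{Step 3: Neyman--Pearson optimality.} In the limit experiment \(Y\sim N(h\eta,\Sigma_V)\), the log-likelihood ratio of \(h_1\) against \(0\) is
\[
h_1\eta^\top\Sigma_V^{-1}Y-\tfrac12h_1^2\lambda_V(g).
\]
For \(h_1>0\) this is strictly increasing in \(T(Y):=\eta^\top\Sigma_V^{-1}Y/\lambda_V(g)^{1/2}\). Under \(h=0\) we have \(T\sim N(0,1)\), so the Neyman--Pearson lemma says the most powerful level-\(\alpha\) test rejects exactly when \(T>z_{1-\alpha}\). Because this rejection region does not depend on \(h_1\), the test is in fact uniformly most powerful against all \(h_1>0\). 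The limit of \(L_{n,V,g}\) is precisely \(T(Y)\), which identifies the directional rule with this optimal test.

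The only delicate point is the nondegeneracy of the denominator. It follows from \(\eta\neq0\) and \(\Sigma_V\succ0\) via the contiguity transfer of the covariance convergence. Everything else is routine.
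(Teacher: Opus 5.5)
Your proposal is correct and follows essentially the same route as the paper. Both arguments take the Gaussian limit of Theorem~\ref{thm:efficient_local_testing} together with consistency of $(S_{n,V}^{\dr}+\gamma_n I_J)^{-1}$, obtain the normal limit of $L_{n,V,g}$ by Slutsky, and read off the Neyman--Pearson test from the Gaussian log-likelihood ratio $h_1\eta_V(g)^\top\Sigma_V^{-1}Y-\tfrac12h_1^2\lambda_V(g)$. You are slightly more explicit on a few points: the contiguity transfer under $P_{n,h,g}$, positivity of the limiting denominator, the Portmanteau step for the power formula, and the observation that the rejection region does not depend on $h_1$, so the test is UMP. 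These are refinements, not a different argument.
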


\begin{proof}
Let
\[
A_n:=\bigl(S_{n,V}^{\dr}+\gamma_n I_J\bigr)^{-1},
\qquad
W_n:=\sqrt n\,\bar z_{n,V}^{\dr}.
\]
By Theorem~\ref{thm:efficient_local_testing},
\[
W_n\dto N\bigl(h\eta_V(g),\Sigma_V\bigr)
\qquad\text{under }P_{n,h,g},
\]
and
\[
A_n\to_p\Sigma_V^{-1}.
\]
Therefore, by Slutsky's theorem,
\[
\eta_V(g)^\top A_n W_n
=
\eta_V(g)^\top \Sigma_V^{-1}W_n
+
o_{P_{n,h,g}}(1).
\]
Hence
\[
\eta_V(g)^\top A_n W_n
\dto
N\!\left(
h\,\eta_V(g)^\top\Sigma_V^{-1}\eta_V(g),
\,
\eta_V(g)^\top\Sigma_V^{-1}\Sigma_V\Sigma_V^{-1}\eta_V(g)
\right),
\]
or equivalently,
\[
\eta_V(g)^\top A_n W_n
\dto
N\bigl(h\lambda_V(g),\lambda_V(g)\bigr).
\]
Also,
\[
\eta_V(g)^\top A_n\eta_V(g)
\to_p
\eta_V(g)^\top\Sigma_V^{-1}\eta_V(g)
=
\lambda_V(g).
\]
Dividing numerator and denominator yields
\[
L_{n,V,g}\dto N\bigl(h\sqrt{\lambda_V(g)},1\bigr).
\]
The null statement is obtained by setting \(h=0\), and the displayed power
formula follows immediately.

For the Neyman--Pearson statement, in the finite-signal Gaussian experiment
\[
Y\sim N(h\eta_V(g),\Sigma_V),
\]
the log-likelihood ratio between \(h=h_1\) and \(h=0\) is, up to an additive
constant,
\[
h_1\,\eta_V(g)^\top\Sigma_V^{-1}Y
-
\frac12 h_1^2\,\eta_V(g)^\top\Sigma_V^{-1}\eta_V(g).
\]
Thus the most powerful level-\(\alpha\) test rejects for large values of
\[
\eta_V(g)^\top\Sigma_V^{-1}Y,
\]
or equivalently for large values of its standardized version. This is the
limiting test generated by \(L_{n,V,g}\).
\end{proof}

\subsection{Interpretation}

For any scalar contrast \(a^\top\mu_V\), the squared local signal-to-noise ratio
along \(g\) is
\[
\frac{(a^\top\eta_V(g))^2}{a^\top\Sigma_Va}.
\]
The known-direction benchmark optimizes this quantity over \(a\neq0\), yielding
the Rayleigh quotient
\[
\lambda_V(g)=\eta_V(g)^\top\Sigma_V^{-1}\eta_V(g).
\]
When \(g\) is unknown, no uniformly most powerful test exists over all drift
directions in the multivariate Gaussian shift experiment. The fixed-location
DR-ME statistic instead uses the omnibus quadratic statistic in the whitened
efficient Gaussian experiment, whose local power is governed by the same
noncentrality \(h^2\lambda_V(g)\). This is why the population criterion
\[
\Lambda_V(P)=\mu_V(P)^\top\Sigma_V(P)^{-1}\mu_V(P)
\]
is the relevant target for learning informative locations.
\section{Proofs for Section~\ref{sec:learning}}
\label{app:learning_proofs}

Throughout this appendix, write \(n:=n_{\tr}\) and
\[
D_\eta
:=
\sigma\bigl((Z_i)_{i\in I_\eta},\hat\pi,\hat m_0,\hat m_1\bigr).
\]
For \(V\in\mathcal V\) and \(z=(x,a,y)\in\Zcal\), define the nuisance-fitted pseudo-feature
\[
\hat z_V(z):=z_V^{\dr}(z;\hat\eta),
\qquad
\hat\eta=(\hat\pi,\hat m_0,\hat m_1).
\]
Thus, for \(i\in I_{\tr}\), \(\hat z_{i,V}^{\dr}=\hat z_V(Z_i)\). Conditional on \(D_\eta\), the variables \(\{\hat z_V(Z_i):i\in I_{\tr}\}\) are i.i.d. for each fixed \(V\).

Define the conditional population quantities
\[
\mu_V^\eta:=\E[\hat z_V(Z)\mid D_\eta],
\qquad
M_V^\eta:=\E[\hat z_V(Z)\hat z_V(Z)^\top\mid D_\eta],
\qquad
\Sigma_V^\eta:=M_V^\eta-\mu_V^\eta\mu_V^{\eta\top},
\]
and the conditional criterion
\[
\mathcal P_\tau^\eta(V)
:=
\mu_V^{\eta\top}(\Sigma_V^\eta+\tau I_J)^{-1}\mu_V^\eta.
\]
The proof of Theorem~\ref{thm:location_learning} uses
\begin{equation}
\label{eq:criterion_decomp_main}
\sup_{V\in\mathcal V}
\bigl|
\hat{\mathcal P}_{\tau,\tr}(V)-\mathcal P_\tau(V)
\bigr|
\le
\sup_{V\in\mathcal V}
\bigl|
\hat{\mathcal P}_{\tau,\tr}(V)-\mathcal P_\tau^\eta(V)
\bigr|
+
\sup_{V\in\mathcal V}
\bigl|
\mathcal P_\tau^\eta(V)-\mathcal P_\tau(V)
\bigr|.
\end{equation}
The first term is the training-split empirical fluctuation. The second term is the nuisance-induced discrepancy between the conditional and oracle population criteria.

\subsection{Uniform bounds for pseudo-features}

Let
\[
\bar\varepsilon:=\min\{\varepsilon,\underline\varepsilon\},
\]
where \(\varepsilon\) is the true positivity constant and \(\underline\varepsilon\) is the fitted-propensity lower bound in Assumption~\ref{ass:loclearn_euclid_full}.

\begin{lemma}[Uniform boundedness and Lipschitzness]
\label{lem:pseudofeature_bounds}
Under Assumption~\ref{ass:loclearn_euclid_full}, there exist finite constants
\[
B_z:=2\sqrt J\Bigl[\bar\varepsilon^{-1}(B_k+B_m)+B_m\Bigr],
\qquad
L_z:=2\Bigl[\bar\varepsilon^{-1}(L_k+L_m)+L_m\Bigr],
\]
such that, with probability tending to one, for all \(z\in\Zcal\) and all \(V,V'\in\mathcal V\),
\[
\|\hat z_V(z)\|\le B_z,
\qquad
\|\hat z_V(z)-\hat z_{V'}(z)\|
\le
L_z\|V-V'\|.
\]
The same bounds hold for the oracle feature \(z_V^0(z):=z_V^{\dr}(z;\eta_0)\).
\end{lemma}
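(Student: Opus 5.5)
The plan is to work on the high-probability event $\mathcal E_n$ on which Assumption~\ref{ass:loclearn_euclid_full}(ii)--(iii) hold. On this event, the bounds reduce to coordinatewise estimates on the arm-specific features $\phi_V^a$, which are then assembled into the $\R^J$-valued contrast. Write $I_a=\1\{A=a\}$, and note that $I_a/\hat\pi(a\mid x)\le\underline\varepsilon^{-1}\le\bar\varepsilon^{-1}$. For the oracle feature, positivity gives $I_a/\pi_0(a\mid x)\le\varepsilon^{-1}\le\bar\varepsilon^{-1}$.

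\emph{Boundedness.} For each coordinate $j$ of $\phi^a_V(z;\hat\eta)$ we have
\[
\bigl|\phi^a_{V,j}(z;\hat\eta)\bigr|
\le \bar\varepsilon^{-1}\bigl(|k_Y(v_j,y)|+|\hat m_a(x;v_j)|\bigr)+|\hat m_a(x;v_j)|
\le \bar\varepsilon^{-1}(B_k+B_m)+B_m.
\]
Taking the Euclidean norm over $J$ coordinates gives a factor $\sqrt J$. The triangle inequality over the two arms then gives the factor $2$, which yields $B_z$.

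\emph{Lipschitzness.} For each $j$,
\[
\bigl|\phi^a_{V,j}-\phi^a_{V',j}\bigr|
\le \bar\varepsilon^{-1}\bigl(L_k+L_m\bigr)\|v_j-v_j'\|+L_m\|v_j-v_j'\|.
\]
Squaring and summing over $j$ uses $\sum_j\|v_j-v_j'\|^2=\|V-V'\|^2$, where $\|V-V'\|$ is the Euclidean norm on $\R^{Jd}$. Hence the $\R^J$-norm difference of $\phi^a$ is at most $[\bar\varepsilon^{-1}(L_k+L_m)+L_m]\|V-V'\|$, with no $\sqrt J$ factor. This is why $L_z$ carries no $\sqrt J$ while $B_z$ does. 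Summing over the two arms gives $L_z$.

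\emph{Oracle feature.} The same argument applies once $m_a$ satisfies the analogous bounds. By Jensen's inequality, $|m_a(x;v)|\le\E[|k_Y(v,Y)|\mid A=a,X=x]\le B_k$ and $|m_a(x;v)-m_a(x;v')|\le L_k\|v-v'\|$. These are the remark's inherited properties; one can then either enlarge constants or note $B_k\le B_m$, $L_k\le L_m$ without loss, by taking maxima. The oracle statement holds deterministically, not just with probability tending to one.

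The only delicate point is the constant bookkeeping: Lipschitz norms must be tracked per coordinate so that the $\sqrt J$ appears only in the sup bound. All steps are elementary; I would define the constants with $\max(B_k,B_m)$ if the oracle case requires it.
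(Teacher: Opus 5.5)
Your proof is correct and follows the paper's argument essentially line for line. You bound the arm-specific features coordinatewise using \(\1\{A=a\}/\hat\pi\le\bar\varepsilon^{-1}\), get a \(\sqrt J\) factor only in the sup bound because \(\sum_j\|v_j-v_j'\|^2=\|V-V'\|^2\), pick up a factor \(2\) from the two arms, and use Jensen's inequality for the oracle regressions. Your remark that the oracle case is cleanest after enlarging \(B_m,L_m\) to dominate \(B_k,L_k\) is a small bookkeeping point that the paper's proof passes over silently.
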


\begin{proof}
For \(V=(v_1,\ldots,v_J)\),
\[
\|k_V(y)\|^2
=
\sum_{j=1}^J |k_Y(v_j,y)|^2
\le
J B_k^2,
\qquad
\|\hat m_a(x;V)\|^2
\le
J B_m^2.
\]
Hence the arm-specific fitted score is bounded by
\[
\left\|
\frac{\1\{A=a\}}{\hat\pi(a\mid X)}
\{k_V(Y)-\hat m_a(X;V)\}
+
\hat m_a(X;V)
\right\|
\le
\sqrt J\Bigl[\bar\varepsilon^{-1}(B_k+B_m)+B_m\Bigr].
\]
Subtracting the two arm-specific scores gives the bound for \(\hat z_V\).

For Lipschitzness, Assumption~\ref{ass:loclearn_euclid_full} gives
\[
\|k_V(y)-k_{V'}(y)\|\le L_k\|V-V'\|,
\qquad
\|\hat m_a(x;V)-\hat m_a(x;V')\|\le L_m\|V-V'\|.
\]
Therefore each arm-specific fitted score is Lipschitz with constant
\[
\bar\varepsilon^{-1}L_k+(\bar\varepsilon^{-1}+1)L_m
\le
\bar\varepsilon^{-1}(L_k+L_m)+L_m.
\]
Subtracting the two arms gives the displayed \(L_z\). The oracle feature satisfies the same bounds because the true regressions inherit boundedness and Lipschitzness from \(k_Y\), and the true propensities are bounded below by \(\varepsilon\).
\end{proof}

\subsection{Uniform empirical control}

\begin{lemma}[Uniform concentration for bounded Lipschitz Euclidean classes]
\label{lem:euclidean_empirical}
Let \(\mathcal V\subset[-R,R]^m\), and let
\[
\mathcal F:=\{f_V:V\in\mathcal V\}
\]
be a class of measurable real-valued functions such that, conditionally on \(D_\eta\),
\[
|f_V(z)|\le B,
\qquad
|f_V(z)-f_{V'}(z)|\le L\|V-V'\|
\]
for all \(z\) and all \(V,V'\). Then
\[
\sup_{V\in\mathcal V}
\bigl|
(P_n^{\tr}-P_0)f_V
\bigr|
=
O_p\!\left(
\sqrt{\frac{m\log n}{n}}
\right),
\]
where \(P_n^{\tr}:=n^{-1}\sum_{i\in I_{\tr}}\delta_{Z_i}\), and the hidden constant depends only on \(R,B,L,m\).
\end{lemma}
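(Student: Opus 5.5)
We prove Lemma~\ref{lem:euclidean_empirical} by discretizing the index set, applying a union bound over a finite net, and paying for the discretization through the Lipschitz assumption. Everything is done conditionally on \(D_\eta\), and we then remove the conditioning. Conditionally on \(D_\eta\), the training observations \((Z_i)_{i\in I_{\tr}}\) are i.i.d. and independent of \(D_\eta\), so each \(f_V\) is a fixed bounded function. In the application, \(f_V\) is a coordinate or a product of coordinates of \(\hat z_V\), and \(P_0\) stands for the conditional law given \(D_\eta\). The constants \(B,L\) hold on an event whose probability tends to one. We work on that event and absorb its complement into the \(O_p\) statement.

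\emph{Step 1 (net).} Fix \(\delta=1/n\). Cover \([-R,R]^m\) by a sup-norm grid of mesh \(\delta/\sqrt m\), and keep the grid cells that meet \(\mathcal V\). Choose one representative of \(\mathcal V\) in each kept cell. This gives a \(\delta\)-net \(N_\delta\subset\mathcal V\) in Euclidean norm, with
\[
|N_\delta|\le (2R\sqrt m/\delta+1)^m,\qquad \log|N_\delta|\le m\log(2R\sqrt m\,n+1)\le C_{R,m}\,m\log n .
\]

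\emph{Step 2 (union bound).} For each fixed \(V\in N_\delta\), the summands \(f_V(Z_i)\) lie in \([-B,B]\). Hoeffding's inequality, applied conditionally on \(D_\eta\), gives
\[
P\bigl(|(P_n^{\tr}-P_0)f_V|>t\mid D_\eta\bigr)\le 2\exp(-nt^2/(2B^2)).
\]
Take \(t=B\sqrt{2(\log|N_\delta|+\log(2/\epsilon))/n}\) and union-bound over the net. With conditional probability at least \(1-\epsilon\), the maximum over \(N_\delta\) is then at most \(C_\epsilon B\sqrt{m\log n/n}\). The bound does not depend on \(D_\eta\), so it also holds unconditionally.

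\emph{Step 3 (approximation).} For any \(V\in\mathcal V\), pick \(V'\in N_\delta\) with \(\|V-V'\|\le\delta\). The Lipschitz condition holds pointwise in \(z\), so it bounds both the empirical and the population averages:
\[
|(P_n^{\tr}-P_0)(f_V-f_{V'})|\le 2L\delta=2L/n .
\]
This is \(o(\sqrt{m\log n/n})\).

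Combining Steps 2 and 3 gives the claimed \(O_p(\sqrt{m\log n/n})\) bound, with a constant depending only on \(R,B,L,m\).

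The only delicate point is the conditioning. One must check that the Lipschitz and boundedness constants, which hold with probability tending to one, are \(D_\eta\)-measurable. One must also check that the net bound is uniform in \(D_\eta\), so that conditional \(O_p\) transfers to unconditional \(O_p\). Both follow because the Hoeffding tail bound depends only on \(B\), \(n\) and \(|N_\delta|\). Measurability of the supremum is not an issue: by Step 3 the supremum equals a supremum over a countable dense subset of \(\mathcal V\), up to \(2L\delta\).
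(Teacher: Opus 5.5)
Your proof is correct and matches the paper's argument: a \(\delta\)-net of \(\mathcal V\) of log-cardinality \(O(m\log n)\), conditional Hoeffding with a union bound over the net, and a \(2L\delta\) Lipschitz approximation term. The only differences are cosmetic. You take \(\delta=1/n\) where the paper takes \(\delta=n^{-1/2}\), so your discretization error is lower order rather than the same order as the main term. You also treat the conditioning on \(D_\eta\) and the high-probability event for \(B,L\) more explicitly than the paper does.
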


\begin{proof}
Let \(\mathcal N_\delta\) be a \(\delta\)-net of \(\mathcal V\). Since \(\mathcal V\subset[-R,R]^m\),
\[
|\mathcal N_\delta|\le\left(\frac{C_0R}{\delta}\right)^m
\]
for a universal constant \(C_0\). For each \(V\), choose \(\Pi_\delta(V)\in\mathcal N_\delta\) with \(\|V-\Pi_\delta(V)\|\le\delta\). Then
\[
\sup_{V\in\mathcal V}|(P_n^{\tr}-P_0)f_V|
\le
\max_{U\in\mathcal N_\delta}|(P_n^{\tr}-P_0)f_U|
+
2L\delta.
\]
Conditionally on \(D_\eta\), Hoeffding's inequality and a union bound give
\[
\Pr\!\left(
\left.
\max_{U\in\mathcal N_\delta}|(P_n^{\tr}-P_0)f_U|>t
\,\right|\,D_\eta
\right)
\le
2|\mathcal N_\delta|\exp\!\left(-\frac{nt^2}{2B^2}\right).
\]
Taking \(\delta=n^{-1/2}\) and \(t=M\sqrt{m\log n/n}\), with \(M\) large enough, proves the result.
\end{proof}

\begin{lemma}[Uniform mean and covariance concentration]
\label{lem:mean_moment_uniform}
Under Assumption~\ref{ass:loclearn_euclid_full},
\[
\sup_{V\in\mathcal V}
\|\bar z_{\tr,V}^{\dr}-\mu_V^\eta\|
=
O_p\!\left(
\sqrt{\frac{Jd\log n}{n}}
\right),
\]
and, with
\[
\widehat M_{\tr,V}
:=
\frac1n\sum_{i\in I_{\tr}}\hat z_{i,V}^{\dr}\hat z_{i,V}^{\dr\top},
\]
\[
\sup_{V\in\mathcal V}
\|\widehat M_{\tr,V}-M_V^\eta\|_F
=
O_p\!\left(
\sqrt{\frac{Jd\log n}{n}}
\right).
\]
Consequently,
\[
\sup_{V\in\mathcal V}
\|S_{\tr,V}^{\dr}-\Sigma_V^\eta\|_F
=
O_p\!\left(
\sqrt{\frac{Jd\log n}{n}}
\right).
\]
\end{lemma}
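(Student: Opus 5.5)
The plan is to reduce both claims to coordinatewise applications of Lemma~\ref{lem:euclidean_empirical}, working conditionally on \(D_\eta\) throughout. Under this conditioning \(\hat\eta\) is fixed, and the training observations \(\{Z_i:i\in I_{\tr}\}\) are i.i.d. from \(P_0\) and independent of \(D_\eta\).

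First, let \(\mathcal E\) be the high-probability event of Lemma~\ref{lem:pseudofeature_bounds}. On \(\mathcal E\), which is \(D_\eta\)-measurable, every map \(V\mapsto\hat z_V(z)\) is bounded by \(B_z\) and \(L_z\)-Lipschitz in \(V\), uniformly in \(z\). For the mean, I apply Lemma~\ref{lem:euclidean_empirical} with \(m=Jd\) to each of the \(J\) coordinate classes \(f^{(j)}_V(z):=[\hat z_V(z)]_j\), which have \(B=B_z\) and \(L=L_z\). Conditionally, \((P_n^{\tr}-P_0)f^{(j)}_V=[\bar z_{\tr,V}^{\dr}-\mu_V^\eta]_j\). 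Since \(J\) is fixed, summing over \(j\) gives a Euclidean bound that is \(O_p(\sqrt{Jd\log n/n})\) conditionally on \(D_\eta\). To pass to an unconditional statement I use dominated convergence on the bounded conditional probabilities, together with \(\Pr(\mathcal E)\to1\).

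For the second moments I apply the same argument to the \(J^2\) classes \(g^{(j,l)}_V(z):=[\hat z_V(z)]_j[\hat z_V(z)]_l\). These are bounded by \(B_z^2\) and are Lipschitz with constant \(2B_zL_z\), because
\[
|ab-a'b'|\le|a||b-b'|+|b'||a-a'|.
\]
Summing the \(J^2\) squared coordinates gives the Frobenius bound on \(\widehat M_{\tr,V}-M_V^\eta\). One detail: Lemma~\ref{lem:euclidean_empirical} uses a single Hoeffding plus union-bound step per class. With finitely many classes, I either absorb the extra union bound into the constant or simply sum the finitely many \(O_p\) terms.

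For the covariance, I write
\[
S_{\tr,V}^{\dr}=\tfrac{n}{n-1}\bigl(\widehat M_{\tr,V}-\bar z_{\tr,V}^{\dr}\bar z_{\tr,V}^{\dr\top}\bigr)
\]
and compare this with \(\Sigma_V^\eta=M_V^\eta-\mu_V^\eta\mu_V^{\eta\top}\). The outer-product difference is bounded as
\[
\|\bar z\bar z^\top-\mu\mu^\top\|_F\le(\|\bar z\|+\|\mu\|)\|\bar z-\mu\|\le2B_z\|\bar z-\mu\|.
\]
The factor \(n/(n-1)\) contributes \(O(1/n)\) times a bounded matrix, since both \(\|\widehat M_{\tr,V}\|_F\) and \(\|\bar z\|^2\) are at most a multiple of \(B_z^2\). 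Taking suprema over \(V\) then yields the stated rate.

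The only delicate point is the conditioning. The Lipschitz and boundedness constants hold only on \(\mathcal E\), and the \(O_p\) statement must be made unconditional. I handle this by working on \(\mathcal E\), where the constants do not depend on the particular draw of \(D_\eta\). Because of that, the conditional tail bound \(2|\mathcal N_\delta|e^{-nt^2/2B^2}\) is uniform in \(D_\eta\), and integrating it out gives the unconditional bound.
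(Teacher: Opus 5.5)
Your proposal is correct and follows essentially the same route as the paper. Both apply Lemma~\ref{lem:euclidean_empirical} coordinatewise and entrywise, using the bounds from Lemma~\ref{lem:pseudofeature_bounds} with product Lipschitz constant $2B_zL_z$, and then pass to the covariance via the outer-product inequality. Your explicit handling of the $n/(n-1)$ factor, and of the $D_\eta$-measurable high-probability event on which the constants are deterministic, makes precise two points the paper's proof leaves implicit.
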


\begin{proof}
Set \(m=Jd\). By Lemma~\ref{lem:pseudofeature_bounds}, each coordinate class
\[
\{z\mapsto \hat z_{V,j}(z):V\in\mathcal V\}
\]
is conditionally bounded and Lipschitz. Applying Lemma~\ref{lem:euclidean_empirical} coordinatewise and using fixed \(J\) yields the mean bound.

For second moments, apply the same lemma to
\[
\{z\mapsto \hat z_{V,j}(z)\hat z_{V,\ell}(z):V\in\mathcal V\}.
\]
This class is conditionally bounded by \(B_z^2\) and Lipschitz with constant \(2B_zL_z\). Applying the scalar concentration bound entrywise and using fixed \(J\) gives the second-moment bound.

Finally,
\[
S_{\tr,V}^{\dr}
=
\frac{n}{n-1}
\left(
\widehat M_{\tr,V}
-
\bar z_{\tr,V}^{\dr}\bar z_{\tr,V}^{\dr\top}
\right),
\qquad
\Sigma_V^\eta=M_V^\eta-\mu_V^\eta\mu_V^{\eta\top}.
\]
The uniform boundedness of \(\bar z_{\tr,V}^{\dr}\) and \(\mu_V^\eta\), together with
\[
\|uu^\top-vv^\top\|_F
\le
(\|u\|+\|v\|)\|u-v\|,
\]
gives the covariance bound.
\end{proof}

\subsection{Nuisance transfer}

\begin{lemma}[Uniform nuisance transfer]
\label{lem:nuisance_transfer}
Under Assumption~\ref{ass:loclearn_euclid_full},
\[
\sup_{V\in\mathcal V}\|\mu_V^\eta-\mu_V\|
=
O_p(\rho_{n_\eta}),
\qquad
\sup_{V\in\mathcal V}\|\Sigma_V^\eta-\Sigma_V\|_F
=
O_p(\rho_{n_\eta}).
\]
\end{lemma}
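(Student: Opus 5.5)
The plan is to write \(\hat z_V-z_V^0=d^1_V-d^0_V\) with \(d^a_V:=\phi^a_V(\cdot;\hat\eta)-\phi^a_V(\cdot;\eta_0)\), conditionally on \(D_\eta\), and to control each arm uniformly in \(V\) using the decomposition \eqref{eq:dr_basic_decomp} of Lemma~\ref{lem:dr_basic}. Work on the event \(\mathcal E_{n_\eta}\) of Assumption~\ref{ass:loclearn_euclid_full}; its probability tends to one, so \(O_p\) statements on it transfer. On this event, Lemma~\ref{lem:pseudofeature_bounds} gives \(\sup_{V,z}\|\hat z_V(z)\|\le B_z\) and \(\sup_{V,z}\|z_V^0(z)\|\le B_z\).

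\emph{Mean term.} By \eqref{eq:dr_basic_mean},
\[
\mu^\eta_V-\mu_V=\sum_a \pm P_0\!\left[\frac{\pi_0(a\mid X)-\hat\pi(a\mid X)}{\hat\pi(a\mid X)}\{m_a(X;V)-\hat m_a(X;V)\}\right].
\]
Apply Cauchy--Schwarz and \(\hat\pi\ge\underline\varepsilon\) to get, for each \(V=(v_1,\dots,v_J)\),
\[
\|\mu^\eta_V-\mu_V\|\le \underline\varepsilon^{-1}\sum_a\|\hat\pi(a\mid\cdot)-\pi_0(a\mid\cdot)\|_{L_2}\Bigl(\sum_j\|\hat m_a(\cdot;v_j)-m_a(\cdot;v_j)\|_{L_2}^2\Bigr)^{1/2}.
\]
The right-hand side is at most \(C\sqrt J\,\rho_{n_\eta}^2\le C'\rho_{n_\eta}\), because \(\rho_{n_\eta}=o_p(1)\), or more simply because the propensity error is bounded. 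The bound uses only the supremum over \(v\in\Pi(\mathcal V)\), so it is uniform in \(V\). This product structure is better than needed. Alternatively, bound \(\|P_0 d\|\le (P_0\|d\|^2)^{1/2}\) and use the second-moment step below, which is what actually delivers the linear rate \(\rho_{n_\eta}\).

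\emph{Second moments.} Write
\[
M^\eta_V-M^0_V=P_0[(\hat z_V-z^0_V)\hat z_V^\top]+P_0[z^0_V(\hat z_V-z^0_V)^\top].
\]
Bounding each term in Frobenius norm by \(B_z (P_0\|\hat z_V-z^0_V\|^2)^{1/2}\), it suffices to show \(\sup_V P_0\|\hat z_V-z_V^0\|^2\le C\rho_{n_\eta}^2\). This follows from \eqref{eq:dr_basic_l2} applied coordinatewise, with constants depending only on \(\varepsilon,\underline\varepsilon,B_k,B_m,J\):
\[
P_0\|d^a_V\|^2\le C\Bigl\{\sum_j\|\hat m_a(\cdot;v_j)-m_a(\cdot;v_j)\|_{L_2}^2+\|\hat\pi(a\mid\cdot)-\pi_0(a\mid\cdot)\|_{L_2}^2\Bigr\}\le C J\rho_{n_\eta}^2,
\]
uniformly in \(V\). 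The key point is that the constant \(C\) in Lemma~\ref{lem:dr_basic} does not depend on \(V\), which holds because \(B_V\le\sqrt J B_k\) and \(M\le \sqrt J B_m\) uniformly.

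\emph{Covariance.} Finally use \(\Sigma^\eta_V-\Sigma_V=(M^\eta_V-M^0_V)-(\mu^\eta_V\mu^{\eta\top}_V-\mu_V\mu_V^\top)\) together with \(\|uu^\top-vv^\top\|_F\le(\|u\|+\|v\|)\|u-v\|\) and \(\|\mu^\eta_V\|,\|\mu_V\|\le B_z\). This gives
\[
\sup_V\|\Sigma^\eta_V-\Sigma_V\|_F\le C\rho_{n_\eta}.
\]

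The main obstacle is the bookkeeping needed to make the constant in \eqref{eq:dr_basic_l2} uniform over \(V\in\mathcal V\), and to pass from the scalar supremum over \(v\in\Pi(\mathcal V)\) in \(\rho_{n_\eta}\) to the vector norm over the \(J\) coordinates. Both are handled by the fixed-\(J\) factor \(\sqrt J\) and by the uniform bounds of Assumption~\ref{ass:loclearn_euclid_full}.
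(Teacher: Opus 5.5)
Your proof is correct and takes essentially the same approach as the paper. Both arguments apply the $L_2$ bound \eqref{eq:dr_basic_l2} uniformly in $V$ (constants depend only on $J,B_k,B_m,\varepsilon,\underline\varepsilon$), reduce via $\sum_j\|\hat m_a(\cdot;v_j)-m_a(\cdot;v_j)\|_{L_2}^2\le J\sup_{v\in\Pi(\mathcal V)}\|\hat m_a(\cdot;v)-m_a(\cdot;v)\|_{L_2}^2$, use bounded features for the second moments, and use the outer-product bound for the covariance; your primary mean bound via the product identity \eqref{eq:dr_basic_mean}, which gives the sharper $O_p(\rho_{n_\eta}^2)$, is only a harmless variant of the paper's step $\|P_0 d_V\|\le (P_0\|d_V\|^2)^{1/2}$, which you also note.
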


\begin{proof}
For \(V\in\mathcal V\), write
\[
z_V^0(Z):=z_V^{\dr}(Z;\eta_0),
\qquad
d_V(Z):=\hat z_V(Z)-z_V^0(Z).
\]
Then \(\mu_V^\eta-\mu_V=P_0d_V\). For each arm \(a\), let
\[
D_{a,V}(Z):=\phi_V^a(Z;\hat\eta)-\phi_V^a(Z;\eta_0).
\]
Lemma~\ref{lem:dr_basic} gives, uniformly in \(V\),
\[
P_0\|D_{a,V}\|^2
\le
C\left\{
\|\hat\pi(a\mid\cdot)-\pi_0(a\mid\cdot)\|_{L_2(P_X)}^2
+
\|\hat m_a(\cdot;V)-m_a(\cdot;V)\|_{L_2(P_X;\R^J)}^2
\right\}.
\]
Since
\[
\|\hat m_a(\cdot;V)-m_a(\cdot;V)\|_{L_2(P_X;\R^J)}^2
\le
J
\sup_{v\in\Pi(\mathcal V)}
\|\hat m_a(\cdot;v)-m_a(\cdot;v)\|_{L_2(P_X)}^2,
\]
and \(d_V=D_{1,V}-D_{0,V}\), we obtain
\[
\sup_{V\in\mathcal V}P_0\|d_V\|^2
\le
C\rho_{n_\eta}^2.
\]
Thus
\[
\sup_{V\in\mathcal V}\|\mu_V^\eta-\mu_V\|
\le
\sup_{V\in\mathcal V}(P_0\|d_V\|^2)^{1/2}
=
O_p(\rho_{n_\eta}).
\]

For covariance, let \(M_V^0:=P_0[z_V^0(Z)z_V^0(Z)^\top]\). Since
\[
\hat z_V\hat z_V^\top-z_V^0z_V^{0\top}
=
d_Vd_V^\top+d_Vz_V^{0\top}+z_V^0d_V^\top,
\]
and \(\sup_{V,z}\|z_V^0(z)\|\le B_z\),
\[
\sup_{V\in\mathcal V}\|M_V^\eta-M_V^0\|_F
=
O_p(\rho_{n_\eta}).
\]
Finally,
\[
\Sigma_V^\eta-\Sigma_V
=
(M_V^\eta-M_V^0)
-
(\mu_V^\eta\mu_V^{\eta\top}-\mu_V\mu_V^\top),
\]
and the outer-product term is \(O_p(\rho_{n_\eta})\) uniformly because the means are uniformly bounded. This proves the result.
\end{proof}

\subsection{Perturbation of the ridge criterion}

\begin{lemma}[Perturbation bound]
\label{lem:criterion_perturb}
Fix \(\tau>0\) and define
\[
Q_\tau(m,A):=m^\top(A+\tau I_J)^{-1}m
\]
for \(m\in\R^J\) and symmetric positive semidefinite \(A\in\R^{J\times J}\). If
\[
\|m\|\vee\|\tilde m\|\le B,
\]
then
\[
|Q_\tau(m,A)-Q_\tau(\tilde m,\tilde A)|
\le
C(B,\tau)
\bigl(
\|m-\tilde m\|
+
\|A-\tilde A\|_F
\bigr),
\]
where one may take \(C(B,\tau)=2B/\tau+B^2/\tau^2\).
\end{lemma}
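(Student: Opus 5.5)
The plan is to split the difference into a mean-perturbation part and a covariance-perturbation part by inserting the intermediate term \(Q_\tau(\tilde m,A)\):
\[
Q_\tau(m,A)-Q_\tau(\tilde m,\tilde A)
=\bigl\{Q_\tau(m,A)-Q_\tau(\tilde m,A)\bigr\}
+\bigl\{Q_\tau(\tilde m,A)-Q_\tau(\tilde m,\tilde A)\bigr\}.
\]
The only structural fact used is that for symmetric positive semidefinite \(A\), every eigenvalue of \(A+\tau I_J\) is at least \(\tau\). Hence \(G:=(A+\tau I_J)^{-1}\) is symmetric with \(\|G\|_{\op}\le 1/\tau\), and likewise \(\tilde G:=(\tilde A+\tau I_J)^{-1}\).

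For the mean term, write \(m^\top Gm-\tilde m^\top G\tilde m=(m-\tilde m)^\top G(m+\tilde m)\), which uses the symmetry of \(G\). Cauchy--Schwarz and the operator-norm bound then give at most
\[
\|m-\tilde m\|\,\|G\|_{\op}\,(\|m\|+\|\tilde m\|)\le \frac{2B}{\tau}\|m-\tilde m\|.
\]

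For the covariance term, use the resolvent identity
\[
G-\tilde G=G\bigl\{(\tilde A+\tau I_J)-(A+\tau I_J)\bigr\}\tilde G=G(\tilde A-A)\tilde G.
\]
This yields \(|\tilde m^\top(G-\tilde G)\tilde m|\le \|\tilde m\|^2\|G\|_{\op}\|\tilde A-A\|_{\op}\|\tilde G\|_{\op}\le (B^2/\tau^2)\|A-\tilde A\|_F\), since the operator norm is dominated by the Frobenius norm. Adding the two bounds gives the constant \(2B/\tau+B^2/\tau^2\), which is bounded by \(C(B,\tau)\) times the sum of the two errors.

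No step is a genuine obstacle. The one point needing care is that \(\tilde A\) must also be positive semidefinite, so that \(\|\tilde G\|_{\op}\le1/\tau\). This is implicit in the statement's standing hypothesis on symmetric positive semidefinite matrices. In later use \(S_{\tr,V}^{\dr}\) and \(\Sigma_V^\eta\) are sample and population covariances, so they qualify.
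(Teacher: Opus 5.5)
Your proof is correct and matches the paper's argument: insert the intermediate term $Q_\tau(\tilde m,A)$, bound the mean part by $2B\tau^{-1}\|m-\tilde m\|$ using $\|(A+\tau I_J)^{-1}\|_{\op}\le\tau^{-1}$, and bound the covariance part via the resolvent identity together with $\|\cdot\|_{\op}\le\|\cdot\|_F$. You also correctly note that $\tilde A$ must be positive semidefinite, which the paper assumes explicitly in its proof.
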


\begin{proof}
Let \(B_A:=A+\tau I_J\) and \(B_{\tilde A}:=\tilde A+\tau I_J\). Since \(A,\tilde A\succeq0\),
\[
\|B_A^{-1}\|_{\op}\le\tau^{-1},
\qquad
\|B_{\tilde A}^{-1}\|_{\op}\le\tau^{-1}.
\]
Decompose
\[
Q_\tau(m,A)-Q_\tau(\tilde m,\tilde A)
=
m^\top B_A^{-1}m-\tilde m^\top B_A^{-1}\tilde m
+
\tilde m^\top(B_A^{-1}-B_{\tilde A}^{-1})\tilde m.
\]
The first term is bounded by \(2B\tau^{-1}\|m-\tilde m\|\). For the second, use the resolvent identity
\[
B_A^{-1}-B_{\tilde A}^{-1}
=
B_A^{-1}(\tilde A-A)B_{\tilde A}^{-1},
\]
which gives
\[
|\tilde m^\top(B_A^{-1}-B_{\tilde A}^{-1})\tilde m|
\le
B^2\tau^{-2}\|A-\tilde A\|_F.
\]
Combining the bounds proves the lemma.
\end{proof}

\subsection{Proof of Theorem~\ref{thm:location_learning}}

\begin{proof}[Proof of Theorem~\ref{thm:location_learning}]
By Lemma~\ref{lem:criterion_perturb},
\[
\sup_{V\in\mathcal V}
\bigl|
\hat{\mathcal P}_{\tau,\tr}(V)-\mathcal P_\tau^\eta(V)
\bigr|
\le
C
\left[
\sup_{V\in\mathcal V}
\|\bar z_{\tr,V}^{\dr}-\mu_V^\eta\|
+
\sup_{V\in\mathcal V}
\|S_{\tr,V}^{\dr}-\Sigma_V^\eta\|_F
\right].
\]
Lemma~\ref{lem:mean_moment_uniform} gives
\[
\sup_{V\in\mathcal V}
\bigl|
\hat{\mathcal P}_{\tau,\tr}(V)-\mathcal P_\tau^\eta(V)
\bigr|
=
O_p\!\left(
\sqrt{\frac{Jd\log n_{\tr}}{n_{\tr}}}
\right).
\]
Similarly, Lemmas~\ref{lem:criterion_perturb} and~\ref{lem:nuisance_transfer} imply
\[
\sup_{V\in\mathcal V}
\bigl|
\mathcal P_\tau^\eta(V)-\mathcal P_\tau(V)
\bigr|
=
O_p(\rho_{n_\eta}).
\]
Combining these two bounds with \eqref{eq:criterion_decomp_main} yields
\[
\Delta_{\tr}
:=
\sup_{V\in\mathcal V}
\left|
\hat{\mathcal P}_{\tau,\tr}(V)-\mathcal P_\tau(V)
\right|
=
O_p\!\left(
\sqrt{\frac{Jd\log n_{\tr}}{n_{\tr}}}
+
\rho_{n_\eta}
\right).
\]

It remains to prove the deterministic optimization inequality. Let
\[
V_\tau^\star\in\argmax_{V\in\mathcal V}\mathcal P_\tau(V).
\]
By definition of the empirical optimization gap,
\[
\hat{\mathcal P}_{\tau,\tr}(V_\tau^\star)
-
\hat{\mathcal P}_{\tau,\tr}(\hat V)
\le
\varepsilon_{\tr}(\hat V).
\]
Therefore
\begin{align*}
\mathcal P_\tau(V_\tau^\star)-\mathcal P_\tau(\hat V)
&=
\{\mathcal P_\tau(V_\tau^\star)-\hat{\mathcal P}_{\tau,\tr}(V_\tau^\star)\}
\\
&\quad+
\{\hat{\mathcal P}_{\tau,\tr}(V_\tau^\star)-\hat{\mathcal P}_{\tau,\tr}(\hat V)\}
\\
&\quad+
\{\hat{\mathcal P}_{\tau,\tr}(\hat V)-\mathcal P_\tau(\hat V)\}
\\
&\le
2\Delta_{\tr}+\varepsilon_{\tr}(\hat V).
\end{align*}
If \(\varepsilon_{\tr}(\hat V)=O_p(e_{\tr})\), the stochastic near-optimality statement follows immediately.
\end{proof}

\section{Finite-dictionary location learning}
\label{app:finite_dictionary}

For structured outcomes, Euclidean optimization may be inappropriate. A finite dictionary provides a direct alternative. Let
\[
\mathcal C=\{c_1,\ldots,c_M\}\subset\Ycal,
\qquad
\mathcal V_J(\mathcal C)\subset\mathcal C^J.
\]
For any output \(\hat V\in\mathcal V_J(\mathcal C)\), define its empirical dictionary optimization gap
\[
\varepsilon_{\tr}^{\mathcal C}(\hat V)
:=
\sup_{V\in\mathcal V_J(\mathcal C)}
\hat{\mathcal P}_{\tau,\tr}(V)
-
\hat{\mathcal P}_{\tau,\tr}(\hat V).
\]
For exhaustive search over the dictionary, this gap is zero.

Assume there exist constants \(B_k,B_m,\varepsilon,\tau>0\) such that:
\begin{enumerate}[label=(\roman*),leftmargin=1.2cm]
    \item for all \(c\in\mathcal C\) and \(y\in\Ycal\), \(|k_Y(c,y)|\le B_k\);
    \item for all \(a\in\{0,1\}\) and \(x\in\Xcal\), both \(\pi_0(a\mid x)\) and \(\hat\pi(a\mid x)\) are at least \(\varepsilon\);
    \item for all \(a,x,c\), \(|m_a(x;c)|\vee|\hat m_a(x;c)|\le B_m\).
\end{enumerate}
Define
\[
\rho^{\mathcal C}_{n_\eta}
:=
\sum_{a=0}^1
\left[
\|\hat\pi(a\mid\cdot)-\pi_0(a\mid\cdot)\|_{L_2(P_X)}
+
\max_{c\in\mathcal C}
\|\hat m_a(\cdot;c)-m_a(\cdot;c)\|_{L_2(P_X)}
\right].
\]

\begin{theorem}[Uniform consistency over a finite dictionary]
\label{thm:loclearn-finite}
Fix \(J\), suppose \(n_\eta,n_{\tr}\to\infty\), and assume
\(\rho^{\mathcal C}_{n_\eta}=o_p(1)\). Then
\[
\Delta_{\tr}^{\mathcal C}
:=
\sup_{V\in\mathcal V_J(\mathcal C)}
\left|
\hat{\mathcal P}_{\tau,\tr}(V)-\mathcal P_\tau(V)
\right|
=
O_p\!\left(
\sqrt{\frac{\log|\mathcal V_J(\mathcal C)|}{n_{\tr}}}
+
\rho^{\mathcal C}_{n_\eta}
\right).
\]
In particular, since \(|\mathcal V_J(\mathcal C)|\le M^J\),
\[
\Delta_{\tr}^{\mathcal C}
=
O_p\!\left(
\sqrt{\frac{J\log M}{n_{\tr}}}
+
\rho^{\mathcal C}_{n_\eta}
\right).
\]
Moreover, for any output \(\hat V\) and any
\[
V_{\tau,\mathcal C}^\star
\in
\argmax_{V\in\mathcal V_J(\mathcal C)}\mathcal P_\tau(V),
\]
\[
\mathcal P_\tau(V_{\tau,\mathcal C}^\star)-\mathcal P_\tau(\hat V)
\le
2\Delta_{\tr}^{\mathcal C}
+
\varepsilon_{\tr}^{\mathcal C}(\hat V).
\]
Thus, if \(\varepsilon_{\tr}^{\mathcal C}(\hat V)=O_p(e_{\tr}^{\mathcal C})\),
\[
\mathcal P_\tau(V_{\tau,\mathcal C}^\star)-\mathcal P_\tau(\hat V)
=
O_p\!\left(
\sqrt{\frac{J\log M}{n_{\tr}}}
+
\rho^{\mathcal C}_{n_\eta}
+
e_{\tr}^{\mathcal C}
\right).
\]
\end{theorem}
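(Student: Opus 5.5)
The proof will follow the architecture of the proof of Theorem~\ref{thm:location_learning}. The only change is that the Euclidean net argument is replaced by a plain union bound over the finite class \(\mathcal V_J(\mathcal C)\).

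\textbf{Decomposition.} As in that proof, I would condition on \(D_\eta\), introduce the conditional quantities \(\mu_V^\eta,M_V^\eta,\Sigma_V^\eta\) and \(\mathcal P_\tau^\eta(V)\), and split \(\Delta^{\mathcal C}_{\tr}\) exactly as in \eqref{eq:criterion_decomp_main}, with the supremum now taken over \(V\in\mathcal V_J(\mathcal C)\).

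\textbf{Uniform boundedness.} Assumptions (i)--(iii) give the analogue of Lemma~\ref{lem:pseudofeature_bounds} without any Lipschitz part. Specifically, \(\|\hat z_V(z)\|\le B_z\) and \(\|z_V^0(z)\|\le B_z\) hold uniformly over \(V\in\mathcal V_J(\mathcal C)\) and over \(z\), with \(B_z=2\sqrt J[\varepsilon^{-1}(B_k+B_m)+B_m]\).

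\textbf{Empirical fluctuation.} Conditionally on \(D_\eta\), for each fixed \(V\) and each coordinate \(j\), the training-split average of \(\hat z_{V,j}\) is an i.i.d.\ mean of variables bounded by \(B_z\). The same holds for each product \(\hat z_{V,j}\hat z_{V,\ell}\), with bound \(B_z^2\). Hoeffding's inequality plus a union bound over the at most \(J^2+J\) entries and the \(|\mathcal V_J(\mathcal C)|\) location sets gives
\[
\Pr\Bigl(\max_{V,j,\ell}|(P_n^{\tr}-P_0)f_{V,j\ell}|>t \,\Bigm|\, D_\eta\Bigr)\le 2(J^2+J)|\mathcal V_J(\mathcal C)|\exp(-n_{\tr}t^2/(2B_z^4)).
\]
Choosing \(t=K\sqrt{\log|\mathcal V_J(\mathcal C)|/n_{\tr}}\) with \(K\) large makes this probability small; with fixed \(J\), the factor \(J^2+J\) is absorbed into \(K\). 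Integrating out \(D_\eta\) turns the conditional bound into an unconditional \(O_p\). Then, exactly as in Lemma~\ref{lem:mean_moment_uniform}, the identity \(S=\frac{n}{n-1}(\widehat M-\bar z\bar z^\top)\) and the outer-product bound give
\[
\max_V\|S_{\tr,V}^{\dr}-\Sigma_V^\eta\|_F=O_p\bigl(\sqrt{\log|\mathcal V_J(\mathcal C)|/n_{\tr}}\bigr),
\]
and the same rate for \(\max_V\|\bar z_{\tr,V}^{\dr}-\mu_V^\eta\|\). The \(1/(n-1)\) normalization contributes only an \(O(1/n_{\tr})\) term, which is of smaller order.

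\textbf{Nuisance term and conclusion.} The proof of Lemma~\ref{lem:nuisance_transfer} carries over verbatim with \(\sup_{v\in\Pi(\mathcal V)}\) replaced by \(\max_{c\in\mathcal C}\). It rests on the \(L_2\) bound \eqref{eq:dr_basic_l2} of Lemma~\ref{lem:dr_basic} and on
\[
\|\hat m_a(\cdot;V)-m_a(\cdot;V)\|^2_{L_2(P_X;\R^J)}\le J\max_c\|\hat m_a(\cdot;c)-m_a(\cdot;c)\|^2_{L_2(P_X)}.
\]
This yields \(\max_V\|\mu_V^\eta-\mu_V\|+\max_V\|\Sigma_V^\eta-\Sigma_V\|_F=O_p(\rho^{\mathcal C}_{n_\eta})\). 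Applying Lemma~\ref{lem:criterion_perturb} with \(B=B_z\) to both terms of the decomposition gives the first display. The bound \(|\mathcal V_J(\mathcal C)|\le M^J\) then gives the second display. The optimization inequality is the same three-term telescoping used in the proof of Theorem~\ref{thm:location_learning}:
\[
\mathcal P_\tau(V^\star)-\mathcal P_\tau(\hat V)\le 2\Delta_{\tr}^{\mathcal C}+\varepsilon_{\tr}^{\mathcal C}(\hat V).
\]
The final rate follows by substitution.

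\textbf{Main obstacle.} The only delicate point is the degenerate case \(|\mathcal V_J(\mathcal C)|=1\), where the logarithm vanishes. I would handle it by reading \(\log\) as \(\log(2\vee\cdot)\); alternatively, one can note that a single Hoeffding bound already gives \(O_p(n_{\tr}^{-1/2})\), which is the same as the stated rate up to constants.
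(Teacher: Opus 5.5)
Your proposal is correct and follows the paper's proof essentially step for step. The shared steps are the conditional decomposition through the nuisance-fitted criterion, a union-bound concentration over the finite class (the paper cites Bernstein; your Hoeffding bound works equally well since everything is bounded), nuisance transfer with $\sup_{v\in\Pi(\mathcal V)}$ replaced by $\max_{c\in\mathcal C}$, the ridge perturbation lemma, and the same three-term telescoping for the optimization gap. Your remark on the degenerate case $|\mathcal V_J(\mathcal C)|=1$ is a genuine small correction the paper's statement omits: there $\log|\mathcal V_J(\mathcal C)|=0$ would wrongly suppress the $n_{\tr}^{-1/2}$ fluctuation term.
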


\begin{proof}
Let \(N_{\mathcal C}:=|\mathcal V_J(\mathcal C)|\). For \(i\in I_{\tr}\), define
\[
\hat z_{i,V}:=z_V^{\dr}(Z_i;\hat\eta),
\qquad
z_{i,V}^0:=z_V^{\dr}(Z_i;\eta_0).
\]
Conditionally on \(I_\eta\), the variables \(\{\hat z_{i,V}:i\in I_{\tr}\}\) are i.i.d. for each fixed \(V\).

The boundedness and positivity assumptions imply
\[
\|\hat z_{i,V}\|\vee\|z_{i,V}^0\|
\le
\sqrt J\,B_z
\]
for a constant \(B_z<\infty\) depending only on \(B_k,B_m,\varepsilon\). A union bound and Bernstein's inequality over the finite class give
\[
\sup_{V\in\mathcal V_J(\mathcal C)}
\|\bar z_{\tr,V}^{\dr}-\mu_V^{\hat\eta}\|
=
O_p\!\left(
\sqrt{\frac{\log N_{\mathcal C}}{n_{\tr}}}
\right),
\]
and the same entrywise argument for second moments gives
\[
\sup_{V\in\mathcal V_J(\mathcal C)}
\|S_{\tr,V}^{\dr}-\Sigma_V^{\hat\eta}\|_F
=
O_p\!\left(
\sqrt{\frac{\log N_{\mathcal C}}{n_{\tr}}}
\right).
\]

The nuisance transfer bound is the same as Lemma~\ref{lem:nuisance_transfer}, with \(\sup_{v\in\Pi(\mathcal V)}\) replaced by \(\max_{c\in\mathcal C}\). Hence
\[
\sup_{V\in\mathcal V_J(\mathcal C)}
\|\mu_V^{\hat\eta}-\mu_V\|
=
O_p(\rho^{\mathcal C}_{n_\eta}),
\qquad
\sup_{V\in\mathcal V_J(\mathcal C)}
\|\Sigma_V^{\hat\eta}-\Sigma_V\|_F
=
O_p(\rho^{\mathcal C}_{n_\eta}).
\]
The perturbation Lemma~\ref{lem:criterion_perturb} then yields
\[
\Delta_{\tr}^{\mathcal C}
=
O_p\!\left(
\sqrt{\frac{\log N_{\mathcal C}}{n_{\tr}}}
+
\rho^{\mathcal C}_{n_\eta}
\right).
\]
Since \(N_{\mathcal C}\le M^J\), the displayed simplified rate follows.

Finally, by definition of the dictionary optimization gap,
\[
\hat{\mathcal P}_{\tau,\tr}(V_{\tau,\mathcal C}^\star)
-
\hat{\mathcal P}_{\tau,\tr}(\hat V)
\le
\varepsilon_{\tr}^{\mathcal C}(\hat V).
\]
Adding and subtracting the empirical criterion gives
\[
\mathcal P_\tau(V_{\tau,\mathcal C}^\star)-\mathcal P_\tau(\hat V)
\le
2\Delta_{\tr}^{\mathcal C}
+
\varepsilon_{\tr}^{\mathcal C}(\hat V).
\]
The stochastic bound follows immediately.
\end{proof}

\section{Implementation details}
\label{app:implementation}

This appendix records the matrix implementation of the split-sample DR-ME test. Algorithm~\ref{alg:drme_appendix} gives the full testing pipeline. The remaining subsections define the matrices used to evaluate the training objective and the final statistic, and give the closed-form regression formulas used for the nuisance estimates.

\begin{algorithm}[h]
\caption{Split-sample DR-ME test with learned locations}
\label{alg:drme_appendix}
\resizebox{\linewidth}{!}{%
\begin{minipage}{\linewidth}
\small
\begin{algorithmic}[1]
\Require Data \(Z_{1:n}=\{(X_i,A_i,Y_i)\}_{i=1}^n\), search class \(\mathcal V\), number of locations \(J\), outcome kernel \(k_Y\), learning ridge \(\tau>0\), test ridge \(\gamma_{n_{\te}}>0\), level \(\alpha\)
\Ensure Learned locations \(\hat V\), statistic \(\hat\lambda_{n_{\te},\hat V}^{\dr}\), \(p\)-value \(p_{\te}^{\dr}\)

\State Split the sample into disjoint sets \(I_\eta\), \(I_{\tr}\), and \(I_{\te}\)
\State Fit \(\hat\eta=(\hat\pi,\hat m_0,\hat m_1)\) on \(I_\eta\)

\For{each candidate or optimizer evaluation \(V\in\mathcal V\)}
    \State Compute \(\hat z_{i,V}^{\dr}=z_V^{\dr}(Z_i;\hat\eta)\) for all \(i\in I_{\tr}\)
    \State Compute \(\bar z_{\tr,V}^{\dr}\) and \(S_{\tr,V}^{\dr}\)
    \State Compute
    \[
    \hat{\mathcal P}_{\tau,\tr}(V)
    =
    \bar z_{\tr,V}^{\dr\top}
    (S_{\tr,V}^{\dr}+\tau I_J)^{-1}
    \bar z_{\tr,V}^{\dr}
    \]
\EndFor

\State Select \(\hat V\in\mathcal V\) with small empirical gap
\[
\sup_{V\in\mathcal V}\hat{\mathcal P}_{\tau,\tr}(V)
-
\hat{\mathcal P}_{\tau,\tr}(\hat V)
\le
\varepsilon_{\tr}
\]

\State Compute \(\hat z_{i,\hat V}^{\dr}=z_{\hat V}^{\dr}(Z_i;\hat\eta)\) for all \(i\in I_{\te}\)
\State Compute \(\bar z_{\te,\hat V}^{\dr}\) and \(S_{\te,\hat V}^{\dr}\)
\State Compute
\[
\hat\lambda_{n_{\te},\hat V}^{\dr}
=
n_{\te}\,
\bar z_{\te,\hat V}^{\dr\top}
(S_{\te,\hat V}^{\dr}+\gamma_{n_{\te}} I_J)^{-1}
\bar z_{\te,\hat V}^{\dr}
\]
\State Set
\[
p_{\te}^{\dr}
=
1-F_{\chi^2_J}\!\left(\hat\lambda_{n_{\te},\hat V}^{\dr}\right)
\]
\State Reject if \(p_{\te}^{\dr}\le \alpha\)
\end{algorithmic}
\end{minipage}
}
\end{algorithm}

\subsection{Three-way split and matrix notation}

Let
\[
\{1,\dots,n\}=I_\eta\cup I_{\tr}\cup I_{\te}
\]
be the three-way split used for nuisance fitting, location learning, and final testing. For any evaluation set \(I=\{i_1,\dots,i_m\}\), let \(m=|I|\), let \(1_m\in\R^m\) be the all-ones vector, and define
\[
H_m:=I_m-\frac{1}{m}1_m1_m^\top .
\]

For \(V=(v_1,\dots,v_J)\in\Ycal^J\), define
\[
K_Y(Y_I,V)\in\R^{m\times J},
\qquad
[K_Y(Y_I,V)]_{rj}:=k_Y(v_j,Y_{i_r}).
\]
Thus the \(r\)-th row of \(K_Y(Y_I,V)\) is \(k_V(Y_{i_r})^\top\).

For \(a\in\{0,1\}\), define
\[
D_{a,I}
:=
\mathrm{diag}\!\left(
\frac{\1\{A_{i_1}=a\}}{\hat\pi(a\mid X_{i_1})},
\dots,
\frac{\1\{A_{i_m}=a\}}{\hat\pi(a\mid X_{i_m})}
\right).
\]
Let \(\hat M_{a,I}(V)\in\R^{m\times J}\) be the matrix of fitted arm-\(a\) regressions on \(I\), with
\[
[\hat M_{a,I}(V)]_{rj}
=
\hat m_a(X_{i_r};v_j).
\]

\subsection{Matrix form of the pseudo-features}

The arm-specific pseudo-feature matrices are
\[
\hat\Phi^a_I(V)
:=
D_{a,I}\bigl(K_Y(Y_I,V)-\hat M_{a,I}(V)\bigr)
+
\hat M_{a,I}(V),
\qquad a\in\{0,1\}.
\]
The doubly robust contrast matrix is
\[
\hat Z_I(V):=\hat\Phi^1_I(V)-\hat\Phi^0_I(V)\in\R^{m\times J}.
\]
Its \(r\)-th row is \(\hat z_{i_r,V}^{\dr\top}\). Therefore
\[
\bar z_I^{\dr}(V)
=
\frac{1}{m}\hat Z_I(V)^\top 1_m,
\qquad
S_I^{\dr}(V)
=
\frac{1}{m-1}\hat Z_I(V)^\top H_m \hat Z_I(V).
\]
The empirical learning criterion on \(I\) is
\[
\hat{\mathcal P}_{\tau,I}(V)
=
\bar z_I^{\dr}(V)^\top
\bigl(S_I^{\dr}(V)+\tau I_J\bigr)^{-1}
\bar z_I^{\dr}(V),
\]
and the final test statistic on \(I_{\te}\) is
\[
\hat\lambda_{n_{\te},V}^{\dr}
=
n_{\te}\,
\bar z_{I_{\te}}^{\dr}(V)^\top
\bigl(S_{I_{\te}}^{\dr}(V)+\gamma_{n_{\te}}I_J\bigr)^{-1}
\bar z_{I_{\te}}^{\dr}(V).
\]

\subsection{Outcome-regression matrices}

Suppose the nuisance regressions \(m_a(\cdot;V)\) are estimated on \(I_\eta\) by kernel ridge regression with a covariate kernel \(k_X\). The finite-feature ridge regressions used in the experiments are obtained as the corresponding linear-kernel case on the chosen feature basis.

For each arm \(a\), let
\[
I_{\eta,a}:=\{i\in I_\eta:\ A_i=a\},
\qquad
n_{\eta,a}:=|I_{\eta,a}|.
\]
Define
\[
K^X_{aa}
:=
\bigl[k_X(X_i,X_j)\bigr]_{i,j\in I_{\eta,a}},
\qquad
G^X_{I,a}
:=
\bigl[k_X(X_i,X_j)\bigr]_{i\in I,\ j\in I_{\eta,a}}.
\]
For a location set \(V\), define the arm-\(a\) target matrix
\[
U_a(V)
:=
K_Y(Y_{I_{\eta,a}},V)
\in\R^{n_{\eta,a}\times J}.
\]
With ridge parameter \(\lambda_a>0\),
\[
\hat M_{a,I}(V)
=
G^X_{I,a}
\bigl(K^X_{aa}+\lambda_a n_{\eta,a}I_{n_{\eta,a}}\bigr)^{-1}
U_a(V).
\]
Thus all \(J\) regression coordinates are fitted simultaneously. The matrices \(K^X_{aa}\), \(G^X_{I_{\tr},a}\), and \(G^X_{I_{\te},a}\) do not depend on \(V\); only \(U_a(V)\) changes with the candidate locations.

\subsection{Training and testing criteria}

On the training split,
\[
\hat Z_{\tr}(V)
=
\hat\Phi^1_{\tr}(V)-\hat\Phi^0_{\tr}(V),
\]
\[
\bar z_{\tr}^{\dr}(V)
=
\frac{1}{n_{\tr}}\hat Z_{\tr}(V)^\top 1_{n_{\tr}},
\qquad
S_{\tr}^{\dr}(V)
=
\frac{1}{n_{\tr}-1}
\hat Z_{\tr}(V)^\top H_{n_{\tr}}\hat Z_{\tr}(V),
\]
and
\[
\hat{\mathcal P}_{\tau,\tr}(V)
=
\bar z_{\tr}^{\dr}(V)^\top
\bigl(S_{\tr}^{\dr}(V)+\tau I_J\bigr)^{-1}
\bar z_{\tr}^{\dr}(V).
\]
After selecting \(\hat V\), the final statistic is computed by replacing \(I_{\tr}\) with \(I_{\te}\) and \(\tau\) with \(\gamma_{n_{\te}}\).

\subsection{Gradient-based Euclidean optimization}

When \(\Ycal\subset\R^d\) and \(k_Y(v,y)\) is differentiable in \(v\), the criterion \(V\mapsto\hat{\mathcal P}_{\tau,\tr}(V)\) is differentiable. Let
\[
R_a
:=
\bigl(K^X_{aa}+\lambda_a n_{\eta,a}I_{n_{\eta,a}}\bigr)^{-1}.
\]
Since \(\hat M_{a,I}(V)=G^X_{I,a}R_aU_a(V)\), for each location \(v_j\),
\[
\partial_{v_j}\hat M_{a,I}(V)
=
G^X_{I,a}R_a\,\partial_{v_j}U_a(V).
\]
The matrix \(\partial_{v_j}U_a(V)\) has only one nonzero column, with entries
\[
[\partial_{v_j}U_a(V)]_{rj}
=
\nabla_1 k_Y(v_j,Y_{i_r}),
\qquad i_r\in I_{\eta,a}.
\]
The same structure holds for \(\partial_{v_j}K_Y(Y_I,V)\).

Writing
\[
\mu_I(V):=\bar z_I^{\dr}(V),
\qquad
A_I(V):=S_I^{\dr}(V)+\tau I_J,
\]
the differential of the criterion is
\[
d\hat{\mathcal P}_{\tau,I}(V)
=
2\,(d\mu_I(V))^\top A_I(V)^{-1}\mu_I(V)
-
\mu_I(V)^\top A_I(V)^{-1}
(dA_I(V))
A_I(V)^{-1}\mu_I(V).
\]
These formulas give analytic gradients. Automatic differentiation can also be applied directly to the matrix expressions above.

\subsection{Finite-dictionary implementation}

For a finite dictionary
\[
\mathcal C=\{c_1,\dots,c_M\}\subset\Ycal,
\]
precompute
\[
K_Y(Y_{I_{\eta,a}},\mathcal C),
\qquad
K_Y(Y_{I_{\tr}},\mathcal C),
\qquad
K_Y(Y_{I_{\te}},\mathcal C).
\]
A candidate \(V=(c_{j_1},\dots,c_{j_J})\in\mathcal C^J\) is then evaluated by selecting columns \(j_1,\dots,j_J\) from these matrices. The formulas above apply without modification.

\subsection{Numerical precomputation}

The implementation precomputes
\[
R_a,
\qquad
G^X_{I_{\tr},a},
\qquad
G^X_{I_{\te},a},
\qquad
D_{a,\tr},
\qquad
D_{a,\te},
\qquad a\in\{0,1\}.
\]
For each new \(V\), the remaining operations are the construction or column selection of \(K_Y(Y_I,V)\) and \(U_a(V)\), followed by multiplications involving \(|I|\times n_{\eta,a}\) and \(n_{\eta,a}\times J\) matrices. In the finite-location regime, \(J\) is small, so the dominant cost is typically location search rather than the final Hotelling statistic.

\section{Additional experiments}
\label{app:additional_experiments}

\paragraph{Common synthetic setup.}
Unless stated otherwise, we generate \(X_i\sim N(0,I_5)\) and use the confounded treatment mechanism
\[
\pi_0(1\mid X_i)
=
\operatorname{clip}\{
\sigma(0.90X_{i1}-0.75X_{i2}+0.55X_{i3}-0.40X_{i4}),
0.06,0.94\},
\]
with prognostic component
\[
g(x)
=
0.90x_1+0.60\sin(x_2)+0.35(x_3^2-1)+0.25x_1x_4-0.20\cos(x_5).
\]
Each sample is split into \(I_\eta\), \(I_{\tr}\), and \(I_{\te}\), used respectively for nuisance fitting, location learning, and final testing. Propensities are estimated by ridge-regularized logistic regression, and the conditional kernel regressions \(m_a(x;c)=E[k_Y(c,Y)\mid A=a,X=x]\) are estimated by multi-output ridge regression on nonlinear features of \(X\). We use a Gaussian outcome kernel with bandwidth chosen on \(I_\eta\cup I_{\tr}\), a dictionary of \(M=80\) candidate outcome locations, \(J=2\) selected locations, and nominal level \(\alpha=0.05\). All rejection rates are based on \(200\) Monte Carlo replications; the Monte Carlo standard error at level \(0.05\) is approximately \(0.015\).

\subsection{Computation infrastructure}

All experiments were run locally on a professional laptop; no remote servers, cluster
resources, or external GPUs were used. The reported runtimes and empirical
results therefore reflect a single-machine CPU-based setup.

\begin{itemize}[label=\textbullet]
\item \textbf{Machine:} Apple MacBook Pro
\item \textbf{Model Identifier:} Mac16,1
\item \textbf{Chip:} Apple M4
\item \textbf{CPU:} 10 cores (4 performance cores, 6 efficiency cores)
\item \textbf{Memory:} 24\,GB
\item \textbf{Operating System:} macOS 15.0
\item \textbf{Kernel / Architecture:} Darwin 24.0.0, \texttt{arm64}
\item \textbf{Compute:} CPU-only execution; no external GPU or remote compute was used
\end{itemize}

\subsection{Sharp-null calibration under observational confounding.}
We first isolate type-I error under a sharp interventional null:
\[
Y_i(0)=Y_i(1)=g(X_i)+\varepsilon_i,
\qquad
\varepsilon_i\sim N(0,1).
\]
Thus \(P_{Y(0)}=P_{Y(1)}\), but the observed treated and control outcome distributions differ because treatment depends on prognostic covariates. Figure~\ref{fig:app_exp1_calibration} and Table~\ref{tab:app_exp1_calibration} show that split-sample DR-ME remains close to nominal level across sample sizes, as does the random-location DR variant. By contrast, the naive observed ME test strongly over-rejects because it ignores confounding, the no-split DR-ME ablation over-rejects because locations are learned and tested on the same data, and the plug-in DM/IPW variants show the instability expected from non-orthogonal or non-augmented procedures. This experiment confirms that calibration comes from the orthogonal score together with an independent final test split.

\begin{figure}[t]
\centering
\includegraphics[width=0.6\linewidth]{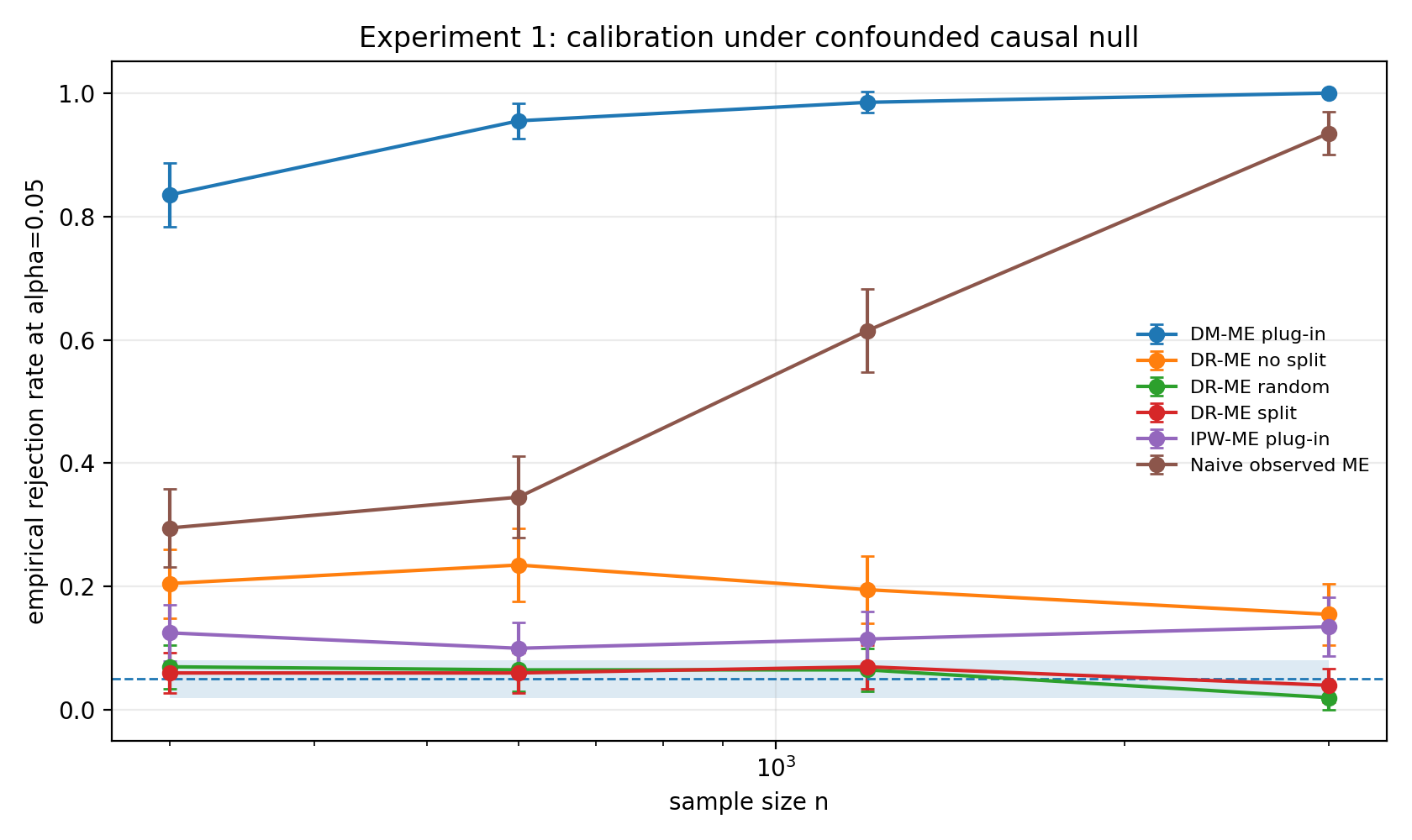}
\caption{
Sharp-null calibration under observational confounding. The dashed line is the nominal level \(\alpha=0.05\). Split DR-ME and random-location DR-ME remain calibrated; naive observed testing, no-split selection, and non-orthogonal plug-in baselines over-reject.
}
\label{fig:app_exp1_calibration}
\end{figure}

\begin{table}[t]
\centering
\caption{
Empirical rejection rates under the sharp confounded null \(Y(0)=Y(1)\). The interventional laws are equal, but observed treated and control outcome distributions differ because of confounding.
}
\label{tab:app_exp1_calibration}
\begin{tabular}{lcccc}
\toprule
Method & \(n=300\) & \(n=600\) & \(n=1200\) & \(n=3000\) \\
\midrule
DR-ME split & 0.060 & 0.060 & 0.070 & 0.040 \\
DR-ME random & 0.070 & 0.065 & 0.065 & 0.020 \\
IPW-ME plug-in & 0.125 & 0.100 & 0.115 & 0.135 \\
DM-ME plug-in & 0.835 & 0.955 & 0.985 & 1.000 \\
Naive observed ME & 0.295 & 0.345 & 0.615 & 0.935 \\
DR-ME no split & 0.205 & 0.235 & 0.195 & 0.155 \\
\bottomrule
\end{tabular}
\end{table}

\subsection{Power under confounded alternatives.}
We next use the same confounded observational design, nuisance estimators, and three-way split, but replace the sharp null by three alternatives. With \(Y_i(0)=g(X_i)+\varepsilon_{i0}\), we consider a broad mean shift,
\[
Y_i(1)=g(X_i)+\varepsilon_{i1}+0.35,
\]
a variance shift,
\[
Y_i(1)=g(X_i)+1.45\,\varepsilon_{i1},
\]
and a mean-zero mixture perturbation,
\[
Y_i(1)
=
g(X_i)+\varepsilon_{i1}
+
\frac{B_i-q}{\sqrt{q(1-q)}},
\qquad
B_i\sim\mathrm{Bernoulli}(q),\quad q=0.12 .
\]
The last alternative changes the shape and tail behavior of the interventional outcome law without changing the mean of the added component. We compare DR-ME with DR-xKTE, a global doubly robust kernel test, using the same held-out test split. We also retain finite-location diagnostics from the calibration experiment: random-location DR-ME, IPW-ME, DM-ME, naive observed ME, and the no-split DR-ME ablation.

\begin{table}[t]
\centering
\caption{Empirical rejection rates for DR-ME and DR-xKTE. Under the null, the entries estimate type-I error. Under the alternatives, they estimate power.}
\label{tab:power_odrme_drxkte}
\begin{tabular}{llcccc}
\toprule
Setting & Method & $n=300$ & $n=600$ & $n=1200$ & $n=3000$ \\
\midrule
Null & DR-ME & 0.055 & 0.075 & 0.060 & 0.080 \\
 & DR-xKTE & 0.075 & 0.045 & 0.030 & 0.070 \\
\midrule
Mean shift & DR-ME & 0.115 & 0.225 & 0.505 & 0.885 \\
 & DR-xKTE & 0.155 & 0.240 & 0.530 & 0.840 \\
\midrule
Variance shift & DR-ME & 0.115 & 0.155 & 0.335 & 0.790 \\
 & DR-xKTE & 0.065 & 0.080 & 0.190 & 0.560 \\
\midrule
Localized bump & DR-ME & 0.135 & 0.165 & 0.225 & 0.605 \\
 & DR-xKTE & 0.035 & 0.060 & 0.170 & 0.295 \\
\bottomrule
\end{tabular}
\end{table}

\begin{figure}[t]
\centering
\includegraphics[width=\linewidth]{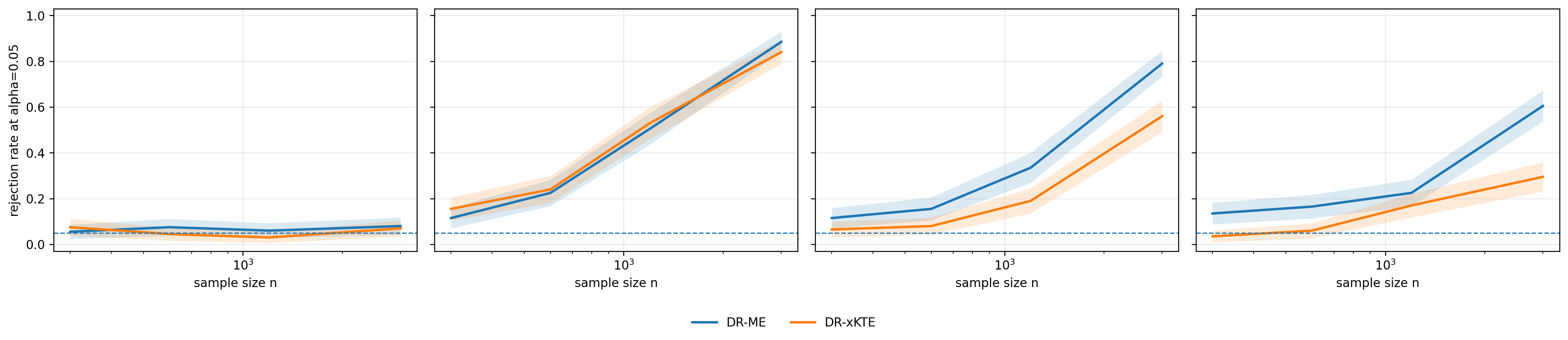}
\caption{
DR-ME versus the global DR-xKTE baseline. Both methods are close to nominal level under the confounded null. DR-ME is comparable on the mean shift and stronger on the variance-shift and localized-bump alternatives in this design.
}
\label{fig:power_odrme_drxkte}
\end{figure}

Table~\ref{tab:power_odrme_drxkte} and Figure~\ref{fig:power_odrme_drxkte} show that DR-ME and DR-xKTE have comparable calibration under the confounded null. On the broad mean-shift alternative, the two methods are similar: DR-xKTE is slightly stronger at smaller sample sizes, while DR-ME is slightly stronger at \(n=3000\). On the variance-shift and localized-bump alternatives, DR-ME is stronger at larger sample sizes. At \(n=3000\), DR-ME rejects with probabilities \(0.790\) and \(0.605\), compared with \(0.560\) and \(0.295\) for DR-xKTE. This is consistent with the finite-location objective: when a small number of whitened witness evaluations captures the discrepancy, location learning can concentrate power on informative parts of the outcome distribution rather than averaging globally.

\begin{table}[t]
\centering
\caption{Diagnostic baselines at $n=3000$. These baselines are included to interpret the power curves, not all as valid calibrated competitors.}
\label{tab:power_diagnostics_n3000}
\begin{tabular}{lcccc}
\toprule
Method & Null & Mean shift & Variance shift & Localized bump \\
\midrule
DR-ME & 0.080 & 0.885 & 0.790 & 0.605 \\
DR-xKTE & 0.070 & 0.840 & 0.560 & 0.295 \\
DR-ME-Rand & 0.045 & 0.830 & 0.690 & 0.475 \\
IPW-ME plug-in & 0.110 & 0.685 & 0.590 & 0.590 \\
DM-ME plug-in & 0.990 & 1.000 & 1.000 & 1.000 \\
Naive observed ME & 0.960 & 1.000 & 0.990 & 0.980 \\
DR-ME-NoSplit & 0.190 & 0.965 & 0.880 & 0.845 \\
\bottomrule
\end{tabular}
\end{table}

Table~\ref{tab:power_diagnostics_n3000} clarifies which rejection rates can be interpreted as calibrated power. The naive observed ME test rejects the null with probability \(0.960\), because it tests observed treated-versus-control outcome distributions rather than interventional laws. The no-split ablation also over-rejects, showing the selection bias induced by learning and testing locations on the same data. The DM plug-in baseline is not a valid calibrated comparator in this experiment, since it has near-unit null rejection. The calibrated comparison is therefore mainly among DR-ME, DR-ME-Rand, and DR-xKTE. Within that group, learned locations improve over random locations, and DR-ME gives the largest rejection rates on the variance and localized-bump alternatives.

\subsection{High-dimensional location-learning ablation}
\label{sec:exp_hd_whitening}

This experiment isolates the role of the covariance-whitened learning criterion. The final test is held fixed across methods: all variants compute the same split-sample DR-ME Hotelling statistic on \(I_{\te}\). Only the location-learning rule on \(I_{\tr}\) changes. We compare full whitening,
\[
\hat\mu_V^\top(\hat\Sigma_V+\tau I)^{-1}\hat\mu_V,
\]
raw witness maximization, \(\|\hat\mu_V\|^2\), and random dictionary locations.

Outcomes are vector-valued, \(Y\in\R^{d_Y}\), with \(d_Y\in\{5,10,25,50\}\). Under the null,
\[
Y(0)=Y(1)=\mu(X)+\varepsilon .
\]
Under the alternative, rare mass is shifted toward two sparse regions:
\[
Y(0)=\mu(X)+\varepsilon_0-\Delta v_1\mathbf 1\{B_0=1\}-\Delta v_2\mathbf 1\{B_0=2\},
\]
\[
Y(1)=\mu(X)+\varepsilon_1+\Delta v_1\mathbf 1\{B_1=1\}+\Delta v_2\mathbf 1\{B_1=2\},
\]
where \(\Pr(B_a=1)=\Pr(B_a=2)=0.04\) and \(\Delta=4\). Candidate locations are observed outcomes from the training split, so most dictionary points are background points rather than bump locations. We use \(n=3000\), \(M=300\) dictionary candidates, \(J=3\) selected locations, and \(200\) Monte Carlo replications.

The main text reports null rejection rates and power. Here we report selection diagnostics explaining the power gap.

\begin{table}[t]
\centering
\caption{Selection diagnostics under the high-dimensional two-bump alternative. "Any bump" is the probability that at least one selected location lies in a true bump region. "Distinct regions" is the average number of distinct bump regions selected. "Avg. corr." is the average absolute correlation among the selected DR features on the training split.}
\label{tab:exp_hd_selection}
\begin{tabular}{llccc}
\toprule
$d_Y$ & Method & Any bump & Distinct regions & Avg. corr. \\
\midrule
5 & DR-ME & 1.000 & 2.495 & 0.173 \\
 & Raw witness & 0.180 & 0.205 & 0.765 \\
 & Random locations & 0.200 & 0.230 & 0.384 \\
\midrule
10 & DR-ME & 1.000 & 2.240 & 0.170 \\
 & Raw witness & 0.295 & 0.315 & 0.646 \\
 & Random locations & 0.245 & 0.270 & 0.383 \\
\midrule
25 & DR-ME & 0.870 & 1.470 & 0.248 \\
 & Raw witness & 0.420 & 0.500 & 0.529 \\
 & Random locations & 0.205 & 0.210 & 0.378 \\
\midrule
50 & DR-ME & 0.890 & 1.300 & 0.274 \\
 & Raw witness & 0.750 & 1.010 & 0.434 \\
 & Random locations & 0.210 & 0.220 & 0.360 \\
\bottomrule
\end{tabular}
\end{table}

Table~\ref{tab:exp_hd_selection} shows that full whitening selects informative and less redundant coordinates. It hits at least one true bump region much more often than random locations, and it selects more distinct bump regions than the raw witness objective. It also has lower average feature correlation than raw witness maximization. Thus the gain in the main power table is not merely from finding large empirical witness values: the covariance term changes the selected finite-dimensional projection by penalizing redundant or high-noise coordinates, as predicted by the criterion \(\mu_V^\top\Sigma_V^{-1}\mu_V\).

\subsection{Local noncentral chi-square validation}

This experiment validates the local asymptotic law of the fixed-location
orthogonalized ME statistic. Unlike the previous power experiments, the goal is
not to compare against external baselines. The goal is to check whether, once
the locations are fixed independently of the test sample, the empirical
Hotelling statistic follows the predicted central and noncentral chi-square
laws.

The theoretical prediction is the following. For fixed locations $V$ and local
alternatives of size $n^{-1/2}$,
\begin{equation}
\sqrt n \, \bar z^{\mathrm{dr}}_{n,V}
\Rightarrow
N\left(h\eta_V(g),\Sigma_V\right),
\end{equation}
and therefore
\begin{equation}
\widehat\lambda^{\mathrm{dr}}_{n,V}
=
n
\bar z^{\mathrm{dr}\top}_{n,V}
\left(S^{\mathrm{dr}}_{n,V}+\gamma_n I_J\right)^{-1}
\bar z^{\mathrm{dr}}_{n,V}
\Rightarrow
\chi^2_J\left(h^2\lambda_V(g)\right),
\end{equation}
where
\begin{equation}
\lambda_V(g)=\eta_V(g)^\top\Sigma_V^{-1}\eta_V(g).
\end{equation}
Thus, under the null $h=0$, the statistic should be approximately central
$\chi^2_J$. Under local alternatives, the rejection probability should follow
the power curve of a noncentral $\chi^2_J$ distribution with noncentrality
parameter $h^2\lambda_V(g)$.

We use the same observational structure as in the previous experiments:
\begin{equation}
X_i \sim N(0,I_5),
\qquad
A_i\mid X_i \sim \operatorname{Bernoulli}(\pi_0(X_i)),
\end{equation}
with
\begin{equation}
\pi_0(1\mid X_i)
=
\operatorname{clip}
\left\{
\sigma(0.90X_{i1}-0.75X_{i2}+0.55X_{i3}-0.40X_{i4}),
0.06,
0.94
\right\}.
\end{equation}
The prognostic component is
\begin{equation}
g(x)
=
0.90x_1
+
0.60\sin(x_2)
+
0.35(x_3^2-1)
+
0.25x_1x_4
-
0.20\cos(x_5).
\end{equation}
The local path is a treatment-arm mean shift of order $n^{-1/2}$:
\begin{equation}
Y_i(0)=g(X_i)+\varepsilon_{i0},
\qquad
Y_i(1)=g(X_i)+\varepsilon_{i1}+\delta_n,
\qquad
\delta_n=\frac{h}{\sqrt n},
\end{equation}
with $\varepsilon_{i0},\varepsilon_{i1}\sim N(0,1)$. The treatment mechanism
and the covariate distribution are unchanged along the path. Hence the only
local perturbation is in the interventional outcome law under treatment.

We use a Gaussian kernel on the outcome. Since the conditional residual law is
Gaussian, the nuisance regressions
\[
m_a(x;v)=E[k_Y(v,Y)\mid A=a,X=x]
\]
are available in closed form. For $Y=g(x)+\delta+\varepsilon$,
$\varepsilon\sim N(0,\sigma_Y^2)$, and
$k_Y(v,y)=\exp\{-(v-y)^2/(2\ell_Y^2)\}$,
\begin{equation}
m_\delta(x;v)
=
\left(\frac{\ell_Y^2}{\ell_Y^2+\sigma_Y^2}\right)^{1/2}
\exp\left\{
-\frac{(v-g(x)-\delta)^2}{2(\ell_Y^2+\sigma_Y^2)}
\right\}.
\end{equation}
We use these true nuisance functions in this experiment to isolate the local
chi-square approximation from nuisance-estimation error. This is a deliberate
theory-validation experiment: first-stage estimation is not the object being
tested here.

The locations are also fixed independently of the Monte Carlo testing samples.
A large independent pilot sample under the null is used to select $J=2$
locations by maximizing the local noncentrality proxy
\begin{equation}
\eta_V^\top(\Sigma_V+\tau I_J)^{-1}\eta_V.
\end{equation}
The selected locations are then frozen for all repetitions, all sample sizes,
and all local-alternative strengths. This avoids any leakage from the test
sample into the location choice.

The pilot configuration is:
\begin{align}
n_{\mathrm{pilot}} &= 100000, \\
M &= 100, \\
J &= 2, \\
\widehat V &= (-2.6752,\; 3.9031), \\
\ell_Y &= 1.4289, \\
\widehat\lambda_V &= 0.1383.
\end{align}
We then run $2000$ Monte Carlo repetitions for each
$n\in\{500,1000,3000\}$ and
$h\in\{0,1,2,3,4,6,8\}$. The rejection threshold is the
$0.95$ quantile of $\chi^2_2$.

\begin{table}[t]
\centering
\caption{Empirical rejection rates under local alternatives,
compared with the noncentral chi-square prediction. The theoretical power is
computed from $\chi^2_2(h^2\widehat\lambda_V)$ with
$\widehat\lambda_V=0.1383$.}
\label{tab:exp4_local_power}
\begin{tabular}{cccccc}
\toprule
$h$ & $h^2\widehat\lambda_V$ & Theory & $n=500$ & $n=1000$ & $n=3000$ \\
\midrule
0 & 0.000 & 0.050 & 0.053 & 0.044 & 0.054 \\
1 & 0.138 & 0.061 & 0.063 & 0.059 & 0.049 \\
2 & 0.553 & 0.094 & 0.107 & 0.101 & 0.095 \\
3 & 1.245 & 0.155 & 0.164 & 0.166 & 0.167 \\
4 & 2.213 & 0.246 & 0.260 & 0.269 & 0.275 \\
6 & 4.979 & 0.502 & 0.555 & 0.519 & 0.524 \\
8 & 8.851 & 0.763 & 0.810 & 0.786 & 0.770 \\
\bottomrule
\end{tabular}
\end{table}

Table~\ref{tab:exp4_local_power} shows that the empirical rejection
probabilities closely follow the noncentral chi-square prediction. Under the
null, corresponding to $h=0$, the rejection rate is approximately $5\%$ at all
sample sizes. This verifies the central chi-square calibration of the
fixed-location statistic. Under local alternatives, the rejection probability
increases with $h$ and tracks the theoretical curve well. The agreement is
especially good at $n=1000$ and $n=3000$, as expected from an asymptotic local
theory.

At the smaller sample size $n=500$, the empirical power is slightly above the
asymptotic prediction for larger values of $h$, especially at $h=6$ and $h=8$.
This is not a failure of the theory. For large $h$, the finite-sample shift
$\delta_n=h/\sqrt n$ is no longer extremely small, so the experiment is moving
away from the strictly local regime. The asymptotic prediction should be most
accurate when $n$ is larger and the same $h$ corresponds to a smaller actual
shift.

\begin{figure}[t]
\centering
\includegraphics[width=0.85\linewidth]{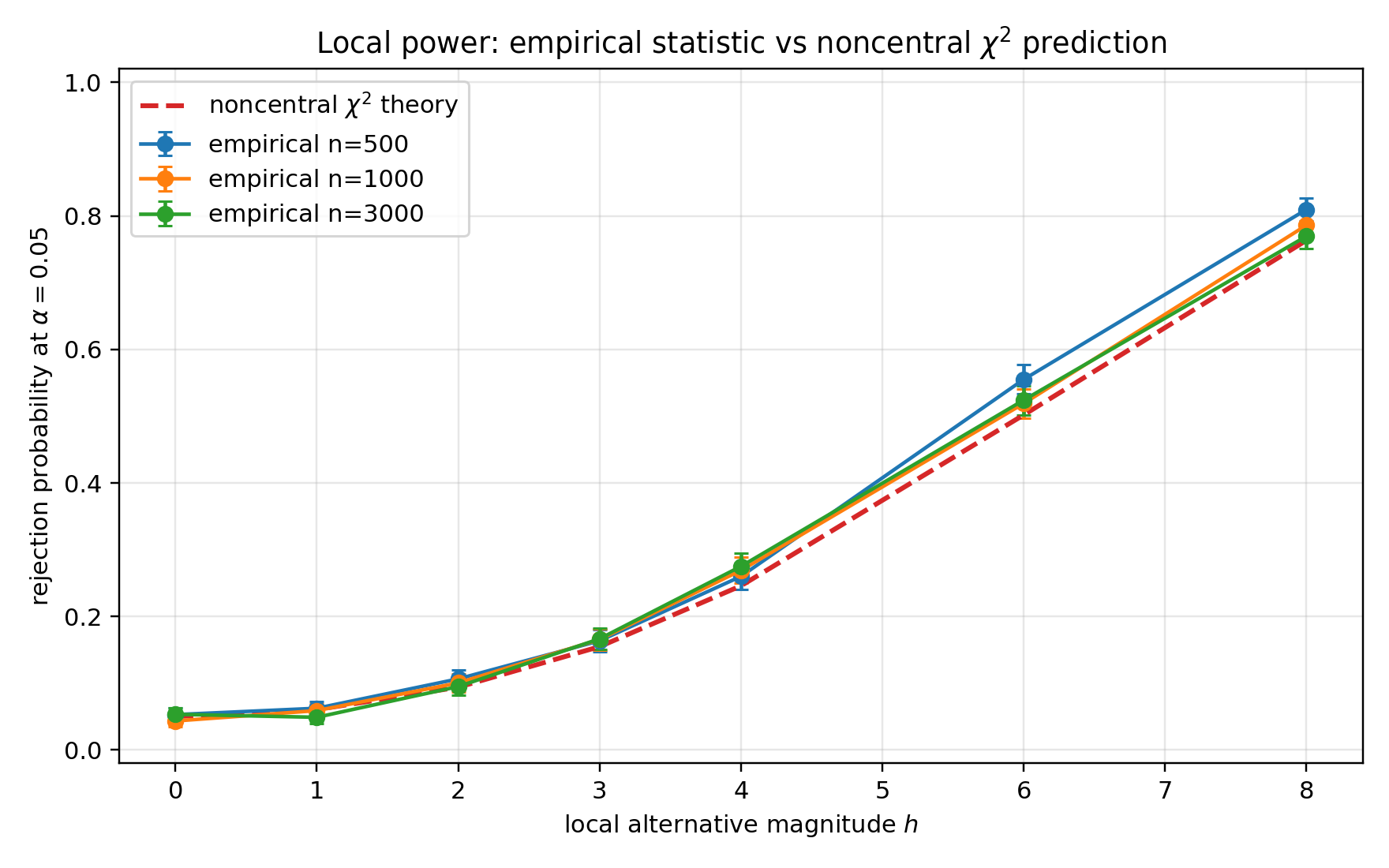}
\caption{Empirical rejection rates against the noncentral
chi-square prediction. The black theoretical curve is computed from
$\chi^2_2(h^2\widehat\lambda_V)$. The empirical curves for
$n=500,1000,3000$ closely follow the predicted local-power curve, with the
largest-sample curve giving the closest agreement.}
\label{fig:exp4_power_vs_theory}
\end{figure}

Figure~\ref{fig:exp4_power_vs_theory} is the main figure for the experiment.
It directly validates the local-power formula. The horizontal axis is the local
strength $h$, not the raw shift $\delta_n$. The raw shift changes with $n$ as
$\delta_n=h/\sqrt n$, while the theory predicts that the asymptotic power
should depend on $h$ through $h^2\lambda_V$. The empirical curves are close to
the theoretical curve, which supports the claim that $\lambda_V$ is the relevant
whitened local signal-to-noise ratio.

\begin{figure}[t]
\centering
\includegraphics[width=0.75\linewidth]{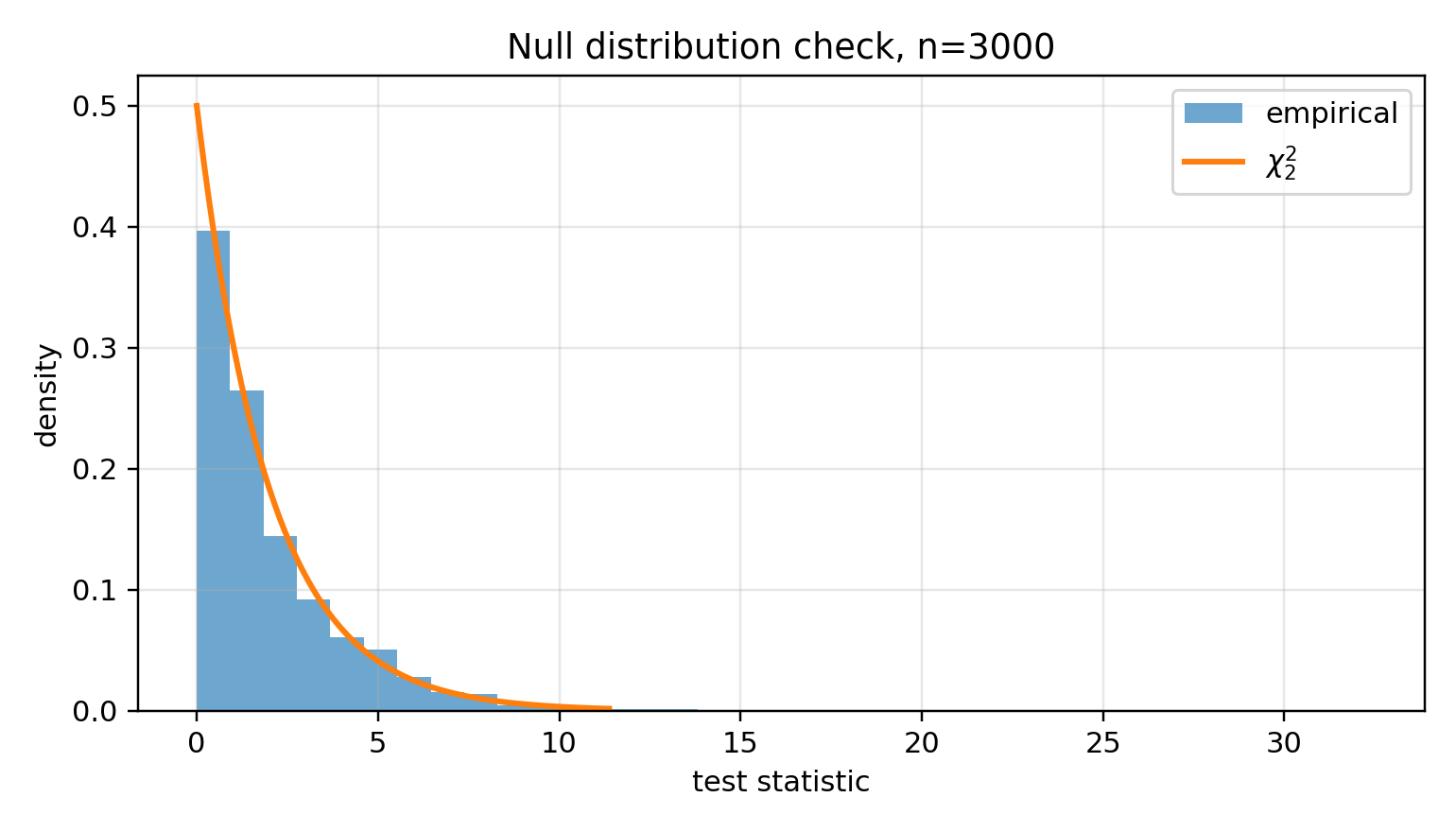}
\caption{Null distribution of the statistic at the largest sample
size. The histogram shows the Monte Carlo distribution of
$\widehat\lambda^{\mathrm{dr}}_{n,V}$ under $h=0$, while the curve shows the
central $\chi^2_2$ density.}
\label{fig:exp4_null_hist}
\end{figure}

Figure~\ref{fig:exp4_null_hist} checks the full null distribution, not only the
rejection probability at level $0.05$. The empirical histogram aligns well with
the central $\chi^2_2$ reference law. This supports the fixed-location null
calibration of the statistic and confirms that the covariance normalization is
working as intended.

\begin{figure}[t]
\centering
\includegraphics[width=0.75\linewidth]{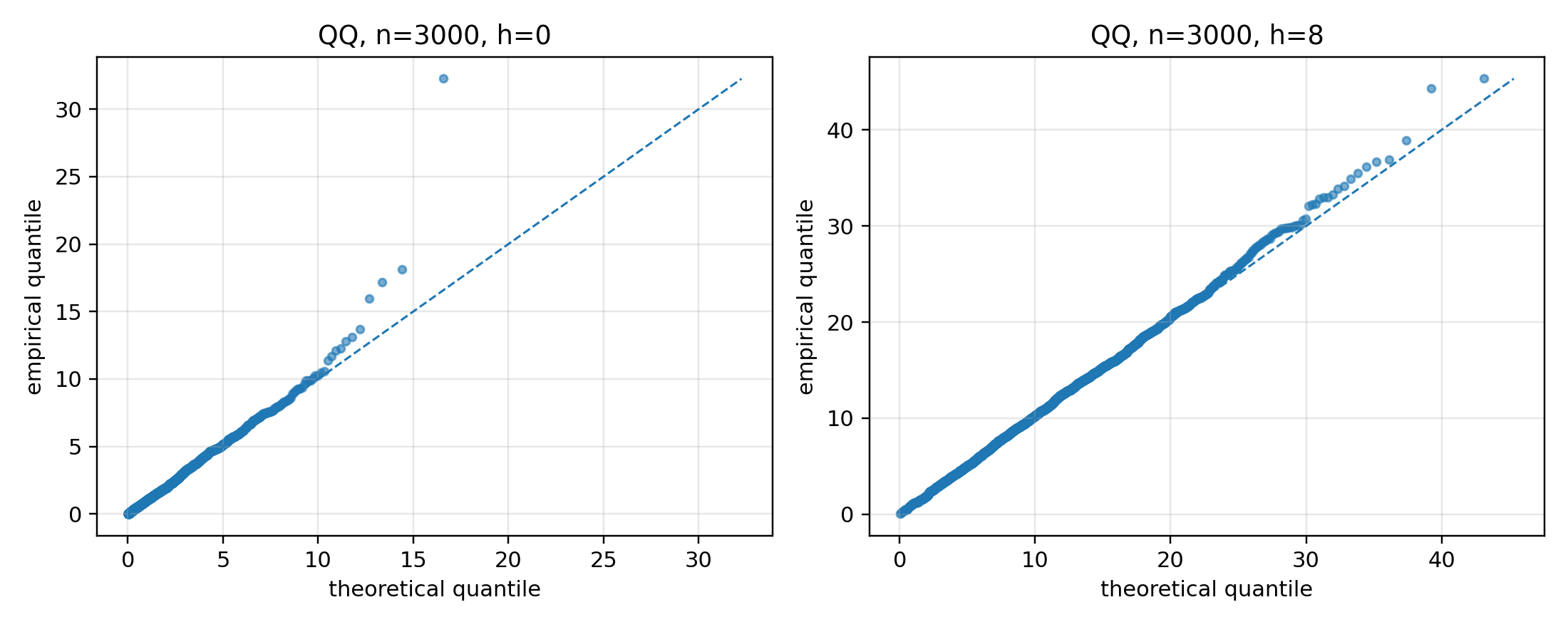}
\caption{QQ plot at the largest sample size. Empirical quantiles
of the statistic are compared with the corresponding noncentral chi-square
quantiles.}
\label{fig:exp4_qq}
\end{figure}

Figure~\ref{fig:exp4_qq} gives a stronger distributional diagnostic. A power
curve only checks one tail probability for each $h$, whereas the QQ plot checks
the entire distribution. The approximate alignment with the diagonal indicates
that the statistic is not merely producing the correct rejection probability,
but is close to the predicted noncentral chi-square distribution over a broad
range of quantiles.

\begin{figure}[t]
\centering
\includegraphics[width=0.75\linewidth]{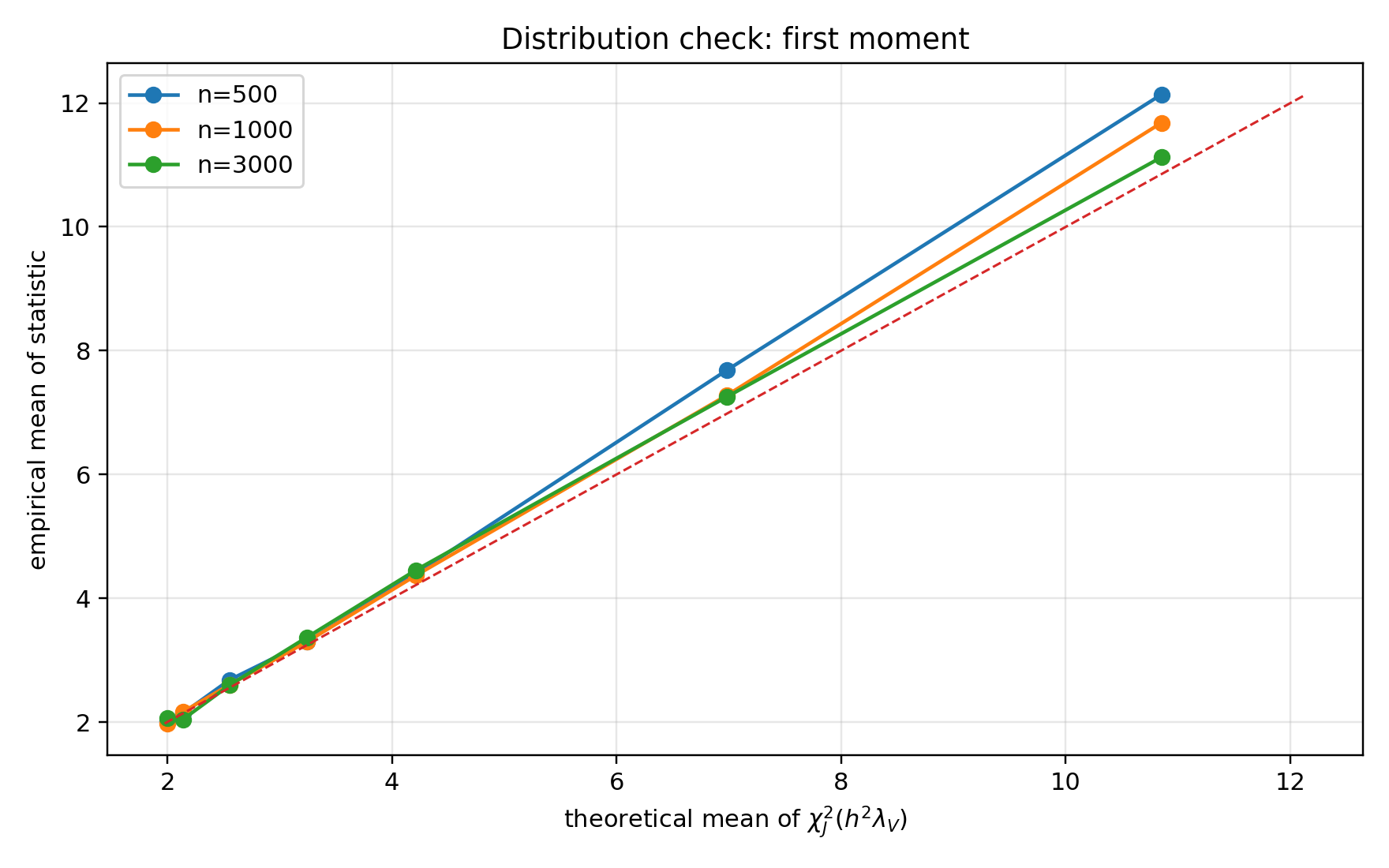}
\caption{Moment diagnostic. The empirical mean of
$\widehat\lambda^{\mathrm{dr}}_{n,V}$ is compared with the theoretical mean
$J+h^2\widehat\lambda_V$ of the noncentral $\chi^2_J$ distribution.}
\label{fig:exp4_mean_diagnostic}
\end{figure}

Figure~\ref{fig:exp4_mean_diagnostic} compares the empirical first moment of
the statistic with the theoretical mean of the limiting noncentral chi-square
law:
\begin{equation}
E\left[\chi^2_J(h^2\lambda_V)\right]
=
J+h^2\lambda_V.
\end{equation}
The empirical means follow this prediction closely. The mild upward deviation
for larger $h$ at smaller $n$ is consistent with the finite-sample behavior
already visible in the power curves: for large local strengths and moderate
$n$, the alternatives are no longer perfectly infinitesimal.

Overall, this experiment supports the local asymptotic theory in two ways.
First, under $h=0$, the statistic has the predicted central chi-square behavior.
Second, under $n^{-1/2}$ local alternatives, its rejection probability and
distributional shape are well described by the noncentral chi-square law with
noncentrality parameter $h^2\lambda_V$. This validates the main theoretical
interpretation of the statistic and justifies using the whitened quantity
$\lambda_V$ as the target criterion for location learning.

Once $V$ is fixed independently of the
test sample, the statistic behaves exactly as the local Gaussian and
noncentral chi-square theory predicts.

\subsection{Runtime comparison with a global DR kernel test}

We finally compare the computational cost of the proposed finite-location test
against a global doubly robust kernel treatment-effect test. The purpose of this
experiment is not to study power, but to quantify the practical cost of the two
ways of implementing DR-ME and to compare them with DR-xKTE. The comparison is
important because DR-ME ultimately tests a low-dimensional statistic indexed by
$J$ learned locations, whereas DR-xKTE remains a global kernel statistic over the
evaluation sample.

We use the same observational data-generating mechanism as in the previous
experiments and the same three-way split
\[
\{1,\ldots,n\}=I_\eta \cup I_{\mathrm{tr}} \cup I_{\mathrm{te}}.
\]
The nuisance split $I_\eta$ is used to fit the propensity and, for DR-ME, the
conditional kernel regressions. The training split $I_{\mathrm{tr}}$ is used to
learn locations for DR-ME, and the final test is evaluated on
$I_{\mathrm{te}}$. This split is kept fixed across implementations so that the
measured difference is due to the testing procedure rather than to different data
usage.

We compare three methods:
\begin{itemize}
    \item \textbf{DR-ME-Dict}: the finite-dictionary implementation of DR-ME.
    A dictionary $C=\{c_1,\ldots,c_M\}$ of candidate outcome locations is formed,
    all outcome-kernel evaluations $K_Y(Y,C)$ are precomputed, and locations are
    selected by greedy maximization of the ridge-stabilized local-power criterion.
    We use $J=2$ selected locations and $M=80$ dictionary candidates.

    \item \textbf{DR-ME-Grad}: the continuous-location implementation of DR-ME.
    Locations are optimized by gradient ascent in the Euclidean outcome space,
    using the differentiable version of the same ridge-stabilized criterion. We
    use three gradient steps in this runtime experiment.

    \item \textbf{DR-xKTE}: a global doubly robust kernel test for the kernel
    treatment effect. The propensity is fit on $I_\eta$ and evaluated on
    $I_{\mathrm{te}}$, as in DR-ME. The statistic is then computed on the final
    test split. We use a Cholesky-based linear solver for the internal kernel
    ridge systems, so this is not an intentionally slow implementation.
\end{itemize}

For each method, we decompose runtime into nuisance fitting, location learning
when applicable, and final testing. For DR-ME-Dict, the learning time includes
precomputing the dictionary kernel matrices and greedy location search. For
DR-ME-Grad, the learning time is the continuous gradient-ascent optimization.
For DR-xKTE, there is no location-learning phase; the method-specific cost is the
global kernel statistic computed on $I_{\mathrm{te}}$. We report averages over
five random seeds for each sample size
\[
n\in\{300,600,1200,3000,5000,10000\}.
\]

\begin{table}[t]
\centering
\caption{Total runtime in seconds. Total runtime includes nuisance fitting, location learning when applicable, and final testing. Results are averaged over five random seeds with $J=2$, $M=80$, and three gradient steps for DR-ME-Grad.}
\label{tab:runtime_total_drkte}
\begin{tabular}{rrrr}
\toprule
$n$ & DR-ME-Dict & DR-ME-Grad & DR-xKTE  \\
\midrule
300 & 0.0048 & 0.0019 & 0.0006 \\
600 & 0.0059 & 0.0032 & 0.0012 \\
1200 & 0.0132 & 0.0107 & 0.0046 \\
3000 & 0.0223 & 0.0282 & 0.0366 \\
5000 & 0.0395 & 0.0508 & 0.1252 \\
10000 & 0.2080 & 0.3631 & 0.9982 \\
\bottomrule
\end{tabular}
\end{table}

\begin{figure}[t]
\centering
\includegraphics[width=0.78\linewidth]{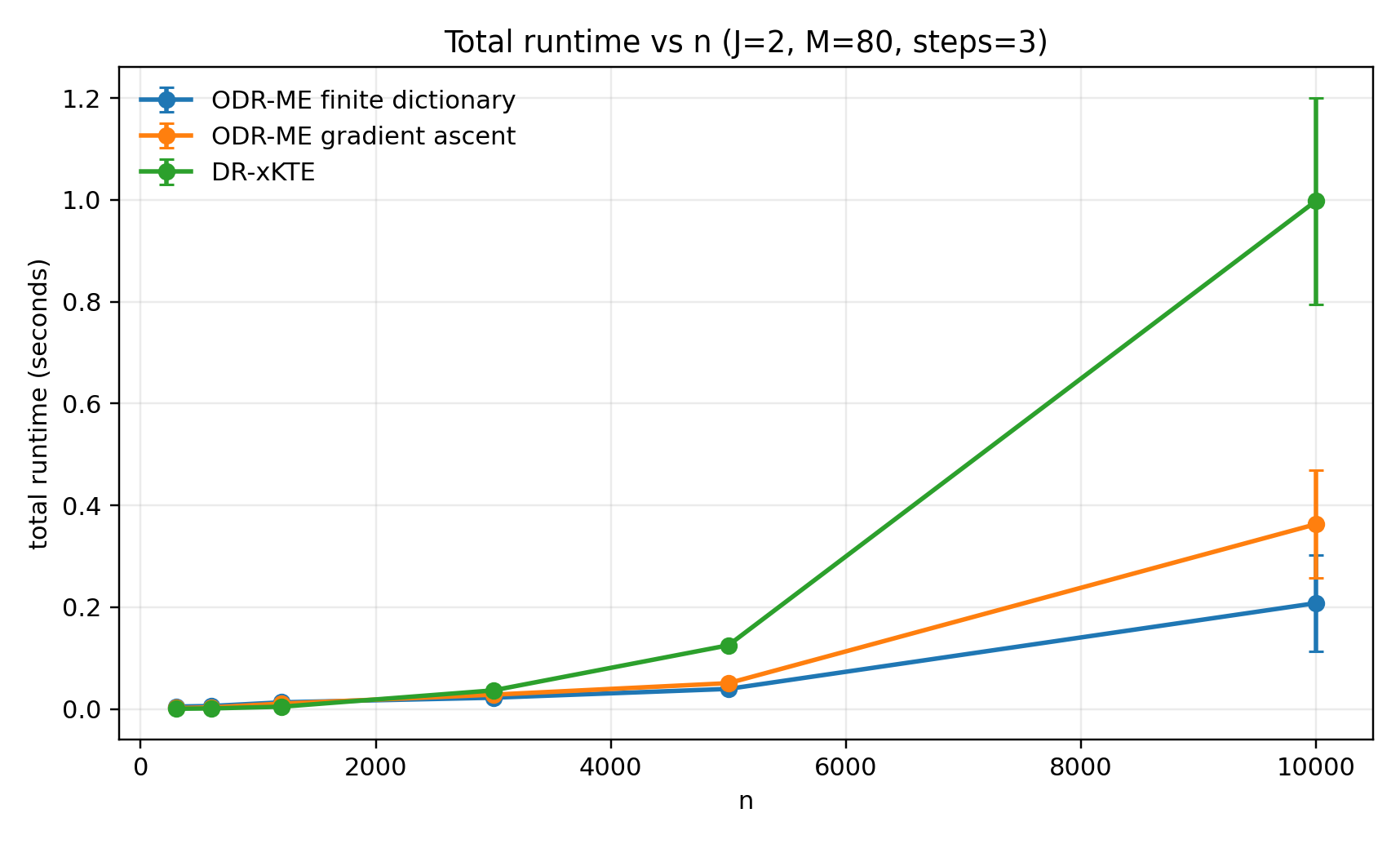}
\caption{Total runtime versus sample size. DR-xKTE is competitive at small sample
sizes, where the global kernel computation is still cheap. At larger sample
sizes, the finite-dictionary DR-ME implementation becomes faster. Continuous
gradient ascent becomes the most expensive implementation at large $n$, even
with only three gradient steps.}
\label{fig:runtime_total_drkte}
\end{figure}

Table~\ref{tab:runtime_total_drkte} and
Figure~\ref{fig:runtime_total_drkte} show that the relative cost changes with
sample size. For $n\leq 1200$, all methods are cheap, and DR-xKTE is often the
fastest method. This is expected: at small sample sizes, the overhead of
precomputing dictionary features and running location search is not yet offset by
DR-ME's low-dimensional final statistic. At $n=3000$, DR-xKTE remains slightly
faster than DR-ME-Dict in this optimized implementation. However, for
$n=5000$ and $n=10000$, DR-ME-Dict becomes faster than DR-xKTE. At $n=10000$,
DR-ME-Dict is faster than DR-xKTE.

This scaling is the relevant practical point. DR-xKTE is a global kernel test and
therefore continues to manipulate kernel matrices on the evaluation sample. In
contrast, after location learning, DR-ME computes a Hotelling statistic in only
$J=2$ dimensions. The finite-dictionary implementation also exploits the fact
that all candidate outcome-kernel evaluations can be precomputed once and reused
during greedy search.

\begin{table}[t]
\centering
\caption{Core method runtime in seconds. Core runtime removes nuisance fitting and keeps only location learning plus the final test for DR-ME, or the global kernel statistic for DR-xKTE. Results are averaged over five random seeds with $J=2$, $M=80$, and three gradient steps for DR-ME-Grad.}
\label{tab:runtime_core_drkte}
\begin{tabular}{rrrr}
\toprule
$n$ & DR-ME-Dict & DR-ME-Grad & DR-xKTE  \\
\midrule
300 & 0.0037 & 0.0007 & 0.0004 \\
600 & 0.0038 & 0.0011 & 0.0010 \\
1200 & 0.0054 & 0.0030 & 0.0043 \\
3000 & 0.0140 & 0.0200 & 0.0359 \\
5000 & 0.0296 & 0.0409 & 0.1236 \\
10000 & 0.1924 & 0.3475 & 0.9964 \\
\bottomrule
\end{tabular}
\end{table}

\begin{figure}[t]
\centering
\includegraphics[width=0.78\linewidth]{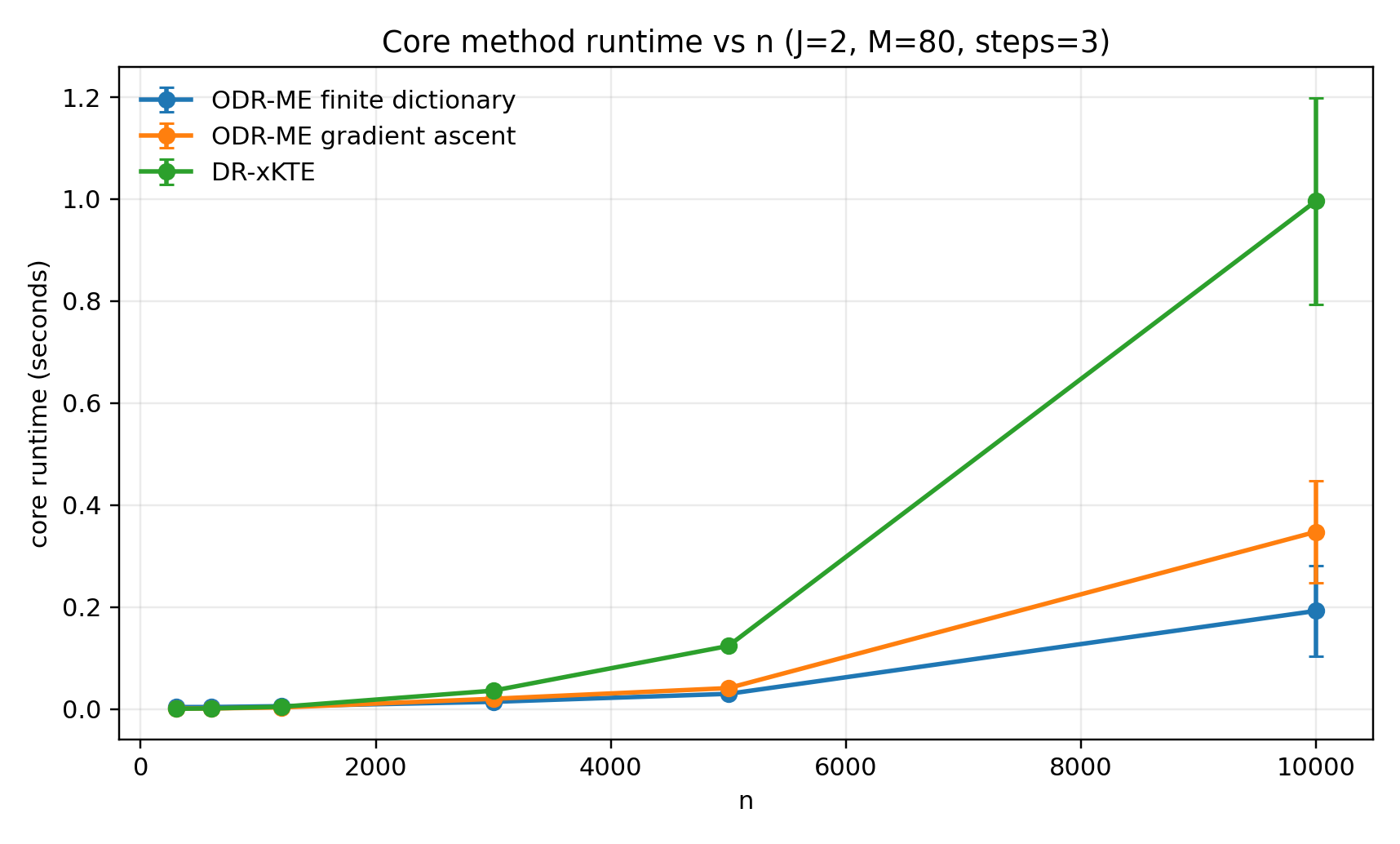}
\caption{Core method runtime versus sample size. This plot removes nuisance
fitting and focuses on the computational cost specific to the test. The finite
-dictionary implementation of DR-ME overtakes DR-xKTE at large $n$, while the
gradient implementation becomes substantially slower.}
\label{fig:runtime_core_drkte}
\end{figure}

The core-runtime comparison in Table~\ref{tab:runtime_core_drkte} and
Figure~\ref{fig:runtime_core_drkte} gives the cleanest view of method-specific
scaling. Nuisance fitting is shared in spirit across the methods and is not the
main object of the comparison. The core runtime shows that DR-ME-Dict is not
uniformly faster than DR-xKTE at every sample size, but it becomes faster in the
large-$n$ regime. This is the regime where using a small number of learned
locations matters computationally.

The continuous gradient implementation has a different profile. It avoids a
finite dictionary and therefore its cost does not depend on $M$, but each
gradient step requires evaluating the full criterion and its derivative over the
training split. With three gradient steps, it is already slower than DR-ME-Dict
at $n\geq 3000$. Increasing the number of gradient steps would further increase
this gap. Therefore, continuous gradient ascent is useful as an alternative for
low-dimensional Euclidean outcomes, but it should not be the default scalable
implementation in the current empirical suite.

\begin{table}[t]
\centering
\caption{Runtime breakdown in seconds at $n=10000$, with $J=2$, $M=80$, and three gradient steps for DR-ME-Grad. Entries are averages over five random seeds.}
\label{tab:runtime_breakdown_n10000}
\begin{tabular}{lrrrr}
\toprule
Method & Nuisance & Learning & Test & Total  \\
\midrule
DR-ME-Dict & 0.0156 & 0.1236 & 0.0688 & 0.2080 \\
DR-ME-Grad & 0.0156 & 0.3033 & 0.0441 & 0.3631 \\
DR-xKTE & 0.0017 & 0.0000 & 0.9964 & 0.9982 \\
\bottomrule
\end{tabular}
\end{table}

\begin{figure}[t]
\centering
\includegraphics[width=0.7\linewidth]{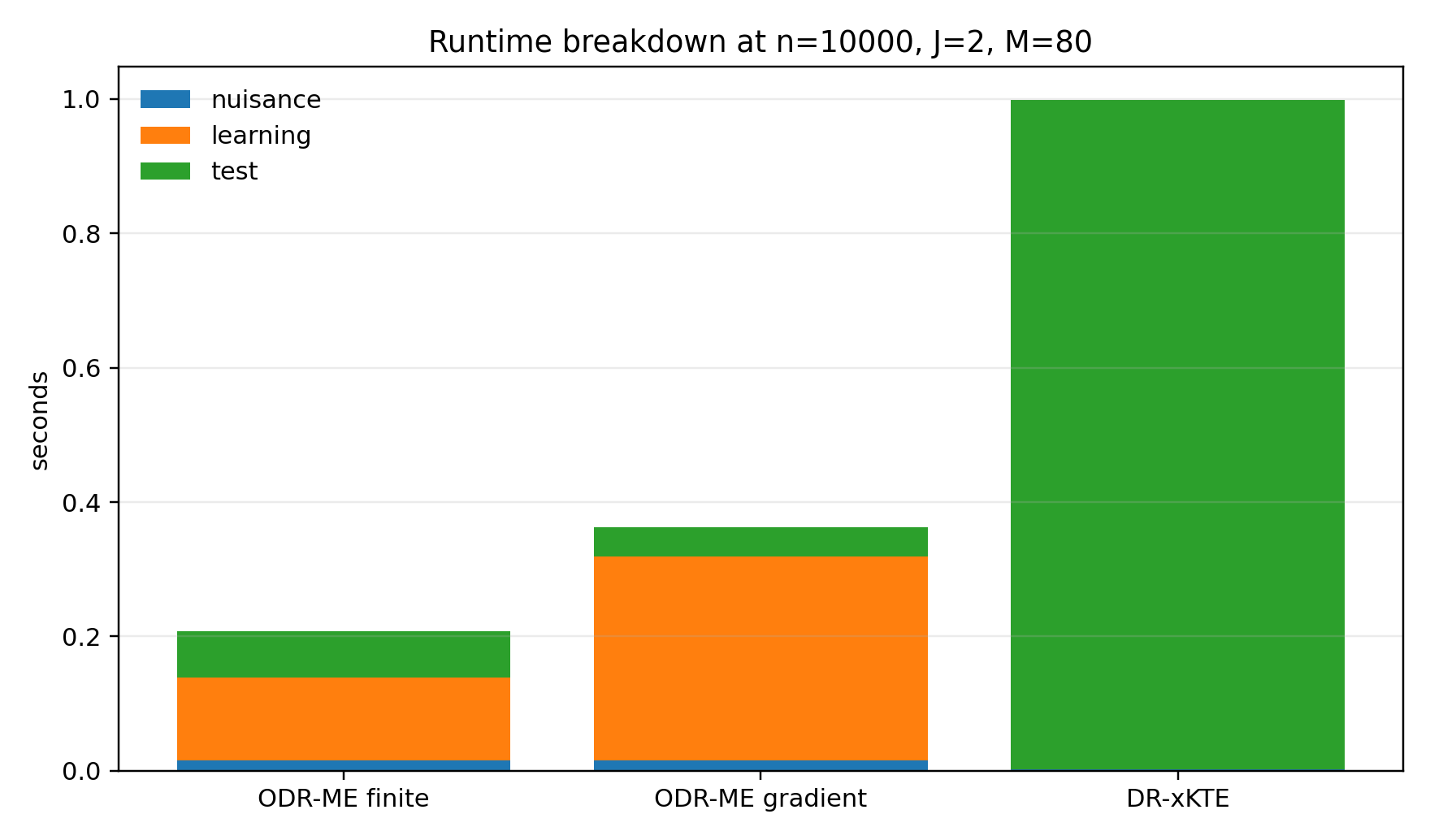}
\caption{Runtime breakdown at \(n=10000\), \(J=2\), and \(M=80\). The finite-dictionary DR-ME implementation keeps the final test low-dimensional and is faster than DR-xKTE, whose cost is dominated by the global kernel test. Gradient-based DR-ME spends most of its time in location learning.}
\label{fig:runtime_breakdown_n10000}
\end{figure}

Figure~\ref{fig:runtime_breakdown_n10000} shows the breakdown at the largest
sample size. The finite-dictionary implementation is the most attractive option:
it is faster than DR-xKTE at $n=10000$ and still returns learned interpretable
locations. DR-xKTE remains a strong global baseline, but its runtime is tied to a
global kernel computation over the evaluation sample. DR-ME-Grad is dominated by
its repeated gradient evaluations and is therefore less attractive for large
$n$.

Overall, this experiment supports a task-dependent implementation choice. For hypothesis testing, as in our main experiments, DR-ME is best implemented with a finite dictionary and small \(J\). This yields a practical low-dimensional test, scales favorably at larger sample sizes, and preserves interpretability through selected outcome locations. Continuous gradient learning is better viewed as an optional Euclidean-location variant: it is natural when the goal is to optimize interpretable locations in structured Euclidean outcome spaces, such as images, but it is not the primary scalable implementation for testing. DR-xKTE remains the main global-kernel baseline for power comparisons, while the runtime results show that finite-location testing can provide a computational advantage in the larger-\(n\) regime.

\begin{table}[t]
\centering
\caption{Total runtime in seconds as the number of locations $J$ varies, with $n=1200$, $M=80$, and three gradient steps for DR-ME-Grad. Results are averaged over five random seeds.}
\label{tab:runtime_total_vs_J}
\begin{tabular}{rrrr}
\toprule
$J$ & DR-ME-Dict & DR-ME-Grad & DR-xKTE  \\
\midrule
1 & 0.0133 & 0.0118 & 0.0043 \\
2 & 0.0158 & 0.0128 & 0.0058 \\
5 & 0.0187 & 0.0108 & 0.0044 \\
10 & 0.0320 & 0.0128 & 0.0046 \\
\bottomrule
\end{tabular}
\end{table}

\begin{table}[t]
\centering
\caption{Total runtime in seconds as the dictionary size $M$ varies, with $n=1200$, $J=2$, and three gradient steps for DR-ME-Grad. Results are averaged over five random seeds.}
\label{tab:runtime_total_vs_M}
\begin{tabular}{rrrr}
\toprule
$M$ & DR-ME-Dict & DR-ME-Grad & DR-xKTE  \\
\midrule
40 & 0.0114 & 0.0109 & 0.0052 \\
80 & 0.0139 & 0.0113 & 0.0039 \\
160 & 0.0184 & 0.0111 & 0.0043 \\
320 & 0.0260 & 0.0106 & 0.0045 \\
\bottomrule
\end{tabular}
\end{table}

\subsection{OCTMNIST mean-matched image-location experiment}
\label{app:octmnist_experiment}

We use OCTMNIST from MedMNIST v2 \citep{medmnistv2}, \citep{kermany2018identifying}. MedMNIST v2 is released under the Creative Commons Attribution 4.0 International license (CC BY 4.0). OCTMNIST is used to construct a semi-synthetic observational
experiment with full image outcomes. The main goal is to study whether a learned
finite-location witness can localize a causal distributional change in image
space, while avoiding the degenerate setup in which the same image serves both
as a covariate and as an outcome. We therefore generate low-dimensional
pre-treatment covariates synthetically, use them to induce confounding and
heterogeneity, and use OCTMNIST images only on the outcome side.

For each unit, we first sample synthetic covariates
\[
X_i\in\mathbb R^{d_X},
\qquad
X_i\sim N(0,I_{d_X}),
\]
with \(d_X=6\) in our implementation. Treatment is assigned observationally by a
clipped logistic propensity,
\[
A_i\mid X_i\sim \mathrm{Bernoulli}\{\pi_0(1\mid X_i)\},
\qquad
\pi_0(1\mid X_i)
=
\operatorname{clip}\!\left\{\sigma(\alpha^\top X_i),\,0.07,\,0.93\right\},
\]
so confounding is driven by the synthetic covariates rather than by the OCT
images themselves.

The outcome construction begins with baseline images
\[
B_i\in[0,1]^{28\times 28}
\]
sampled from the \emph{normal} OCTMNIST class. These images play the role of
untreated anatomy. To define a clinically interpretable treatment pattern, we
construct a fixed fluid-like template
\[
M\in[0,1]^{28\times 28}
\]
from OCTMNIST using the positive part of the smoothed mean difference between
the DME and normal classes, together with a mild central spatial prior. The DME
images are therefore used only to define this template; the causal sample itself
is built from normal baseline images.

Potential outcomes are generated by adding treatment-specific residual images to
the same baseline anatomy:
\[
Y_i(a)=B_i+R_i(a),
\qquad a\in\{0,1\}.
\]
The residuals are designed so that the two treatment arms have the same average
residual signal, but different residual distributions. Let
\[
q_i
=
q_{\min}+(q_{\max}-q_{\min})\,\sigma(\beta^\top X_i),
\]
where \(q_i\in(q_{\min},q_{\max})\) is a covariate-dependent probability of a
severe fluid event. We then sample
\[
S_i\mid X_i\sim \mathrm{Bernoulli}(q_i),
\]
and define mean-one multiplicative amplitude jitters
\[
J_{ia}=\exp(\tau U_{ia}-\tau^2/2),
\qquad
U_{ia}\sim N(0,1).
\]
The two residual laws are
\[
R_i(0)=q_i\theta J_{i0}M+\varepsilon_{i0},
\qquad
R_i(1)=S_i\theta J_{i1}M+\varepsilon_{i1},
\]
with independent Gaussian pixel noise
\[
\varepsilon_{i0},\varepsilon_{i1}\sim N(0,\sigma^2 I).
\]
Since \(\E[J_{ia}]=1\) and \(\E[S_i\mid X_i]=q_i\), the construction satisfies
\[
\E\{R_i(1)-R_i(0)\mid X_i\}=0.
\]
Thus the treatment effect is distributional rather than mean-based: the control
arm carries a mild diffuse fluid pattern, while the treated arm exhibits rare
but much more severe localized deviations.
\begin{figure}
    \centering
    \includegraphics[width=0.9\linewidth]{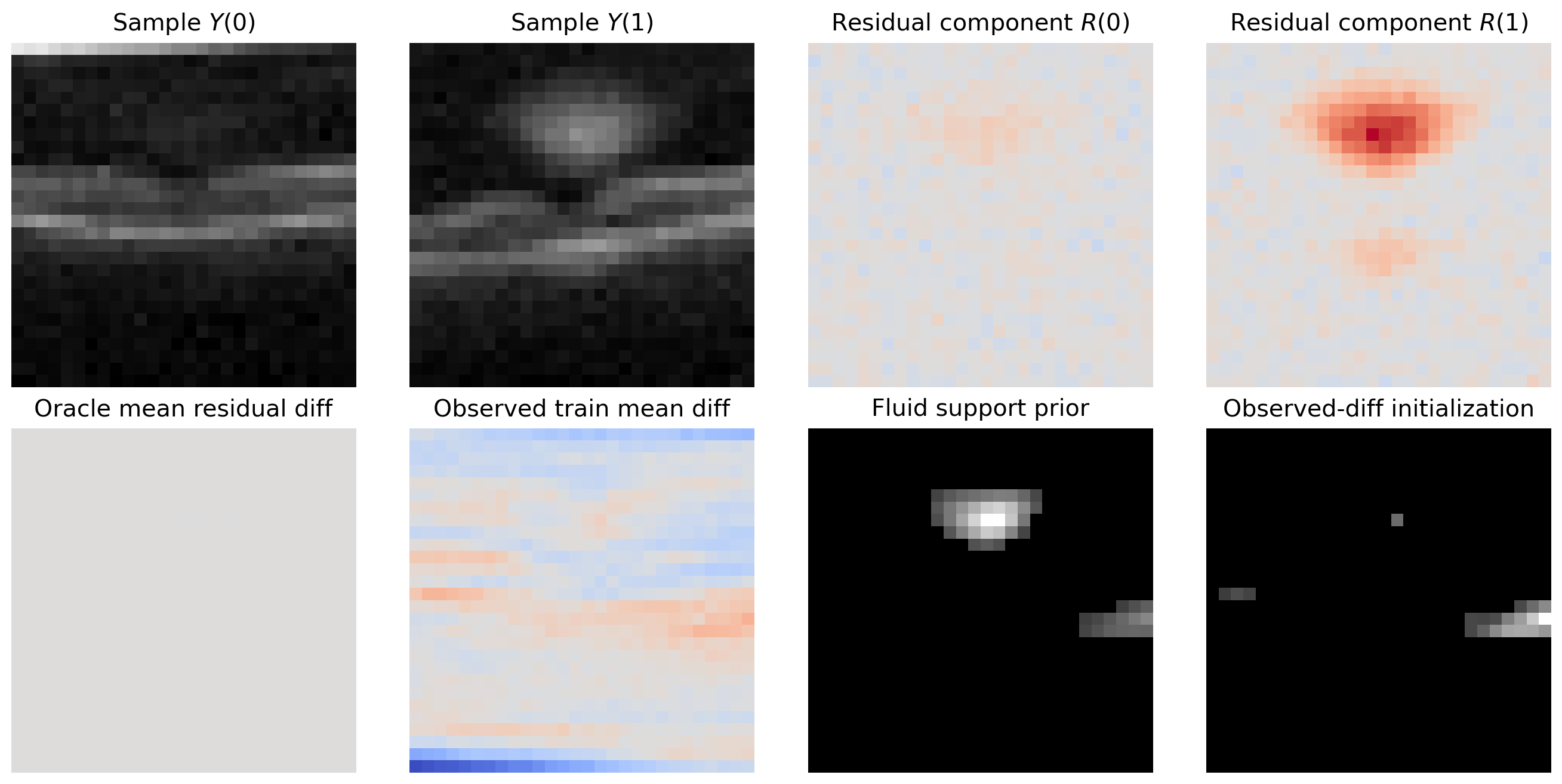}
    \caption{Additional diagnostics for the OCTMNIST mean-matched image-location experiment. Top row: sampled control and treated images \(Y(0)\), \(Y(1)\), together with their residual components \(R(0)\), \(R(1)\). Bottom row: oracle mean residual difference, observed treated-versus-control mean image difference on the training split, the fluid support prior, and the observed-difference initialization. The learned DR-ME image location shown in the main text is optimized from this initialization on the training split and evaluated only on the independent test split. After sample-level recentering, the oracle mean residual difference is zero up to numerical precision, so localization is not driven by a residual mean shift.}
    \label{fig:octmnist_full}
\end{figure}

In finite samples, the equality above holds exactly only in expectation. To make
the oracle mean residual difference visually and numerically null in the
realized sample, we additionally recenter the sampled residuals. Writing
\[
\bar\Delta_R
:=
\frac1n\sum_{i=1}^n \{R_i(1)-R_i(0)\},
\]
we replace
\[
R_i(0)\leftarrow R_i(0)+\frac12\bar\Delta_R,
\qquad
R_i(1)\leftarrow R_i(1)-\frac12\bar\Delta_R.
\]
After this recentering, the empirical oracle residual mean difference is zero up
to numerical precision, while the residual distributions remain different. This
makes the qualitative localization diagnostic sharper: any visually meaningful
learned location cannot be attributed to a first-moment residual contrast.

The observed outcome is
\[
Y_i=A_iY_i(1)+(1-A_i)Y_i(0).
\]
For inference we work directly with the full image outcomes, not with residuals.
We use a Gaussian image kernel
\[
k_Y(v,y)
=
\exp\!\left\{
-\frac{\|v-y\|_2^2}{2d_Y\ell_Y^2}
\right\},
\qquad
d_Y=28\cdot 28,
\]
where the factor \(d_Y\) makes \(\ell_Y\) interpretable as a per-pixel scale.
The propensity is estimated on \(I_\eta\) by logistic regression, and the
nuisance regressions
\[
m_a(x;v)=\E\{k_Y(v,Y)\mid A=a,X=x\}
\]
are estimated on \(I_\eta\) by ridge regression over a fixed nonlinear feature
basis of \(X\).

We learn a single image location \(v\) on the training split \(I_{\tr}\), so
\(J=1\). In this scalar case, the DR-ME location-learning criterion reduces to
the squared standardized doubly robust moment,
\[
v\mapsto
\hat{\mathcal P}_{\tau,\tr}(v)
=
\frac{\bar z_{\tr}^{\dr}(v)^2}
{\widehat{\Var}_{\tr}\{z^{\dr}(v)\}+\tau}.
\]
The final one-location DR-ME statistic is then computed only on the independent
test split \(I_{\te}\).

Because \(v\) lives in a continuous \(28\times 28\) image space, direct
first-order optimization is highly nonconvex and, without further structure,
often returns diffuse or anatomically implausible solutions. We therefore use a
small amount of optimization-side regularization to stabilize the search and to
favor interpretable image locations. First, the optimization is initialized at
the observed treated-versus-control mean image difference on the training split,
\[
v_{\mathrm{init}}
=
\bar Y_{\tr,A=1}-\bar Y_{\tr,A=0}.
\]
Second, each gradient step is smoothed by a Gaussian filter and lightly
anchored toward the initialization, which discourages unstable high-frequency
updates. These operations are used
only to stabilize the continuous optimization problem and to improve
interpretability; they do not use the test split and do not alter the final
split-sample DR-ME statistic once the location has been learned. In the plotted
figures, an additional display threshold is applied only for visualization, so
that faint background structure is suppressed and the dominant localized region
is easier to see.

This experiment should therefore be interpreted as a qualitative localization
diagnostic rather than as a benchmark for unconstrained image optimization. It
shows that, once the average residual contrast is removed, a learned finite
location can still localize the retinal region where the interventional image
laws differ. At the same time, the need for smoothness, support, and sparsity
aids highlights an interesting methodological direction: for structured outcomes
such as images, more principled optimization classes for \(v\) may be preferable
to unconstrained pixel-space search. Possible future directions include sparse
or wavelet-based parameterizations, total-variation or shape-constrained
regularization, learned dictionaries of candidate image locations, and other
structured search classes that preserve the split-sample DR-ME testing logic
while making location learning more stable and more interpretable.

\end{document}